\documentclass[11pt]{article}

\usepackage{url}
\usepackage{amssymb,enumerate}
\usepackage{amsmath,amsfonts}
\usepackage{amsthm}
\usepackage{latexsym}
\usepackage{mathrsfs}
\usepackage{color}
\usepackage{microtype}
\usepackage{geometry}
%\geometry{hmargin=1in,vmargin=1in}

%\numberwithin{equation}{section}
\geometry{hmargin=.7in,vmargin=.7in}
\usepackage{algorithm}
\usepackage{algorithmicx}
\usepackage{algpseudocode}
\usepackage{enumitem}
\usepackage{empheq}
\usepackage{xcolor}
\usepackage{hyperref}
\usepackage{multirow}
\usepackage{appendix}
\usepackage[capitalise,nameinlink]{cleveref}

%%%%% NEW MATH DEFINITIONS %%%%%

\usepackage{amsmath,amsfonts,bm}

% Mark sections of captions for referring to divisions of figures

% Highlight a newly defined term

% Figure reference, lower-case.

% Figure reference, capital. For start of sentence

% Section reference, lower-case.

% Section reference, capital.

% Reference to two sections.

% Reference to three sections.

% Reference to an equation, lower-case.
\def\eqref#1{equation~\ref{#1}}
% Reference to an equation, upper case

% A raw reference to an equation---avoid using if possible

% Reference to a chapter, lower-case.

% Reference to an equation, upper case.

% Reference to a range of chapters

% Reference to an algorithm, lower-case.

% Reference to an algorithm, upper case.

% Reference to a part, lower case

% Reference to a part, upper case

\def\1{\bm{1}}

\def\eps{{\epsilon}}

% Random variables

% rm is already a command, just don't name any random variables m

% Random vectors

% Elements of random vectors

% Random matrices

% Elements of random matrices

% Vectors

% Elements of vectors

% Matrix

% Tensor
\DeclareMathAlphabet{\mathsfit}{\encodingdefault}{\sfdefault}{m}{sl}
\SetMathAlphabet{\mathsfit}{bold}{\encodingdefault}{\sfdefault}{bx}{n}

% Graph

\def\cI{{\mathcal{I}}}
\def\cJ{{\mathcal{J}}}

% Sets

% Don't use a set called E, because this would be the same as our symbol
% for expectation.

\def\bR{{\mathbb{R}}}

\def\rmdist{{\mathrm{dist}}}
\def\dom{{\mathrm{dom}}}

% Entries of a matrix

% entries of a tensor
% Same font as tensor, without \bm wrapper

% mathcal
\def\cL{{\mathcal{L}}}
\def\cQ{{\mathcal{Q}}}
\def\cB{{\mathcal{B}}}
\def\cC{{\mathcal{C}}}
\def\cT{{\mathcal{T}}}

\def\cN{{\mathcal{N}}}
\def\cO{{\mathcal{O}}}
\def\cU{{\mathcal{U}}}

% The true underlying data generating distribution

% The empirical distribution defined by the training set

% The model distribution

% Stochastic autoencoder distributions

 % Laplace distribution

\newcommand{\R}{\mathbb{R}}

\newcommand{\st}{\mathrm{s.t.}}
% Wolfram Mathworld says $L^2$ is for function spaces and $\ell^2$ is for vectors
% But then they seem to use $L^2$ for vectors throughout the site, and so does
% wikipedia.

 % See usage in notation.tex. Chosen to match Daphne's book.

\DeclareMathOperator*{\argmin}{arg\,min}

\hypersetup{backref=false,       
    pagebackref=true,               
    hyperindex=true,                
    colorlinks=true,                
    breaklinks=true,                
    urlcolor= black,                
    linkcolor= blue,                
    bookmarks=true,                 
    bookmarksopen=false,
    filecolor=black,
    citecolor=blue,
    linkbordercolor=blue
}

% Vietnamese
%\usepackage{vntex}

% Theorem, Lemma, etc
\theoremstyle{plain}
\newtheorem{theorem}{Theorem}

\newtheorem{lemma}[theorem]{Lemma}

\newtheorem{assumption}{Assumption}
\setcounter{assumption}{-1}

\theoremstyle{definition}
\newtheorem{definition}[theorem]{Definition}

\newtheorem{remark}[theorem]{Remark}

\usepackage{graphicx, color}
\usepackage{algorithm, algpseudocode} % use algorithm and algorithmicx for typesetting algorithms
\usepackage{mathrsfs} % for \mathscr command

\usepackage{lipsum}

% Author info
\title{Federated Learning with Convex Global and Local Constraints}
\author{Chuan He\thanks{Department of Computer Science and Engineering, University of Minnesota, USA (email: \url{he000233@umn.edu}; \url{peng0347@umn.edu}; \url{jusun@umn.edu}).}\and Le Peng$^*$ \and Ju Sun$^*$}

\date{
\today
}

\begin{document}
\maketitle
	
\begin{abstract}
In practice, many machine learning (ML) problems come with constraints, and their applied domains involve distributed sensitive data that cannot be shared with others, e.g., in healthcare. Collaborative learning in such practical scenarios entails federated learning (FL) for ML problems with constraints, or \emph{FL with constraints} for short. Despite the extensive developments of FL techniques in recent years, these techniques only deal with unconstrained FL problems or FL problems with simple constraints that are amenable to easy  projections. There is little work dealing with FL problems with general constraints. To fill this gap, we take the first step toward building an algorithmic framework for solving FL problems with general constraints. In particular, we propose a new FL algorithm for constrained ML problems based on the proximal augmented Lagrangian (AL) method. Assuming convex objective and convex constraints plus other mild conditions, we establish the worst-case complexity of the proposed algorithm. Our numerical experiments show the effectiveness of our algorithm in performing Neyman-Pearson classification and fairness-aware learning with nonconvex constraints, in an FL setting. 
\end{abstract}

\noindent\textbf{Keywords:} Federated learning, constrained machine learning, augmented Lagrangian, complexity analysis, imbalanced classification, fairness-aware learning

\medskip

\noindent\textbf{Mathematics Subject Classification:} 65Y20 68W15  90C60

\section{Introduction}
Federated learning (FL) has emerged as a prominent distributed machine learning (ML) paradigm that respects data privacy by design and has found extensive applications in diverse domains~\cite{kairouz2021advances}. In FL, ML models are trained without centralized training data: local clients hold their local data and never directly share them with other clients or the central server. Given a global ML model to train, typical FL strategies consist of repeated local computation and central aggregation: in each round, each local client performs local computation of quantities of interest (e.g., local model parameters or derivatives) based on the local data, and then the central server collects and aggregates the local results and updates the parameters of the global ML model. Since the shared local results are usually highly nonlinear functions of local data, making reverse engineering of local data unlikely, data privacy is naturally protected. 

\subsection{Federated learning for constrained machine learning problems}
However, existing FL techniques are developed almost exclusively for unconstrained ML problems or, at best, for ML problems with simple constraints that are amenable to easy projections, despite the growing list of ML problems with general constraints---where constraints typically encode prior knowledge and desired properties, e.g., robustness evaluation~\cite{goodfellow2014explaining}, fairness-aware learning~\cite{ABDL18}, learning with imbalanced data~\cite{saito2015precision}, neural architecture search~\cite{zoph2018learning}, topology optimization~\cite{christensen2008introduction}, physics-informed machine learning~\cite{mcclenny2020self}. Here, we sketch two quick examples.

\paragraph{Neyman-Pearson classification, or optimizing the false-positive rate with a controlled false-negative rate} 
Conventional binary classification assumes equal importance in both classes, so predictive errors in both classes are counted equally. In numerous applications, such as medical diagnosis, misclassifying one class (i.e., the priority class) is much more costly than misclassifying the other. The Neyman-Pearson classification framework addresses this asymmetry in misclassification cost by explicitly controlling the error rate in the priority class while optimizing that in the other~\cite{TFZ16,scott2007performance,rigollet2011neyman,tong2018neyman}:
\begin{equation} \label{eq:np-clf}
\min_{\theta} \frac{1}{n_0} \sum_{i=1}^{n_0}\varphi(f_\theta,z_{i,0}) \quad \st\quad \frac{1}{n_1}\sum_{i=1}^{n_1}\varphi(f_\theta,z_{i,1})\le r,
\end{equation}
where $f_\theta$ is the trainable binary classifier parameterized by $\theta$, $\varphi$ is the loss function serving as a proxy to classification error, and $\{z_{i,0}\}_{i=1}^{n_0}$ and $\{z_{i,1}\}_{i=1}^{n_1}$ are the training data from class $0$ and $1$, respectively. The constraint imposes an upper bound on the error rate for class $1$. 

\paragraph{Fairness-aware learning} 
Typical ML models are known to have biases toward the majority subgroups of the input space~\cite{ABDL18,CHKV19,MMSLG21fair}. For example, a disease diagnostic model that is trained on a male-dominant dataset tends to predict much more accurately on the male subgroup than on the female subgroup. To counteract such potential model biases, a natural way is to enforce fairness constraints to ensure that the performance of the model on different subgroups is comparable~\cite{ABDL18,CHKV19,MMSLG21fair}. A possible formulation for two-subgroup problems is
\begin{equation}\label{eq:fair}
    \min_{\theta}\frac{1}{n^\prime}\sum_{i=1}^{n^\prime}\varphi(f_\theta,z_i)\quad \st\quad -\delta\le \frac{1}{|\mathcal{S}_0|}\sum_{i\in\mathcal{S}_0}\varphi(f_\theta,z_i)-\frac{1}{|\mathcal{S}_1|}\sum_{i\in\mathcal{S}_1}\varphi(f_\theta,z_i)\le \delta, 
\end{equation}
where $f_\theta$ is the ML model parameterized by $\theta$, $\{z_i\}$ is the training set, and $\varphi$ is the proxy loss function, similar to the setup in \cref{eq:np-clf}. With $\mathcal S_0$ and $\mathcal S_1$ denoting the two subgroups of interest, the constraint imposes that the performance disparity of $f_\theta$ on $\mathcal S_0$ and $\mathcal S_1$ should not be larger than $\delta>0$, which is usually set close to $0$.

Both examples are particularly relevant to biomedical problems, where class imbalance and subgroup imbalance are prevalent. Moreover, there are strict regulations on the distribution and centralization of biomedical data for research, e.g., the famous Health Insurance Portability and Accountability Act (HIPAA) protection of patient privacy~\cite{OCR}. Together, these underscore the importance of developing FL techniques for constrained ML problems, which is largely lacking: FL problems with only simple constraints that are amenable to easy projections have been considered in \cite{yuan2021federated,tran2021feddr}, and a small number of papers have tried to mitigate class imbalance~\cite{shen2021agnostic} and improve model fairness \cite{du2021fairness,chu2021fedfair,galvez2021enforcing} through constrained optimization in FL settings. However, these developments are specialized to their particular use cases and lack computational guarantees for the feasibility and optimality of their solutions. 

\emph{In this paper, we take the first step toward a general and rigorous FL framework for general constrained ML problems}. Consider the constrained ML problems in \cref{eq:np-clf,eq:fair} in an FL setting with $n$ local clients, where the $i^{th}$ client holds local data $Z_i$ and so the whole training set is the union $Z_1 \cup \dots \cup Z_n$. Since the objectives and constraints in \cref{eq:np-clf,eq:fair} are in the finite-sum form, both examples are special cases of the following finite-sum constrained optimization problem: 
\begin{equation}\label{finite-sum-cnstr}
    \min_{\theta} \left\{\sum_{i=1}^n f_i(\theta;Z_i) + h(\theta)\right\}\quad\st\quad  \underbrace{\sum_{i=1}^n\tilde{c}_i(\theta;Z_i)\le 0.}_{\text{local data coupled}}
\end{equation}
Note that inside the constraint, the local data are coupled which necessarily lead to communication between the local clients and the central server to allow the evaluation of the constraint function. To try to reduce such communication so that we can have more flexibility in algorithm design, we introduce decoupling variables $\{s_i\}$, leading to an equivalent formulation: 
\begin{equation}\label{simple-local}
    \min_{\theta,s_i} \left\{\sum_{i=1}^n f_i(\theta;Z_i) + h(\theta)\right\}\quad\st\quad \underbrace{\sum_{i=1}^ns_i \le 0,}_{\text{no local data}}\quad \underbrace{\tilde{c}_i(\theta;Z_i)\le s_i, \ \ 1\le i\le n}_{\text{local data decoupled}}, 
\end{equation}
which decouples the local data into $n$ local constraints. 

In this paper, we consider the following setup for FL with global and local constraints, a strict generalization of \cref{simple-local}: 
\begin{empheq}[box=\fbox]{align} \label{FL-gl-cone}
\min_{w} \left\{\sum_{i=1}^n f_i(w; Z_i) + h(w)\right\} \quad \st\quad \underbrace{c_0(w; Z_0)\le0}_{\text{global constraint}},\quad \underbrace{c_i(w; Z_i)\le 0,\ \ 1\le i\le n}_{\text{local constraints}}. 
\end{empheq}
In contrast to unconstrained FL or FL with simple constraints amenable to easy projections studied in
existing literature, our focus lies on general convex constraints, where projections may or may not be easy to
compute. Here, we assume $n$ local clients, each with a local objective $f_i(w; Z_i)$ and a set of local constraints $c_i(w; Z_i)\le 0$ (i.e., scalar constraints are vectorized) for $i = 1, \dots, n$. To stress the FL setting, we spell out the dependency of these local objectives and local constraints on the local data $Z_i$'s: \emph{these $Z_i$'s are only accessible to their respective local clients and should never be shared with other local clients or the central server}. Henceforth, we omit $Z_i$'s in local objectives and constraints when no confusion arises. To allow flexibility in modeling, we also include the global constraint $c_0(w; Z_0)\le 0$, with central data $Z_0$ that is only accessible by the central server. To facilitate the theoretical study of the algorithm that we develop, we further assume that all objective and scalar constraint functions are convex, but we also verify the applicability of our FL algorithm to classification problems with nonconvex fairness constraints in \cref{sec:fair-class}. To summarize, our standing assumptions on top of \cref{FL-gl-cone} include 
\begin{assumption}\label{asp:cvx-lclc}
We make the following assumption throughout this paper:
\vspace{-0.5em}
\begin{enumerate}[label=(\alph*),leftmargin=1.5em]
\item The objective functions $f_i:\bR^d\to\bR$, $1\le i\le n$, and the $m_i$ scalar constraint functions inside $c_i:\bR^d\to\bR^{m_i}$, $0\le i\le n$, are convex and continuously differentiable, and $h:\bR^d\to(-\infty,\infty]$ is a simple closed convex function.    
\item For each $1\le i\le n$, only the local objective $f_i$ and local constraint $c_i$ have access to the local data $Z_i$, which are never shared with other local clients and the central server. Only the central server has access to the global data $Z_0$,  
\end{enumerate}
\end{assumption}

\subsection{Our contributions}
This paper tackles \cref{FL-gl-cone} by adopting the sequential penalization approach, which involves solving a sequence of unconstrained subproblems that combine the objective function with penalization of constraint violations. In particular, we propose an FL algorithm based on the proximal augmented Lagrangian (AL) method developed in \cite{LZ18AL}. In each iteration, the unconstrained subproblem is solved by an inexact solver based on the alternating direction method of multipliers (ADMM) in a federated manner. We study the worst-case complexity of the proposed algorithm assuming {\it locally} Lipschitz continuous gradients. Our main contributions are highlighted below.

\begin{itemize}[leftmargin=1em]
\item We propose an FL algorithm (\cref{alg:g-admm-cvx}) for solving \cref{FL-gl-cone} based on the proximal AL method. To the best of our knowledge, the proposed algorithm is the first in solving general constrained ML problems in an FL setting. Assuming {\it locally} Lipschitz continuous gradients and other mild conditions, we establish its worst-case complexity to find an approximate optimal solution of \cref{FL-gl-cone}. The complexity results are entirely new in the literature. 

\item We propose an ADMM-based inexact solver in \cref{alg:admm-cvx-1} to solve the unconstrained subproblems arising in \cref{alg:g-admm-cvx}. We equip this inexact solver with a newly introduced verifiable termination criterion and establish its global linear convergence for solving the subproblems of \cref{alg:g-admm-cvx}; these subproblems are strongly convex and have locally Lipschitz continuous gradients.

\item We perform numerical experiments to compare our proposed FL algorithm (\cref{alg:g-admm-cvx}) with the centralized proximal AL method (\cref{alg:c-prox-AL}) on binary Neyman-Pearson classification and classification with nonconvex fairness constraints using real-world datasets (\cref{sec:exp}). Our numerical results demonstrate that our FL algorithm can achieve solution quality comparable to that of the centralized proximal AL method.
\end{itemize}

\subsection{Related work}
\paragraph{FL algorithms for unconstrained optimization} 
FL has emerged as a cornerstone for privacy-preserved learning since Google's seminal work~\cite{mcmahan2017communication}, and has found applications in numerous domains where the protection of data privacy precludes centralized learning, including healthcare~\cite{rieke2020future, peng2023evaluation, peng2023a}, finance~\cite{long2020federated}, Internet of things~\cite{mills2019communication}, and transportation~\cite{liu2020privacy}. FedAvg~\cite{mcmahan2017communication} is the first and also the most popular FL algorithm to date. After FedAvg, numerous FL algorithms have been proposed to improve performance and address practical issues, such as data heterogeneity~\cite{karimireddy2020scaffold,li2021fedbn,ZHDYL21fedpd}, system heterogeneity~\cite{li2020federated,wang2020tackling,GLF22fedadmm}, fairness~\cite{li2021ditto}, communication efficiency~\cite{sattler2019robust, konevcny2016federated,mishchenko2022proxskip}, convergence \cite{pathak2020fedsplit}, handling simple constraints \cite{yuan2021federated,tran2021feddr}, incentives~\cite{travadi2023welfare}, and hyperparameter tuning \cite{yao2024constrained}. Since our FL algorithm relies on applying an inexact ADMM (\cref{alg:admm-cvx-1}) to solve subproblems, it is also worth mentioning that ADMM-based algorithms have been proposed to handle FL problems \cite{ZL23ADMM,GLF22fedadmm,ZHDYL21fedpd} and optimization problems with many
constraints in a distributed manner \cite{giesen2019combining}. More FL algorithms and their applications can be found in the survey~\cite{li2021survey}. Despite the intensive research on FL, existing algorithms focus primarily on unconstrained ML problems, versus constrained ML problems considered in this paper.

\paragraph{Centralized algorithms for constrained optimization} 
Recent decades have seen fruitful algorithm developments for centralized constrained optimization in numerical optimization. In particular, there has been a rich literature on AL methods for solving convex constrained optimization problems \cite{aybat2013augmented,necoara2019complexity,patrascu2017adaptive,xu2021iteration,lan2016iteration,LZ18AL,LM22}. In addition, variants of AL methods have been developed to solve nonconvex constrained optimization problems \cite{hong2017prox,grapiglia2021complexity,birgin2020complexity,kong2023iteration,li2021rate,he2023BAL,he2023newton,lu2022single}. Besides AL methods and their variants, sequential quadratic programming methods \cite{boggs1995sequential,curtis2012sequential}, trust-region methods \cite{byrd1987trust,powell1991trust}, interior point methods \cite{wachter2006implementation}, and extra-point methods \cite{huang2022accelerated} have been proposed to solve centralized constrained optimization problems. 

\paragraph{Distributed algorithms for constrained optimization} 
Developing distributed algorithms for constrained optimization has started relatively recently. To handle simple local constraints in distributed optimization, \cite{nedic2010constrained,lin2016distributed,wang2017distributed} study distributed projected subgradient methods. For complicated conic local constraints, \cite{aybat2016primal,aybat2019distributed} develop distributed primal-dual algorithms. For distributed optimization with global and local constraints, \cite{zhu2011distributed,yuan2011distributed} develop primal-dual projected subgradient algorithms. For an overview of distributed constrained optimization, see \cite{yang2019survey}. Notice that FL is a special distributed optimization/learning framework that protects data privacy by prohibiting the transfer of raw data from one client to another or to a central server. These distributed algorithms for constrained optimization do not violate the FL restriction and hence can be considered as FL algorithms, but they can only handle problems with simple global or local constraints that are amenable to easy projection. Therefore, they cannot be applied directly to our setup~\cref{FL-gl-cone} with general global and local constraints.

\paragraph{FL algorithms for constrained ML applications} 
A small number of papers have developed FL algorithms for particular constrained ML applications, such as learning with class imbalance and fairness-aware ML. For example, \cite{shen2021agnostic,chu2021fedfair} propose FL algorithms to address class imbalance and subgroup imbalance, respectively, by optimizing the Lagrangian function. \cite{du2021fairness} applies quadratic penalty method to deal with the constraint in fairness-aware ML. In addition, \cite{galvez2021enforcing} proposes an FL algorithm to tackle fairness-aware ML based on optimizing the AL function. However, these developments are tailored to specific applications and lack rigorous computational guarantees regarding the feasibility and optimality of their solutions. In contrast, this paper focuses on developing algorithms with theoretical guarantees for FL with convex global and local constraints. To the best of our knowledge, this work provides the first general FL framework for constrained ML problems.

\section{Notation and preliminaries}
Throughout this paper, we let $\bR^d$ and $\bR^d_+$ denote the $d$-dimensional Euclidean space and its nonnegative orthant, respectively. We use $\langle\cdot,\cdot\rangle$ to denote the standard inner product, $\|\cdot\|$ to denote the Euclidean norm of a vector or the spectral norm of a matrix, and $\|\cdot\|_{\infty}$ to denote the $\ell_{\infty}$-norm of a vector. For any vector $v \in \bR^d$, $[v]_+ \in \bR^d$ is its nonnegative part (i.e., with all negative values set to zero). We adopt the standard big-O notation $\cO(\cdot)$ to present complexity results; $\widetilde{\cO}(\cdot)$ represents $\cO(\cdot)$ with logarithmic terms omitted. 

Given a closed convex function $h:\bR^d\to(-\infty,\infty]$, $\partial h$ and $\dom(h)$ denote the subdifferential and domain of $h$, respectively. The proximal operator associated with $h$ is denoted by $\mathrm{prox}_h$, that is, $\mathrm{prox}_h(u) = \argmin_{w}\{\|w-u\|^2/2+h(w)\}$ for all $u\in\bR^d$. Given a continuously differentiable mapping $\phi:\bR^d\to\bR^p$, we write the transpose of its Jacobian as $\nabla \phi(w)=\left[\nabla \phi_1(w)\ \cdots\ \nabla\phi_p(w)\right]\in\bR^{d\times p}$. We say that $\nabla \phi$ is $L$-Lipschitz continuous on a set $\Omega$ for some $L>0$ if $\|\nabla\phi(u)-\nabla\phi(v)\|\le L\|u-v\|$ for all $u,v\in\Omega$. In addition, we say that $\nabla\phi$ is locally Lipschitz continuous on $\Omega$ if for any $w\in\Omega$, there exist $L_{w}>0$ and an open set $\cU_w$ containing $w$ such that $\nabla \phi$ is $L_w$-Lipschitz continuous on $\cU_w$. 

Given a nonempty closed convex set $\cC\subseteq\bR^d$ and any point $u \in \bR^d$, $\rmdist(u,\cC)$ and $\rmdist_{\infty}(u,\cC)$ stand for the Euclidean distance and the Chebyshev distance from $u$ to $\cC$, respectively. That is, $\rmdist(u,\cC) = \min_{v\in\cC} \|u-v\|$ and $\rmdist_{\infty}(u,\cC)= \min_{v\in\cC} \|u-v\|_{\infty}$. The normal cone of $\cC$ at $u\in\cC$ is denoted by $\cN_{\cC}(u)$. The Minkowski sum of two sets $\cB$ and $\cC$ is defined as $\cB+\cC := \{b+c:b\in\cB,c\in\cC\}$.

For ease of presentation, we let $m := \sum_{i=0}^n m_i$ and adopt the following notations throughout this paper:
\begin{equation}\label{abbre}
f(w)=\sum_{i=1}^n f_i(w),\qquad 
c(w)=\left[\begin{smallmatrix} c_0(w) \\ \vdots \\ c_n(w)\end{smallmatrix}\right] \in \R^{m},
\qquad \mu=\left[\begin{smallmatrix}\mu_0 \\ \vdots \\ \mu_n \end{smallmatrix}\right] \in \R^{m}.
\end{equation}
\begin{assumption}\label{asp:basic}
Throughout this paper, we assume that the strong duality holds for \cref{FL-gl-cone} and its dual problem
\begin{equation}\label{FL-cons-dual}
\sup_{\mu\ge0}\inf_w\left\{f(w) + h(w) + \langle\mu,c(w)\rangle\right\}.
\end{equation}
That is, both problems have optimal solutions and, moreover, their optimal values coincide.    
\end{assumption}
Under \cref{asp:basic}, it is known that $(w,\mu)\in\dom(h)\times\bR^m_+$ is a pair of optimal solutions of \cref{FL-gl-cone} and \cref{FL-cons-dual} if and only if it satisfies (see, e.g., \cite{LZ18AL})
\begin{equation}\label{def:kkt}
0\in\begin{pmatrix}
\nabla f(w) +\partial h(w) + \nabla c(w)\mu\\
c(w) - \cN_{\bR^m_+}(\mu)
\end{pmatrix}.
\end{equation}
In general, it is hard to find an exact optimal solution of \cref{FL-gl-cone} and \cref{FL-cons-dual}. Thus, we are instead interested in seeking an approximate optimal solution of \cref{FL-gl-cone} and \cref{FL-cons-dual} defined as follows. 
\begin{definition}\label{def:eps-KKT}
Given any $\epsilon_1,\epsilon_2>0$, we say $(w,\mu)\in\dom(h)\times\bR^m_+$ is an $(\epsilon_1,\epsilon_2)$-optimal solution of \cref{FL-gl-cone} and \cref{FL-cons-dual} if $\rmdist_{\infty}\left(0, \nabla f(w)+\partial h(w)+\nabla c(w)\mu\right)\le\epsilon_1$ and $\rmdist_{\infty}(c(w),\cN_{\bR^m_+}(\mu))\le\epsilon_2$.\footnote{For unconstrained convex problems with differentiable objective $\min_{w} f(w)$, a natural measure of convergence is $\|\nabla f(w)\|$, i.e., the distance between $0$ and $\nabla f(w)$, as the optimality condition is $\nabla f(w) = 0$. If the objective is nondifferentiable, we need to use the notation of subdifferential, $\partial f(w)$, which is a set for each $w$ in general. In this case, the optimality condition reads $0 \in \partial f(w)$, and the measure of convergence is the distance between $0$ and the subdifferent set $\mathrm{dist}(0, \partial f(w)) := \min_{u\in\partial f(w)}\|u\|$.}
\end{definition}
Here, the two different tolerances $\epsilon_1,\epsilon_2$ are used for measuring stationarity and feasibility violation, respectively. This definition is consistent with the $\epsilon$-KKT solution considered in \cite{LZ18AL} except that \cref{def:eps-KKT} uses the Chebyshev distance rather than the Euclidean distance.

\section{A proximal AL based FL algorithm for solving \cref{FL-gl-cone}}\label{sec:d-pal}
In this section, we propose an FL algorithm for solving \cref{FL-gl-cone} based on the proximal AL method. Specifically, we describe this algorithm in \cref{sec:alg-cvx-obj}, and then analyze its complexity results in \cref{sec:complex-cvx-obj}. 
\begin{assumption}\label{asp:out-analysis}
Throughout this section, we assume that 
\vspace{-0.5em}
\begin{enumerate}[label=(\alph*),leftmargin=1.5em]
\item The proximal operator for $h$ can be exactly evaluated.
\item %The functions $f_i$, $1\le i\le n$, and mappings $c_i$, $0\le i\le n$, are continuously differentiable, and 
The gradients $\nabla f_i$, $1\le i\le n$, and the transposed Jocobians $\nabla c_i$, $0\le i\le n$, are locally Lipschitz continuous on $\bR^d$.
\end{enumerate}
\end{assumption}
\cref{asp:out-analysis}(b) clearly holds if all $\nabla f_i$'s and $\nabla c_i$'s are globally Lipschitz continuous on $\bR^d$, but this assumption holds for a broad class of problems without global Lipschitz continuity on $\nabla f_i$'s and $\nabla c_i$'s. For example, the quadratic penalty function of $c(w)\le 0$, namely $\|[c(w)]_+\|^2$, only has a locally Lipschitz continuous gradient even if $\nabla c$ is globally Lipschitz continuous on $\bR^d$ (see \cref{rmk:not-loc-lip}). In addition, the gradient of a convex high-degree polynomial, such as $\|w\|^4$ with $w\in\mathbb{R}^d$, is locally Lipschitz continuous but not globally Lipschitz continuous on $\mathbb{R}^d$.

\subsection{Algorithm description}\label{sec:alg-cvx-obj}
In this subsection, we describe a proximal AL-based FL algorithm (\cref{alg:g-admm-cvx}) for finding an $(\epsilon_1,\epsilon_2)$-optimal solution of \cref{FL-gl-cone} for prescribed $\epsilon_1,\epsilon_2\in(0,1)$. This algorithm follows a framework similar to a centralized proximal AL method described in \cref{apx:cpal}; see Section 11.K in \cite{rockafellar2009variational} or \cite{LZ18AL} for more details of proximal AL. At each iteration, it applies an inexact ADMM solver (\cref{alg:admm-cvx-1}) to find an approximate solution $w^{k+1}$ to the proximal AL subproblem associated with \cref{FL-gl-cone}:
\begin{equation}
\min_{w}\Bigg\{\ell_k(w):= \underbrace{\sum_{i=1}^n f_i(w) + h(w) + \frac{1}{2\beta}\sum_{i=0}^n\left(\|[\mu_i^k+\beta c_i(w)]_+\|^2-\|\mu_i^k\|^2\right)}_{\text{augmented Lagrangian function}} + \underbrace{\frac{1}{2\beta}\|w-w^k\|^2}_{\text{proximal term}}\Bigg\}.\label{prox-AL-pb}  
\end{equation}
Then, the multiplier estimates are updated according to the classical scheme:
\[
\mu_i^{k+1} = [\mu^k_i+\beta c_i(w^{k+1})]_+,\quad 0\le i\le n.
\]

\begin{algorithm}[h]
\caption{A proximal AL based FL algorithm for solving \cref{FL-gl-cone}}
\label{alg:g-admm-cvx}
\noindent\textbf{Input}: tolerances $\epsilon_1,\epsilon_2\in(0,1)$, $w^0\in\dom(h)$, $\mu_i^0\ge0$ for $0\le i\le n$, $\Bar{s}>0$, and $\beta>0$.
\begin{algorithmic}[1]
\For{$k=0,1,2,\ldots$}
\State Set $\tau_k=\Bar{s}/(k+1)^2$.\label{def:beg-outer-loop}
\State Call \cref{alg:admm-cvx-1} (see \cref{sec:admm} below) with $(\tau,\tilde{w}^0)=(\tau_k,w^k)$ to find an approximate solution $w^{k+1}$ 
\State to \cref{prox-AL-pb-compact} in a federated manner such that
\begin{equation}\label{cond:apx-stat}
\mathrm{dist}_{\infty}(0,\partial\ell_k(w^{k+1}))\le\tau_k.
\end{equation}
\State {\bf Server update:} The central server updates $\mu_0^{k+1}=[\mu_0^k + \beta c_0(w^{k+1})]_+$.
\State {\bf Communication (broadcast):} Each local client $i$, $1\le i\le n$, receives $w^{k+1}$ from the central server.
\State {\bf Client update (local):} Each local client $i$, $1\le i\le n$, updates $\mu_i^{k+1}=[\mu_i^k + \beta c_i(w^{k+1})]_+$.
\State {\bf Communication:} Each local client $i$, $1\le i\le n$, sends $\|\mu_i^{k+1}-\mu_i^k\|_\infty$ to the central server.
\State {\bf Termination (server side):} Output $(w^{k+1},\mu^{k+1})$ and terminate the algorithm if
\begin{equation}\label{stop-alg}
\|w^{k+1}-w^k\|_{\infty}+\beta\tau_k\le\beta\epsilon_1,\qquad \max_{0\le i\le n}\{\|\mu_i^{k+1}-\mu_i^k\|_{\infty}\}\le\beta\epsilon_2.
\end{equation} 
\label{def:end-outer-loop}
\EndFor
\end{algorithmic}
\end{algorithm}

Notice that the subproblem in \cref{prox-AL-pb} can be rewritten as 
\begin{equation}\label{prox-AL-pb-compact}
\min_w\left\{\ell_k(w) := \sum_{i=0}^n P_{i,k}(w) + h (w)\right\}, 
\end{equation}
where $P_{i,k}$, $0\le i\le n$, are defined as
\begin{align}
P_{0,k}(w):= &\ \frac{1}{2\beta}\left(\|[\mu_0^k+\beta c_0(w)]_+\|^2-\|\mu_0^k\|^2\right) + \frac{1}{2(n+1)\beta}\|w-w^k\|^2,\label{def:P0k}\\
P_{i,k}(w):= &\ f_i(w) + \frac{1}{2\beta}\left(\|[\mu_i^k+\beta c_i(w)]_+\|^2-\|\mu_i^k\|^2\right) + \frac{1}{2(n+1)\beta}\|w-w^k\|^2,\quad \forall 1\le i\le n.\label{def:Pik}
\end{align}
When \cref{alg:admm-cvx-1} (see \cref{sec:admm}) is applied to solve \cref{prox-AL-pb-compact}, the local merit function $P_{i,k}$, constructed from the local objective $f_i$ and local constraint $c_i$, is handled by the respective local client $i$, while the merit function $P_{0,k}$ is handled by the central server. We observe that \cref{alg:g-admm-cvx} with the subproblem in \cref{prox-AL-pb-compact} solved by \cref{alg:admm-cvx-1} meets the basic FL requirement: since local objective $f_i$'s and local constraint $c_i$'s are handled by their respective local clients and the central server only performs aggregation and handles the global constraint $c_0$, no raw data are shared between the local clients and the central server, i.e., \cref{asp:cvx-lclc}(b) is obeyed.

\begin{remark}
    We now make the following remarks on \cref{alg:g-admm-cvx}.
    \begin{enumerate}[label=(\alph*),leftmargin=*]
    \vspace{-0.5em}
    \item For hyperparameters of \cref{alg:g-admm-cvx}, 
    \begin{itemize}[leftmargin=*]
    \item $\epsilon_1,\epsilon_2\in(0,1)$ only depend on the numerical accuracy that the user aims to achieve;
    \item the initial iterates $w^0$ and $\mu^0_i$, $1\leq i \leq n$, are usually randomly generated or set as a constant vector;
    \item $\Bar{s}>0$ controls the tolerance sequence $\{\tau_k\}_{k\ge0}$ for the subproblems in \cref{alg:g-admm-cvx}. These finite, non-zero tolerances allow us to solve the subproblems \textit{inexactly} but can still guarantee convergence, hence saving computational costs. In particular, setting $\{\tau_k\}_{k\ge 0}$ to diminish rapidly towards zero on the order of $\mathcal{O}(1/k^2)$ can guarantee convergence of \cref{alg:g-admm-cvx}. In practice, $\Bar{s}$ only needs to be set as $\mathcal{O}(1)$. 
\end{itemize}
    \item Compared to the centralized proximal AL developed in \cite{LZ18AL}, we have made the following major changes to arrive at \cref{alg:g-admm-cvx}.
    \begin{itemize}[leftmargin=*] 
       \item add communication steps to allow dual updates in an FL manner; 
       \item to solve the subproblem, we cannot directly apply the accelerated gradient method (AGM) as in \cite{LZ18AL}. It is possible to develop an FL version of AGM by {\it eagerly} aggregating gradients from local clients, but that induces heavy communication between clients and the central server. To address this, we first reformulate the subproblem as a finite-sum problem and then propose an inexact ADMM solver to solve it. The inexact ADMM solver allows multiple steps of local updates before aggregation of model weights at the central server, hence it is communication friendly. We also propose a new stopping criterion for the inexact ADMM (\cref{alg:admm-cvx-1}). Detailed explanations can be found in \cref{rmk:alg2}.
    \end{itemize}
\end{enumerate}
\end{remark}

For ease of later reference, we refer to the update from $w^k$ to $w^{k+1}$ as one outer iteration of \cref{alg:g-admm-cvx}, and call one iteration of \cref{alg:admm-cvx-1} for solving \cref{prox-AL-pb} one inner iteration of \cref{alg:g-admm-cvx}. In the rest of this section, we study the following measures of complexity for \cref{alg:g-admm-cvx}. 
\begin{itemize}[leftmargin=*]
\item {\it Outer iteration complexity}, which measures the number of outer iterations of \cref{alg:g-admm-cvx} ({\it one outer iteration} refers to one execution from \cref{def:beg-outer-loop} to \cref{def:end-outer-loop} in \cref{alg:g-admm-cvx});
\item {\it Total inner iteration complexity}, which measures the total number of iterations of \cref{alg:admm-cvx-1} that are performed in \cref{alg:g-admm-cvx} ({\it one inner iteration} refers to one execution from \cref{def:for-loop-beg} to \cref{def:for-loop-end} in \cref{alg:admm-cvx-1}).
\end{itemize}
The following theorem concerns the output of \cref{alg:g-admm-cvx}, whose proof is deferred to \cref{apx:out-pal}.

\begin{theorem}[{{\bf output of \cref{alg:g-admm-cvx}}}]\label{thm:output-pal}
If \cref{alg:g-admm-cvx} successfully terminates, its output $(w^{k+1},\mu^{k+1})$ is an $(\epsilon_1,\epsilon_2)$-optimal solution of \cref{FL-gl-cone}.
\end{theorem}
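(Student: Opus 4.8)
The plan is to verify directly that the output $(w^{k+1},\mu^{k+1})$ satisfies the two inequalities defining an $(\epsilon_1,\epsilon_2)$-optimal solution in \cref{def:eps-KKT}, namely the stationarity bound $\mathrm{dist}_{\infty}(0,\nabla f(w^{k+1})+\partial h(w^{k+1})+\nabla c(w^{k+1})\mu^{k+1})\le\epsilon_1$ and the complementarity bound $\mathrm{dist}_{\infty}(c(w^{k+1}),\cN_{\bR^m_+}(\mu^{k+1}))\le\epsilon_2$. The two facts I would feed in are the approximate-stationarity guarantee \cref{cond:apx-stat} returned by the inner solver and the explicit multiplier update $\mu_i^{k+1}=[\mu_i^k+\beta c_i(w^{k+1})]_+$; the termination test \cref{stop-alg} is precisely what converts the subproblem residual $\tau_k$, the proximal displacement $\|w^{k+1}-w^k\|_{\infty}$, and the multiplier increments into the two target tolerances.

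For the stationarity bound I would first compute $\partial\ell_k(w^{k+1})$. The only nonsmooth piece is $h$; the squared positive-part terms are continuously differentiable, and by the chain rule $\nabla_w\big(\tfrac{1}{2\beta}\|[\mu_i^k+\beta c_i(w)]_+\|^2\big)=\nabla c_i(w)[\mu_i^k+\beta c_i(w)]_+$. Evaluating at $w^{k+1}$ and invoking the multiplier update, this equals exactly $\nabla c_i(w^{k+1})\mu_i^{k+1}$. Summing over $i$ and adding the smooth objective and proximal gradients yields
\[
\partial\ell_k(w^{k+1})=\nabla f(w^{k+1})+\partial h(w^{k+1})+\nabla c(w^{k+1})\mu^{k+1}+\tfrac{1}{\beta}(w^{k+1}-w^k).
\]
Taking the element guaranteed by \cref{cond:apx-stat} and subtracting the proximal term $\tfrac{1}{\beta}(w^{k+1}-w^k)$ produces an element of $\nabla f(w^{k+1})+\partial h(w^{k+1})+\nabla c(w^{k+1})\mu^{k+1}$ whose $\ell_{\infty}$ norm is at most $\tau_k+\tfrac{1}{\beta}\|w^{k+1}-w^k\|_{\infty}$; the first inequality in \cref{stop-alg}, divided by $\beta$, bounds this by $\epsilon_1$.

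For the complementarity bound I would read the multiplier update as a Euclidean projection, $\mu_i^{k+1}=\Pi_{\bR^{m_i}_+}(\mu_i^k+\beta c_i(w^{k+1}))$. The standard projection characterization gives $\mu_i^k+\beta c_i(w^{k+1})-\mu_i^{k+1}\in\cN_{\bR^{m_i}_+}(\mu_i^{k+1})$, and since the normal cone is a cone, dividing by $\beta$ shows $c_i(w^{k+1})-\tfrac{1}{\beta}(\mu_i^{k+1}-\mu_i^k)\in\cN_{\bR^{m_i}_+}(\mu_i^{k+1})$. Because $\bR^m_+$ is the product $\prod_{i=0}^n\bR^{m_i}_+$, its normal cone factorizes blockwise, so stacking these relations yields $c(w^{k+1})-\tfrac{1}{\beta}(\mu^{k+1}-\mu^k)\in\cN_{\bR^m_+}(\mu^{k+1})$. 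Hence $\mathrm{dist}_{\infty}(c(w^{k+1}),\cN_{\bR^m_+}(\mu^{k+1}))\le\tfrac{1}{\beta}\max_{0\le i\le n}\|\mu_i^{k+1}-\mu_i^k\|_{\infty}$, which the second inequality in \cref{stop-alg} bounds by $\epsilon_2$.

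I do not expect a genuine obstacle here: the theorem is essentially a consistency check that the stopping criterion is aligned with \cref{def:eps-KKT}. The one step requiring care is the subdifferential computation---correctly differentiating the squared positive part and recognizing that the result is exactly $\nabla c_i(w^{k+1})\mu_i^{k+1}$ under the multiplier update, so that the inner solver's stationarity residual for $\ell_k$ coincides (up to the proximal term) with the KKT stationarity residual of \cref{FL-gl-cone}. The blockwise factorization of $\cN_{\bR^m_+}$ and the cone-scaling argument are routine but worth stating explicitly.
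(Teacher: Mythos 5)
Your proposal is correct and follows essentially the same route as the paper's proof: the chain-rule identification $\partial\ell_k(w^{k+1})-\tfrac{1}{\beta}(w^{k+1}-w^k)=\nabla f(w^{k+1})+\partial h(w^{k+1})+\nabla c(w^{k+1})\mu^{k+1}$ via the multiplier update, and the projection/normal-cone characterization giving $\tfrac{1}{\beta}(\mu^{k+1}-\mu^k)$ as the complementarity residual, are exactly the two steps in the paper. The only cosmetic difference is that the paper phrases both bounds through the Lagrangian subdifferential $\partial l(w,\mu)$ rather than directly through the two conditions of \cref{def:eps-KKT}.
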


\subsection{Complexity analysis}\label{sec:complex-cvx-obj}
In this subsection, we establish the outer and total inner iteration complexity for \cref{alg:g-admm-cvx}. To proceed, we let $(w^*,\mu^*)$ be any pair of optimal solutions of \cref{FL-gl-cone} and \cref{FL-cons-dual}. First, we establish a lemma to show that all iterates generated by \cref{alg:g-admm-cvx} are bounded. Its proof can be found in \cref{apx:bd-pal}.
\begin{lemma}[{{\bf bounded iterates of \cref{alg:g-admm-cvx}}}]\label{lem:outloop-bd}
Suppose that Assumptions~\ref{asp:cvx-lclc} to \ref{asp:out-analysis} hold. Let $\{w^k\}_{k\ge0}$ be all the iterates generated by \cref{alg:g-admm-cvx}. Then we have $w^k\in\cQ_1$ for all $k\ge0$, where 
\begin{equation}\label{r0-theta}
\cQ_1 := \{w\in\bR^d:\|w-w^*\|\le r_0+2\sqrt{n}\bar{s}\beta\}\; \quad \text{with} \; r_0 := \|(w^0,\mu^0)-(w^*,\mu^*)\|,
\end{equation}
and $w^0$, $\mu^0$, $\bar{s}$, and $\beta$ are inputs of \cref{alg:g-admm-cvx}.  
\end{lemma}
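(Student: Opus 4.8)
The plan is to recognize the outer loop of \cref{alg:g-admm-cvx} as an \emph{inexact proximal point iteration} applied to the maximal monotone KKT operator behind \cref{def:kkt}, and then to run the standard Fej\'er-monotonicity estimate for such iterations. Concretely, I would write $z^k := (w^k,\mu^k)$ and introduce the saddle-point operator $T(w,\mu) = \big(\nabla f(w)+\partial h(w)+\nabla c(w)\mu,\; -c(w)+\cN_{\bR^m_+}(\mu)\big)$. Under the convexity in \cref{asp:cvx-lclc}, the map $f(w)+h(w)+\langle\mu,c(w)\rangle$ is convex in $w$ and affine in $\mu$ on $\bR^m_+$, so $T$ is monotone; and by the strong duality in \cref{asp:basic} the reference pair $z^*=(w^*,\mu^*)$ satisfies $0\in T(z^*)$, i.e.\ it is exactly a zero of $T$.

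The first technical step is to package one outer iteration into a single inclusion of the form $\big(g^{k+1},0\big)\in T(z^{k+1})+\tfrac1\beta\,(z^{k+1}-z^k)$. The termination rule \cref{cond:apx-stat} supplies a subgradient $g^{k+1}\in\partial\ell_k(w^{k+1})$ with $\|g^{k+1}\|_\infty\le\tau_k$; differentiating $\ell_k$ through \cref{def:P0k,def:Pik} shows the $w$-block residual equals $g^{k+1}$, while the exact multiplier update $\mu_i^{k+1}=[\mu_i^k+\beta c_i(w^{k+1})]_+$, read through its projection characterization, gives $0\in -c(w^{k+1})+\tfrac1\beta(\mu^{k+1}-\mu^k)+\cN_{\bR^m_+}(\mu^{k+1})$. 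The key point is that the inexactness lands \emph{entirely in the primal block}, leaving the dual inclusion exact.

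I would then pair this inclusion with $z^{k+1}-z^*$ and use monotonicity of $T$ together with $0\in T(z^*)$ to get $\tfrac1\beta\langle z^{k+1}-z^k,\,z^{k+1}-z^*\rangle \le \langle g^{k+1},\,w^{k+1}-w^*\rangle$. The three-point identity converts this into $\|z^{k+1}-z^*\|^2 \le \|z^k-z^*\|^2 - \|z^{k+1}-z^k\|^2 + 2\beta\langle g^{k+1},\,w^{k+1}-w^*\rangle$; dropping the nonpositive middle term (harmless, since only boundedness is needed) and applying Cauchy--Schwarz with $\|w^{k+1}-w^*\|\le\|z^{k+1}-z^*\|$ yields the scalar quadratic inequality $a_{k+1}^2\le a_k^2+2\beta\|g^{k+1}\|\,a_{k+1}$ for $a_k:=\|z^k-z^*\|$, whose solution gives the clean one-step bound $a_{k+1}\le a_k+2\beta\|g^{k+1}\|$.

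Finally I would telescope to $a_k\le r_0+2\beta\sum_{j\ge1}\|g^j\|$, where $r_0=\|(w^0,\mu^0)-(w^*,\mu^*)\|$, and bound $\|g^j\|$ by the tolerance $\tau_{j-1}=\bar s/j^2$ (up to the conversion between the $\ell_\infty$ criterion and the Euclidean norm in the Fej\'er estimate); since $\sum_{j\ge1}1/j^2<\infty$, the accumulated error is a fixed multiple of $\bar s\beta$, and carrying the block/dimension factor through this sum produces the advertised constant $2\sqrt n\bar s\beta$. The claim $w^k\in\cQ_1$ is then immediate from $\|w^k-w^*\|\le\|z^k-z^*\|=a_k$. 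I expect the main obstacle to be exactly this last bookkeeping: one must keep the residual confined to the primal block so the monotonicity pairing stays clean, and must combine the summation of $\{\tau_k\}$ with the norm conversion so that it reproduces \emph{precisely} the constant $2\sqrt n\bar s\beta$ rather than a looser one. By contrast, the local-Lipschitz hypothesis in \cref{asp:out-analysis}(b) plays essentially no role here, since boundedness follows from monotonicity and summable tolerances alone.
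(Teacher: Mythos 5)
Your overall strategy is the right one and is, at heart, the same as the paper's: view the outer loop as an inexact proximal point iteration on the maximal monotone KKT operator $\cT$, verify that the inexactness sits entirely in the primal block (your inclusion $(g^{k+1},0)\in \cT(z^{k+1})+\tfrac1\beta(z^{k+1}-z^k)$ is exactly what the paper's identities for $\partial_w l$ and $\partial_\mu l$ encode), and then telescope a Fej\'er-type estimate using the summability of $\tau_k=\bar s/(k+1)^2$. You are also right that \cref{asp:out-analysis}(b) is not used here. Where you diverge is in how the per-step error is extracted. The paper first compares the computed iterate to the \emph{exact resolvent point}: since $\cT+\tfrac1\beta(\cdot-z^k)$ is $\tfrac1\beta$-strongly monotone, the inclusion above gives $\|z^{k+1}-\cJ(z^k)\|\le\beta\|g^{k+1}\|\le\beta\sqrt n\,\tau_k$ (this is \cref{lem:tech1}, citing Lemma~5 of the proximal AL reference), and then nonexpansiveness of $\cJ$ yields the one-step bound $a_{k+1}\le a_k+\beta\sqrt n\,\tau_k$. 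Your route instead pairs the inclusion directly with $z^{k+1}-z^*$ and solves the resulting scalar quadratic; that is correct, but it inherently produces $a_{k+1}\le a_k+2\beta\|g^{k+1}\|$, i.e., a factor of $2$ worse per step. After summing ($\sum_{j\ge0}\tau_j\le 2\bar s$) you land on $\|w^k-w^*\|\le r_0+4\sqrt n\,\bar s\beta$, not the radius $r_0+2\sqrt n\,\bar s\beta$ that defines $\cQ_1$ in \cref{r0-theta}. So as written your argument proves boundedness, but in a strictly larger ball than the lemma asserts; you anticipated this bookkeeping obstacle, and the resolvent comparison is precisely the missing step that removes the factor of $2$. (The discrepancy is harmless for the downstream use of $\cQ_1$, which only needs \emph{some} compact set with an explicit radius, but it does mean the statement with the paper's exact constant is not reached by your estimate.)

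One further small point: the conversion from the $\ell_\infty$ stationarity criterion to the Euclidean norm contributes $\sqrt d$ (the dimension of the vector in $\partial\ell_k(w^{k+1})$), which the paper writes as $\sqrt n$; your phrasing "block/dimension factor" glosses over which quantity appears, and to reproduce the paper's constant verbatim you must adopt its $\sqrt n$ convention from \cref{lem:tech1}.
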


This boundedness result allows us to utilize the Lipschitz continuity on a bounded set to establish the convergence rate for \cref{alg:g-admm-cvx}. The following theorem states the worst-case complexity results of \cref{alg:g-admm-cvx}, whose proof is relegated to \cref{apx:cplx-pal}.
\begin{theorem}[{{\bf complexity results of \cref{alg:g-admm-cvx}}}]\label{thm:outer-complex}
Suppose that Assumptions~\ref{asp:cvx-lclc} to \ref{asp:out-analysis} hold. Then,
\begin{enumerate}
\item[{\rm (a)}] the number of outer iteration of \cref{alg:g-admm-cvx} is at most $\cO(\max\{\epsilon_1^{-2},\epsilon_2^{-2}\})$; and
\item[{\rm (b)}] the total number of inner iterations of \cref{alg:g-admm-cvx} is at most $\widetilde{\cO}(\max\{\epsilon_1^{-2},\epsilon_2^{-2}\})$.
\end{enumerate}
\end{theorem}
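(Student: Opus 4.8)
The plan is to view \cref{alg:g-admm-cvx} as an \emph{inexact proximal point method} applied to the maximally monotone KKT operator
\[
T(w,\mu) = \begin{pmatrix}\nabla f(w) + \partial h(w) + \nabla c(w)\mu \\ -c(w) + \cN_{\bR^m_+}(\mu)\end{pmatrix},
\]
whose zeros are exactly the KKT pairs characterized in \cref{def:kkt} (a zero $z^*=(w^*,\mu^*)$ exists by \cref{asp:basic}). Writing $z^k=(w^k,\mu^k)$, I would first show that the inexact subproblem solve \cref{cond:apx-stat} together with the multiplier update $\mu^{k+1}=[\mu^k+\beta c(w^{k+1})]_+$ yields $\tfrac{1}{\beta}(z^k-z^{k+1})+e^{k+1}\in T(z^{k+1})$ for an error $e^{k+1}=(\xi^{k+1},0)$ with $\|\xi^{k+1}\|_\infty\le\tau_k$: the $w$-inclusion comes from expanding $\partial\ell_k$ in \cref{prox-AL-pb-compact} and substituting the multiplier update, while the $\mu$-inclusion is the projection characterization of $[\cdot]_+$ (namely $\mu^k+\beta c(w^{k+1})-\mu^{k+1}\in\cN_{\bR^m_+}(\mu^{k+1})$). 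Part (a) then follows from a quasi-Fej\'er analysis of this inexact iteration, and part (b) from combining the resulting outer count with the linear convergence of the inner solver \cref{alg:admm-cvx-1}.

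For part (a): testing the inclusion above against the zero $z^*$ using monotonicity of $T$ and the three-point identity $\langle z^k-z^{k+1},z^{k+1}-z^*\rangle=\tfrac12(\|z^k-z^*\|^2-\|z^{k+1}-z^*\|^2-\|z^{k+1}-z^k\|^2)$ gives
\[
\|z^{k+1}-z^*\|^2 \le \|z^k-z^*\|^2 - \|z^{k+1}-z^k\|^2 + 2\beta\langle e^{k+1},\,z^{k+1}-z^*\rangle.
\]
By \cref{lem:outloop-bd} (together with boundedness of the multiplier iterates) the factor $\|z^{k+1}-z^*\|$ is bounded by a constant $D$, so the cross term is at most $2\beta\sqrt{d}\,D\,\tau_k$. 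Since $\tau_k=\bar s/(k+1)^2$ is summable, telescoping over $k$ yields $\sum_{k\ge0}\|z^{k+1}-z^k\|^2\le C$ for a constant $C$ depending only on $r_0,\beta,\bar s,D$. Consequently, setting $\bar\epsilon:=\min\{\beta\epsilon_1/2,\beta\epsilon_2\}$, at most $C/\bar\epsilon^2$ indices $k$ satisfy $\|z^{k+1}-z^k\|^2>\bar\epsilon^2$, and once $k\ge\sqrt{2\bar s/\epsilon_1}$ we also have $\beta\tau_k\le\beta\epsilon_1/2$. Because $\|\cdot\|_\infty\le\|\cdot\|$, any index past both thresholds with $\|z^{k+1}-z^k\|\le\bar\epsilon$ satisfies the stopping test \cref{stop-alg}; hence the algorithm terminates within $\cO(\epsilon_1^{-1/2})+C/\bar\epsilon^2=\cO(\max\{\epsilon_1^{-2},\epsilon_2^{-2}\})$ outer iterations, proving (a).

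For part (b): at outer iteration $k$, \cref{alg:admm-cvx-1} is warm-started at $w^k$ and run until \cref{cond:apx-stat} holds at tolerance $\tau_k$. The subproblem \cref{prox-AL-pb-compact} is $\beta^{-1}$-strongly convex (from the proximal term), and by \cref{lem:outloop-bd} all relevant iterates lie in the bounded set $\cQ_1$, on which the locally Lipschitz gradients $\nabla f_i,\nabla c_i$ are uniformly Lipschitz; thus the global linear convergence rate of \cref{alg:admm-cvx-1} holds with constants uniform in $k$. Linear convergence means the number of inner iterations to reach tolerance $\tau_k$ is $\cO(\log(\Delta_k/\tau_k))$, where $\Delta_k$ is the initial optimality gap, and boundedness of $\{w^k\}$ in $\cQ_1$ bounds $\Delta_k$ uniformly, so this count is $\cO(\log(1/\tau_k))=\cO(\log(k+1))$. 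Summing over the $K=\cO(\max\{\epsilon_1^{-2},\epsilon_2^{-2}\})$ outer iterations from part (a) gives $\sum_{k<K}\cO(\log(k+1))=\cO(K\log K)=\widetilde{\cO}(\max\{\epsilon_1^{-2},\epsilon_2^{-2}\})$.

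The hard part will be part (a): correctly deriving the inexact inclusion $\tfrac1\beta(z^k-z^{k+1})+e^{k+1}\in T(z^{k+1})$ from the Chebyshev-distance stopping rule \cref{cond:apx-stat}, and then controlling the cross term $\langle e^{k+1},z^{k+1}-z^*\rangle$ without circularity — the bound on $\|z^{k+1}-z^*\|$ needs joint primal--dual boundedness, whereas \cref{lem:outloop-bd} as stated controls only the primal iterates. I would therefore either extend that lemma to the multiplier sequence or invoke a standard quasi-Fej\'er argument in which summability of $\{\tau_k\}$ itself certifies boundedness of $\{z^k\}$. Part (b) is then comparatively routine, its only subtlety being the uniform-in-$k$ control of the inner solver's rate and initial gap, both of which are supplied by confining the iterates to $\cQ_1$.
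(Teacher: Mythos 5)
Your overall strategy---inexact PPA on the maximally monotone KKT operator for (a), a $k$-uniform linear rate of the inner solver summed over the outer iterations for (b)---matches the paper's, but your execution of part (a) is a genuinely different and valid route. The paper works with the resolvent $\cJ=(\cI+\beta\cT)^{-1}$: \cref{lem:tech1} bounds $\|z^{k+1}-\cJ(z^k)\|\le\beta\sqrt{n}\tau_k$, and \cref{lem:inclusion} (quoted from \cite{LZ18AL}) then delivers $\min_{K\le k\le 2K}\|z^{k+1}-z^k\|=\cO(1/\sqrt{K})$ directly. You instead derive the perturbed inclusion $\tfrac1\beta(z^k-z^{k+1})+e^{k+1}\in\cT(z^{k+1})$ (your derivation of both components is correct) and run a quasi-Fej\'er telescoping of $\sum_k\|z^{k+1}-z^k\|^2$ followed by a counting argument; this is more self-contained, at the price of having to justify joint primal--dual boundedness yourself. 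You rightly flag that \cref{lem:outloop-bd} as stated controls only the primal iterates; the resolution is exactly the paper's \cref{lem:tech2}, whose inequality \cref{seq-bd} bounds the full pair $\|(w^k,\mu^k)-(w^*,\mu^*)\|$ by $r_0+\beta\sqrt n\sum_j\tau_j$, and your alternative of letting summability of $\{\tau_k\}$ certify boundedness within the quasi-Fej\'er recursion also works. Your choice $\bar\epsilon=\min\{\beta\epsilon_1/2,\beta\epsilon_2\}$ correctly ensures both conditions of \cref{stop-alg} hold at a common index, so part (a) goes through.

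The one concrete imprecision is in part (b): you assert that all iterates relevant to the inner solver lie in $\cQ_1$, but \cref{lem:outloop-bd} confines only the outer iterates $w^k$ (and $w^k_*$). The inner ADMM iterates $w^{k,t},u_i^{k,t}$ leave $\cQ_1$ in general, which is why the paper needs the separate \cref{lem:bd-iters-sbsolver} to place them in a larger, $k$-independent compact set $\cQ_2$ whose radius is controlled by the uniform bounds $\|w^k_*-w^k\|\le r_0+2\sqrt n\bar s\beta$ and the Lipschitz modulus $L_{\nabla P,1}$ on $\cQ_1$; the Lipschitz constant entering the inner solver's linear rate must then be taken on $\cQ_2$, not $\cQ_1$. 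Without this step your claim that the inner rate and initial gap are ``uniform in $k$'' is unsupported. It is a fixable bookkeeping matter, and once repaired your bound $\sum_{k<K}\cO(\log(1/\tau_k))=\cO(K\log K)=\widetilde{\cO}(\max\{\epsilon_1^{-2},\epsilon_2^{-2}\})$ coincides with the paper's \cref{def:specific-total-inner}.
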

\begin{remark} (a) To the best of our knowledge, \cref{thm:outer-complex} provides the first worst-case complexity results for finding an approximate optimal solution of \cref{FL-gl-cone} in an FL framework; (b) The number of outer and inner iterations of \cref{alg:g-admm-cvx} with detailed dependencies on the algorithm hyperparameters can be found in \cref{def:specific-total-outer,def:specific-total-inner} in the proofs, respectively.
\end{remark}

\subsection{Communication overheads}
In the outer loop of \cref{alg:g-admm-cvx}, a single communication round occurs after solving a proximal AL subproblem. During this round, the central server sends the current weights $w^{k+1}$ to all local clients, and each client sends back the maximum change in their respective multipliers, measured by $\|\mu^{k+1}_i-\mu^k_i\|_\infty$, to the central server. The communication overheads of the inner solver \cref{alg:admm-cvx-1} are discussed in \cref{sec:comm-inner}. The communication complexity of \cref{alg:g-admm-cvx} is $\widetilde{\mathcal{O}}(\max\{\epsilon_1^{-2},\epsilon_2^{-2}\})$.

\section{An inexact ADMM for FL}\label{sec:admm}
In this section, we propose an inexact ADMM-based FL algorithm to solve the subproblem in \cref{prox-AL-pb-compact} (the same as \cref{prox-AL-pb}) for \cref{alg:g-admm-cvx}. Before proceeding, we show that $\nabla P_{i,k}$, $0\le i\le n$, are locally Lipschitz continuous on $\bR^d$, whose proof is deferred to \cref{apx:lcl-lip}. 
\begin{lemma}[{{\bf local Lipschitz continuity of $\nabla P_{i,k}$}}]\label{lem:local-Lip}
Suppose that Assumptions~\ref{asp:cvx-lclc} to \ref{asp:out-analysis} hold. Then the gradients $\nabla P_{i,k}$, $0\le i\le n$, are locally Lipschitz continuous on $\bR^d$.
\end{lemma}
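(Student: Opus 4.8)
The plan is to differentiate $P_{i,k}$ explicitly, isolate the only nonsmooth-looking ingredient (the squared nonnegative-part penalty), and then bound each piece of the resulting gradient on an arbitrary small ball. First I would record the gradient. Since the scalar map $t\mapsto([t]_+)^2$ is continuously differentiable with derivative $2[t]_+$, the chain rule together with \cref{asp:cvx-lclc} (which makes each scalar component of $c_i$ continuously differentiable) shows that the penalty $\frac{1}{2\beta}\|[\mu_i^k+\beta c_i(w)]_+\|^2$ is $C^1$ and that
\begin{equation*}
\nabla P_{i,k}(w) = \nabla f_i(w) + \nabla c_i(w)\,[\mu_i^k+\beta c_i(w)]_+ + \frac{1}{(n+1)\beta}(w-w^k),
\end{equation*}
with the $\nabla f_i$ term absent when $i=0$. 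The linear proximal-gradient term is globally Lipschitz, and $\nabla f_i$ is locally Lipschitz by \cref{asp:out-analysis}(b), so the entire difficulty reduces to the middle product term $F_i(w) := \nabla c_i(w)\,[\mu_i^k+\beta c_i(w)]_+$.

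Next I would fix an arbitrary $w_0\in\bR^d$ and work on an open ball $\cU_{w_0}$ on which the local Lipschitz continuity of $\nabla c_i$ from \cref{asp:out-analysis}(b) is available. On this bounded, convex $\cU_{w_0}$ I would record four elementary facts: (i) $\nabla c_i$ is bounded, say $\|\nabla c_i(w)\|\le M_1$, being continuous on a set with compact closure; (ii) $c_i$ is $M_1$-Lipschitz on $\cU_{w_0}$ by the mean-value inequality; (iii) $c_i$ is bounded, hence $[\mu_i^k+\beta c_i(\cdot)]_+$ is bounded, say by $M_3$, using $\|[u]_+\|\le\|u\|$; and (iv) $\nabla c_i$ is $L_c$-Lipschitz on $\cU_{w_0}$. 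The key structural fact is that the projection $[\cdot]_+$ onto the nonnegative orthant is globally $1$-Lipschitz (nonexpansive), so $w\mapsto[\mu_i^k+\beta c_i(w)]_+$ is $\beta M_1$-Lipschitz on $\cU_{w_0}$.

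Then I would estimate $F_i$ by the standard add-and-subtract splitting,
\begin{align*}
\|F_i(w)-F_i(w')\| &\le \|\nabla c_i(w)\|\,\big\|[\mu_i^k+\beta c_i(w)]_+-[\mu_i^k+\beta c_i(w')]_+\big\| \\
&\quad + \|\nabla c_i(w)-\nabla c_i(w')\|\,\big\|[\mu_i^k+\beta c_i(w')]_+\big\| \\
&\le (\beta M_1^2 + L_c M_3)\,\|w-w'\|,
\end{align*}
which shows $F_i$ is Lipschitz on $\cU_{w_0}$. Adding back the locally Lipschitz $\nabla f_i$ and the globally Lipschitz linear term yields local Lipschitz continuity of $\nabla P_{i,k}$ at $w_0$; since $w_0$ is arbitrary, the claim follows on all of $\bR^d$, and the $i=0$ case is identical after dropping the $\nabla f_i$ term.

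The main obstacle I anticipate is conceptual rather than computational: one must recognize that although $[\cdot]_+$ is nonsmooth, squaring it restores $C^1$ smoothness, so the penalty genuinely possesses a continuous gradient of the product form $\nabla c_i(w)[\mu_i^k+\beta c_i(w)]_+$; and then that local Lipschitzness of this product is \emph{not} automatic from local Lipschitzness of the two factors alone, but follows because on the bounded neighborhood each factor is additionally \emph{bounded}, which is what makes the product-rule Lipschitz estimate close. Everything else is routine bookkeeping.
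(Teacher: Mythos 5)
Your proposal is correct and follows essentially the same route as the paper's proof: the same explicit gradient formula $\nabla P_{i,k}(w)=\nabla f_i(w)+\nabla c_i(w)[\mu_i^k+\beta c_i(w)]_+ +\frac{1}{(n+1)\beta}(w-w^k)$, the same add-and-subtract splitting of the product term, and the same combination of nonexpansiveness of $[\cdot]_+$ with boundedness of $c_i$ and $\nabla c_i$ on a bounded neighborhood. Your extra remark that squaring $[\cdot]_+$ restores $C^1$ smoothness is a detail the paper leaves implicit, but otherwise the two arguments coincide.
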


\begin{remark}\label{rmk:not-loc-lip}
It is worth noting that $\nabla P_{i,k}$, $0\le i\le n$, are typically not globally Lipschitz continuous on $\bR^d$ even if $\nabla f_i$, $1\le i\le n$, and $\nabla c_i$, $0\le i\le n$, are globally Lipschitz continuous on $\bR^d$. For example, consider $c_0(w)=\|w\|^2-1$. By \cref{def:P0k}, one has that 
\begin{equation*}
\nabla P_{0,k}(w) =  2[\mu_0^k+\beta(\|w\|^2-1)]_+ w + \frac{1}{(n+1)\beta}(w-w^k).
\end{equation*}
In this case, it is not hard to verify that $\nabla c_0$ is globally Lipschitz continuous on $\bR^d$, but $\nabla P_{0,k}$ is not. Thus, analyzing the complexity results for solving the subproblems in \cref{prox-AL-pb-compact} using local Lipschitz conditions of $\nabla P_{i,k}$, $0\le i\le n$, is reasonable.
\end{remark}
Moreover, it is easy to see that $P_{i,k}$ are strongly convex with the modulus $1/[(n+1)\beta]$ for all $0\le i\le n$ and all $k\ge 0$. 

Since both the local Lipschitz and the strong convexity (including its modulus) properties hold for all $k \ge 0$, and we need to solve the subproblem of the same form each $k$, below we drop $k$ and focus on solving the following model problem in an FL manner: 
\begin{equation}\label{pb:fs-gl}
\min_{w}  \left\{\ell(w):=\sum_{i=0}^n P_i(w; Z_i)  + h(w)\right\},
\end{equation}
where the data $Z_i$'s are only accessible to their corresponding local/global functions $P_i$'s, necessitating FL. We will drop $Z_i$'s henceforth for simplicity. The model problem in \cref{pb:fs-gl} satisfies: 
\begin{enumerate}[leftmargin=1em]
\item The functions $P_i$, $0\le i\le n$, are continuously differentiable, and moreover, $\nabla P_i$, $0\le i\le n$, are locally Lipschitz continuous on $\bR^d$; 
\item The functions $P_i$, $0\le i\le n$, are strongly convex with a modulus $\sigma> 0$ on $\bR^d$, that is,
\begin{equation}\label{Pi-s-convex}
\langle\nabla P_i(u)-\nabla P_i(v),u-v\rangle\ge\sigma \|u-v\|^2,\quad \forall u,v\in\bR^d,\ 0\le i\le n.
\end{equation}
\end{enumerate}
%Throughout this section, we assume that the central server has access to $P_0$, and for each $1\le i\le n$, the local objective $P_i$ is handled solely by the local client $i$. 

%{\color{red}$P_{\rm g}$ describes information that needs to enforced can not be exactly computed.}

\subsection{Algorithm description}
\begin{algorithm}[h]
\caption{An inexact ADMM based FL algorithm for solving \cref{pb:fs-gl}}
\label{alg:admm-cvx-1}
\noindent\textbf{Input}: tolerance $\tau\in(0,1]$, $q\in(0,1)$, $\tilde{w}^0\in\dom(h)$, and $\rho_i>0$ for $1\le i\le n$;
\begin{algorithmic}[1]
\State Set $w^0=\tilde{w}^0$, and $(u_i^0,\lambda_i^0,\tilde{u}_i^0)=(\tilde{w}^0,-\nabla P_i(\tilde{w}^0),\tilde{w}^0-\nabla P_i(\tilde{w}^0)/\rho_i)$ for $1\le i\le n$.
\For{$t=0,1,2,\ldots$}
\State Set $\varepsilon_{t+1}=q^t$;\label{def:for-loop-beg}
\State {\bf Server update:} The central server finds an approximate solution $w^{t+1}$ to
\begin{equation}\label{sbpb:phi-gt}
\min_w\left\{\varphi_{0,t}(w):=P_0(w)+h(w)+\sum_{i=1}^n\left[\frac{\rho_i}{2}\|\tilde{u}_i^t-w\|^2\right]\right\}
\end{equation} 
\State such that $\rmdist_{\infty}(0,\partial \varphi_{0,t}(w^{t+1}))\le\varepsilon_{t+1}$.
\State {\bf Communication (broadcast):}  Each local client $i$, $1\le i\le n$, receives $w^{t+1}$ from the server. 
\State {\bf Client update (local):} Each local client $i$, $1\le i\le n$, finds an approximate solution $u_i^{t+1}$ to
\begin{equation}\label{sbpb:phi-it}
\min_{u_i}\left\{\varphi_{i,t}(u_i) := P_i(u_i) + \langle\lambda_i^t,u_i-w^{t+1}\rangle + \frac{\rho_i}{2}\|u_i-w^{t+1}\|^2\right\}
\end{equation}
\State such that $\|\nabla \varphi_{i,t}(u_i^{t+1})\|_{\infty}\le\varepsilon_{t+1}$, and then updates
\begin{align}
\lambda_i^{t+1}   = &\ \lambda_i^t + \rho_i(u_i^{t+1}-w^{t+1}),\label{sbpb:lam-gt}\\
\tilde{u}_i^{t+1} = &\ u_i^{t+1}+\lambda_i^{t+1}/\rho_i,\label{def:tuit}\\
\tilde{\varepsilon}_{i,t+1} = &\ \|\nabla\varphi_{i,t}(w^{t+1})-\rho_{i}(w^{t+1}-u_i^t)\|_{\infty}.\label{vareps-gt}
\end{align}
\State {\bf Communication:} Each local client $i$, $1\le i\le n$, sends $(\tilde{u}_i^{t+1},\tilde{\varepsilon}_{i,t+1})$ back to the central server.
\State {\bf Termination (server side):} Output $w^{t+1}$ and terminate this algorithm if
\begin{equation}\label{sbpb:stop}
\varepsilon_{t+1} + \sum_{i=1}^n\tilde\varepsilon_{i,t+1}\le \tau.
\end{equation}\label{def:for-loop-end}
\EndFor
\end{algorithmic}
\end{algorithm}

In this subsection, we propose an inexact ADMM-based FL algorithm (\cref{alg:admm-cvx-1}) for solving \cref{pb:fs-gl}. To make each participating client $i$ handle their local objective $P_i$ independently (see \cref{sec:alg-cvx-obj}), we introduce decoupling variables $u_i$'s and obtain the following equivalent consensus reformulation for \cref{pb:fs-gl}: 
\begin{equation}\label{pb:consens}
\min_{w,u_i} \left\{\sum_{i=1}^n P_i(u_i) + P_0(w) + h(w)\right\}\quad \st\quad u_i=w,\quad 1\le i\le n,
\end{equation}
which allows each local client $i$ to handle the local variable $u_i$ and the local objective function $P_i$ while imposing consensus constraints that force clients’ local parameters $u_i$ equal to the global parameter $w$. This reformulation enables the applicability of an inexact ADMM that solves \cref{pb:consens} in a federated manner. At each iteration, an ADMM solver optimizes the AL function associated with \cref{pb:consens}:
\begin{equation}\label{AL-consensus-F}
\cL_P(w,u,\lambda):=\sum_{i=1}^n \left[P_i(u_i)+\langle\lambda_i, u_i-w\rangle + \frac{\rho_i}{2}\|u_i-w\|^2\right] + P_0(w) + h(w)
\end{equation}
with respect to the variables $w$, $u$, and $\lambda$ alternately, where $u=[u_1^T,\ldots,u_n^T]^T$ and $[\lambda_1^T,\ldots,\lambda_n^T]^T$ collect all the local parameters and the multipliers associated with the consensus constraints, respectively. Specifically, in iteration $t$, one performs
\begin{align}
&w^{t+1}\approx\argmin_w \cL_P(w,u^t,\lambda^t),\label{w-admm}\\
&u^{t+1}\approx\argmin_{u} \cL_P(w^{t+1},u,\lambda^t),\label{u-admm}\\
&\lambda_i^{t+1} = \lambda_i^t + \rho_i (u_i^{t+1}-w^{t+1}),\quad \forall 1\le i\le n.
\end{align}
By the definition of $\cL_P$ in \cref{AL-consensus-F}, one can verify that the step in \cref{w-admm} is equivalent to \cref{sbpb:phi-gt}, and also the step in \cref{u-admm} can be computed in parallel, which corresponds to \cref{sbpb:phi-it}. Therefore, the ADMM updates naturally suit the FL framework, as the separable structure in \cref{AL-consensus-F} over the pairs $\{(u_i,\lambda_i)\}$ enables the local update of $(u_i,\lambda_i)$ at each client $i$ while $w$ is updated by the central server.

Since the subproblems in \cref{sbpb:phi-gt} and \cref{sbpb:phi-it} are strongly convex, their approximate solutions $w^{t+1}$ and $u_i^{t+1}$, $1\le i\le n$, can be found using a gradient-based algorithm with a global linear convergence rate~\cite{nesterov2018lectures}. Furthermore, the value $\tilde{\varepsilon}_{i,t+1}$ in \cref{vareps-gt} serves as a measure of local optimality and consensus for client $i$. By summing up $\tilde{\varepsilon}_{i,t+1}$ for $1\le i\le n$ and including $\varepsilon_{t+1}$, one can obtain a stationarity measure for the current iterate (see (\cref{sbpb:stop})), as presented in the following theorem. Its proof can be found in \cref{apx:out-admm}.

\begin{remark}\label{rmk:alg2}
    We now make the following remarks on \cref{alg:admm-cvx-1}.
    \begin{enumerate}[label=(\alph*),leftmargin=*]
\vspace{-0.5em}
    \item On hyperparameters of \cref{alg:admm-cvx-1},
\begin{itemize}[leftmargin=*]
    \item $(\tau,\Tilde{w}^0)$ is specified as $(\tau_k,w^k)$ at the $k$th iteration of \cref{alg:g-admm-cvx}.
    \item From \cref{sbpb:phi-gt}, $\rho_i$, $1\le i\le n$ can be viewed as weighting parameters for aggregation. Therefore, it is natural to set $\rho_i = a m_i$ for $1 \leq i \leq n$, where $m_i$ is the number of samples in client $i$ and $a$ is a global constant. We follow this rule when setting $\rho_i$'s.
    \item $q\in(0,1)$ determines the tolerance sequence $\{\varepsilon_{t+1}\}_{t\ge0}$ for the subproblems in \cref{sbpb:phi-gt}. These tolerances in solving subproblems reduce computational costs. Setting $\{\varepsilon_{t+1}\}_{t\ge0}$ to rapidly diminish toward zero rapidly at a geometric rate ensures the convergence of \cref{alg:g-admm-cvx}. In practice, we suggest setting $q$ as $\mathcal{O}(1)$.
\end{itemize}
    \item The main innovations we have here compared to the existing literature on ADMM-based FL algorithms (e.g, \cite{ZL23ADMM,GLF22fedadmm,ZHDYL21fedpd}) include:
    \begin{itemize}[leftmargin=*]
    \item We establish the complexity results of an inexact ADMM-based FL algorithm under local Lipschitz conditions, vs. global Lipschitz conditions in other work. Our complexity results can be found in \cref{thm:complexity-alg1}.
    \item We propose a novel and rigorous stopping criterion (\cref{sbpb:stop}) that is easily verifiable, communication-light, and compatible with the outer iterations (as our inexact ADMM FL algorithm serves as a subproblem solver in our overall algorithm framework). 
    \end{itemize} 
    \end{enumerate}
\end{remark}

\begin{theorem}[{{\bf output of \cref{alg:admm-cvx-1}}}]\label{thm:output-alg1}
If \cref{alg:admm-cvx-1} terminates at some iteration $T \ge 0$, then its output $w^{T+1}$ satisfies $\rmdist_{\infty}(0,\partial \ell(w^{T+1}))\le\tau$.
\end{theorem}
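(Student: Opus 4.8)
The plan is to produce an \emph{explicit} element of $\partial\ell(w^{T+1})$ whose Chebyshev norm is at most $\tau$; since $\rmdist_\infty(0,\partial\ell(w^{T+1}))$ is the infimal $\ell_\infty$-norm over this set, exhibiting one such element suffices. Because $\ell=\sum_{i=0}^n P_i + h$ with every $P_i$ differentiable, the subdifferential splits as $\partial\ell(w)=\nabla P_0(w)+\sum_{i=1}^n\nabla P_i(w)+\partial h(w)$, so I must assemble the three groups of terms—the $\nabla P_0$ term and a subgradient of $h$ from the server residual, and the $\nabla P_i$ terms from the $n$ client residuals—and show their total error is controlled by the quantity appearing in the termination test \cref{sbpb:stop}.

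First I would exploit the server's approximate stationarity at the terminating iteration $t=T$. The condition $\rmdist_\infty(0,\partial\varphi_{0,T}(w^{T+1}))\le\varepsilon_{T+1}$ yields a vector $g_0\in\partial\varphi_{0,T}(w^{T+1})$ with $\|g_0\|_\infty\le\varepsilon_{T+1}$, and differentiating $\varphi_{0,T}$ in \cref{sbpb:phi-gt} gives the decomposition
\begin{equation*}
g_0=\nabla P_0(w^{T+1})+\xi+\sum_{i=1}^n\rho_i\big(w^{T+1}-\tilde u_i^T\big),\qquad \xi\in\partial h(w^{T+1}),
\end{equation*}
which already supplies the $\nabla P_0$ term and the subgradient $\xi$ of $h$ that I need.

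The crux is then to rewrite each aggregation term $\rho_i(w^{T+1}-\tilde u_i^T)$ as $\nabla P_i(w^{T+1})$ plus something controlled by $\tilde\varepsilon_{i,T+1}$. Using $\tilde u_i^T=u_i^T+\lambda_i^T/\rho_i$ from \cref{def:tuit} together with $\nabla\varphi_{i,T}(w^{T+1})=\nabla P_i(w^{T+1})+\lambda_i^T$ read off from \cref{sbpb:phi-it}, a direct computation collapses the defining expression \cref{vareps-gt} into
\begin{equation*}
\tilde\varepsilon_{i,T+1}=\big\|\nabla P_i(w^{T+1})-\rho_i\big(w^{T+1}-\tilde u_i^T\big)\big\|_\infty,
\end{equation*}
so that $\rho_i(w^{T+1}-\tilde u_i^T)=\nabla P_i(w^{T+1})-d_i$ for some $d_i$ with $\|d_i\|_\infty=\tilde\varepsilon_{i,T+1}$. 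Substituting into the expression for $g_0$ and solving for the differentiable part gives
\begin{equation*}
v:=g_0+\sum_{i=1}^n d_i=\nabla P_0(w^{T+1})+\sum_{i=1}^n\nabla P_i(w^{T+1})+\xi\in\partial\ell(w^{T+1}),
\end{equation*}
and the triangle inequality then yields $\|v\|_\infty\le\|g_0\|_\infty+\sum_{i=1}^n\|d_i\|_\infty\le\varepsilon_{T+1}+\sum_{i=1}^n\tilde\varepsilon_{i,T+1}\le\tau$, where the last bound is exactly \cref{sbpb:stop}; this completes the argument.

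I expect the only delicate step to be the algebraic identity recasting $\tilde\varepsilon_{i,T+1}$: one must track that the residual \cref{vareps-gt} is evaluated at $w^{T+1}$ rather than at the client iterate $u_i^{T+1}$, and references the \emph{previous} client point $u_i^T$ through $\tilde u_i^T$—precisely the quantity the server used to build \cref{sbpb:phi-gt} in the same iteration. This matching of indices is what makes the client residuals $d_i$ cancel the server's aggregation term $\sum_i\rho_i(w^{T+1}-\tilde u_i^T)$, and it is the design feature of the stopping criterion that turns it into a genuine, verifiable bound on $\rmdist_\infty(0,\partial\ell(w^{T+1}))$. As a minor check, I would verify the base case $T=0$ against the initialization in \cref{alg:admm-cvx-1}, which is arranged so that $u_i^0+\lambda_i^0/\rho_i=\tilde u_i^0$ holds and the same identity applies.
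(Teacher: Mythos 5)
Your proposal is correct and is essentially the paper's own argument: the paper proves the same identity $\partial\ell(w^{T+1})=\partial\varphi_{0,T}(w^{T+1})+\sum_{i=1}^n\left[\nabla\varphi_{i,T}(w^{T+1})-\rho_i(w^{T+1}-u_i^T)\right]$ at the level of sets and then applies the triangle inequality, whereas you pick explicit elements $g_0$ and $d_i$ and perform the identical cancellation. Your algebraic recasting of $\tilde\varepsilon_{i,T+1}$ via $\tilde u_i^T=u_i^T+\lambda_i^T/\rho_i$ checks out, including the $T=0$ initialization.
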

\cref{thm:output-alg1} states that \cref{alg:admm-cvx-1} outputs a point that approximately satisfies the first-order optimality condition of \cref{FL-gl-cone}. In addition, it follows from \cref{thm:output-alg1} that \cref{alg:admm-cvx-1} with $(\tau,\tilde{w}^0)=(\tau_k,w^k)$ finds an approximate solution $w^{k+1}$ to \cref{prox-AL-pb-compact} such that \cref{cond:apx-stat} holds.

\subsection{Complexity analysis}
In this subsection, we establish the iteration complexity for the inexact ADMM, namely, \cref{alg:admm-cvx-1}. Recall from \cref{Pi-s-convex} that \cref{pb:fs-gl} is strongly convex and thus has a unique optimal solution. We refer to this optimal solution of \cref{pb:fs-gl} as $\tilde{w}^*$ throughout this section. The following lemma shows that all the iterates generated by \cref{alg:admm-cvx-1} lie in a compact set. Its proof can be found in \cref{apx:bd-admm}.
\begin{lemma}[{{\bf bounded iterates of \cref{alg:admm-cvx-1}}}]\label{lem:uit-wt-bd}
Suppose that Assumptions~\ref{asp:cvx-lclc} to \ref{asp:out-analysis} hold and let $\{u_i^{t+1}\}_{1\le i\le n,t\ge0}$ and $\{w^{t+1}\}_{t\ge0}$ be all the iterates generated by \cref{alg:admm-cvx-1}. Then it holds that all these iterates stay in a compact set $\cQ$, where 
\begin{equation}\label{def:Q_Fh}
\cQ := \left\{v:\|v-\tilde{w}^*\|^2\le\frac{n+1}{\sigma^2(1-q^2)}+\frac{1}{\sigma} \sum_{i=1}^n\left(\rho_i\|\tilde{w}^* - \tilde{w}^0\|^2 + \frac{1}{\rho_i} \|\nabla P_i(\tilde{w}^*) - \nabla P_i(\tilde{w}^0)\|^2\right)\right\}.
\end{equation}
\end{lemma}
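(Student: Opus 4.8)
The plan is to run an inexact-ADMM Lyapunov analysis around the unique minimizer $\tilde{w}^*$ of \cref{pb:fs-gl}, organized so that the potential function is exactly the quantity defining $\cQ$ in \cref{def:Q_Fh}.

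\textbf{Step 1 (optimality characterization and bookkeeping of inexactness).} First I would record the KKT system of the consensus reformulation \cref{pb:consens}: since it is convex with only linear (consensus) constraints, multipliers exist, and at its solution $(w,u_1,\dots,u_n)=(\tilde{w}^*,\dots,\tilde{w}^*)$ there are $\lambda_i^*$ and $s^*\in\partial h(\tilde{w}^*)$ with $\nabla P_i(\tilde{w}^*)+\lambda_i^*=0$ for $1\le i\le n$ and $\nabla P_0(\tilde{w}^*)+s^*-\sum_i\lambda_i^*=0$; strong convexity gives uniqueness of $\tilde{w}^*$. The key identity is $\lambda_i^*=-\nabla P_i(\tilde{w}^*)$, so the initialization $\lambda_i^0=-\nabla P_i(\tilde{w}^0)$ and $u_i^0=\tilde{w}^0$ give $\tfrac1{\rho_i}\|\lambda_i^0-\lambda_i^*\|^2=\tfrac1{\rho_i}\|\nabla P_i(\tilde{w}^*)-\nabla P_i(\tilde{w}^0)\|^2$ and $\rho_i\|u_i^0-\tilde{w}^*\|^2=\rho_i\|\tilde{w}^*-\tilde{w}^0\|^2$, i.e., exactly the two summands in \cref{def:Q_Fh}. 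I would then rewrite the inexact stationarity cleanly: writing $e_i^{t+1}:=\nabla\varphi_{i,t}(u_i^{t+1})$ and using \cref{sbpb:lam-gt} yields $\nabla P_i(u_i^{t+1})+\lambda_i^{t+1}=e_i^{t+1}$ with $\|e_i^{t+1}\|_\infty\le\varepsilon_{t+1}$ (and $e_i^0=0$ by initialization), while the $w$-residual $g^{t+1}\in\partial\varphi_{0,t}(w^{t+1})$ together with $\tilde{u}_i^t=u_i^t+\lambda_i^t/\rho_i$ and \cref{sbpb:lam-gt} gives $\nabla P_0(w^{t+1})+s^{t+1}-\sum_i\lambda_i^{t+1}+\sum_i\rho_i(u_i^{t+1}-u_i^t)=g^{t+1}$ with $s^{t+1}\in\partial h(w^{t+1})$, $\|g^{t+1}\|_\infty\le\varepsilon_{t+1}$.

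\textbf{Step 2 (per-iteration inequality).} Define $\Phi_t:=\sum_{i=1}^n\big(\rho_i\|u_i^t-\tilde{w}^*\|^2+\tfrac1{\rho_i}\|\lambda_i^t-\lambda_i^*\|^2\big)$. Subtracting the optimality relations from the inexact ones and pairing the $u_i$-relation with $u_i^{t+1}-\tilde{w}^*$ and the $w$-relation with $w^{t+1}-\tilde{w}^*$, I would invoke the $\sigma$-strong convexity of each $P_i$ (\cref{Pi-s-convex}) and the monotonicity of $\partial h$ to get two inequalities whose sum carries left-hand side $\sigma\big(\|w^{t+1}-\tilde{w}^*\|^2+\sum_i\|u_i^{t+1}-\tilde{w}^*\|^2\big)$. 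The $\lambda$-cross terms collapse via $w^{t+1}-u_i^{t+1}=-\rho_i^{-1}(\lambda_i^{t+1}-\lambda_i^t)$ and polarization into $\tfrac1{2\rho_i}$-telescoping differences of $\|\lambda_i^{\cdot}-\lambda_i^*\|^2$; the remaining cross term $-\sum_i\rho_i\langle u_i^{t+1}-u_i^t,\,w^{t+1}-\tilde{w}^*\rangle$ I would split by $w^{t+1}-\tilde{w}^*=(w^{t+1}-u_i^{t+1})+(u_i^{t+1}-\tilde{w}^*)$, producing the $\tfrac{\rho_i}2$-telescoping of $\|u_i^{\cdot}-\tilde{w}^*\|^2$ plus an inner product $\langle u_i^{t+1}-u_i^t,\,\lambda_i^{t+1}-\lambda_i^t\rangle$ controlled through strong monotonicity of $\nabla P_i$ applied to $\nabla P_i(u_i^{t+1})-\nabla P_i(u_i^t)=(e_i^{t+1}-e_i^t)-(\lambda_i^{t+1}-\lambda_i^t)$.

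\textbf{Step 3 (absorb errors, telescope, sum).} All error terms are inner products of a residual with an iterate difference. The crucial move is Young's inequality $\langle a,b\rangle\le\tfrac{\sigma}2\|b\|^2+\tfrac1{2\sigma}\|a\|^2$ applied to $\langle g^{t+1},w^{t+1}-\tilde{w}^*\rangle$ and $\langle e_i^{t+1},u_i^{t+1}-\tilde{w}^*\rangle$, so that the $\tfrac\sigma2\|\cdot\|^2$ pieces are absorbed into the strongly convex left-hand side, leaving $\tfrac\sigma2\big(\|w^{t+1}-\tilde{w}^*\|^2+\sum_i\|u_i^{t+1}-\tilde{w}^*\|^2\big)\le\tfrac12(\Phi_t-\Phi_{t+1})+\tfrac1{2\sigma}\big(\|g^{t+1}\|^2+\sum_i\|e_i^{t+1}\|^2+\cdots\big)$ after discarding the manifestly nonpositive squared-difference by-products. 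Dropping the nonnegative left-hand side gives $\Phi_{t+1}\le\Phi_t+\sigma^{-1}(\text{residual squares})$; since each residual is bounded by $\varepsilon_{t+1}=q^t$ over the $n+1$ subproblems, summing $\sum_{t\ge0}q^{2t}=(1-q^2)^{-1}$ yields $\Phi_t\le\Phi_0+(n+1)\sigma^{-1}(1-q^2)^{-1}$ for all $t$. Feeding this back into the per-iteration bound, and using that each individual squared distance on the left is at most the whole left-hand side, gives $\|u_i^{t+1}-\tilde{w}^*\|^2$ and $\|w^{t+1}-\tilde{w}^*\|^2$ each at most $\sigma^{-1}\Phi_0+(n+1)\sigma^{-2}(1-q^2)^{-1}$, which upon writing out $\Phi_0$ via Step 1 is precisely membership in $\cQ$.

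\textbf{Main obstacle.} The delicate part is the coupling in Steps 2--3: the inexactness residuals $g^{t+1},e_i^{t+1}$ enter multiplied by the very iterate differences $w^{t+1}-\tilde{w}^*$ and $u_i^{t+1}-\tilde{w}^*$ whose boundedness is the goal, so a naive estimate is circular. Resolving this requires (i) choosing the pairings so the left-hand side carries the full strong-convexity factor $\sigma$, and (ii) spending exactly half of it through Young's inequality to dominate the errors, while verifying that the squared-difference by-products ($\|u_i^{t+1}-u_i^t\|^2$, $\|\lambda_i^{t+1}-\lambda_i^t\|^2$, and the $\nabla P_i$ monotonicity slack) have the right signs to be discarded. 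Keeping the coefficients consistent so that the telescoping is exactly $\tfrac12(\Phi_t-\Phi_{t+1})$ with $\Phi$ matching \cref{def:Q_Fh} is where the bookkeeping must be done most carefully.
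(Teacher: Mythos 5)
Your proposal is correct and follows essentially the same route as the paper: the same inexact KKT bookkeeping, the same potential $\Phi_t=2S_t$ built from $\rho_i\|u_i^t-\tilde w^*\|^2+\rho_i^{-1}\|\lambda_i^t-\lambda_i^*\|^2$, the same strong-convexity pairings with Young's inequality spending half of $\sigma$ to absorb the residuals, and the same telescoping plus geometric summation of $\varepsilon_{t+1}^2=q^{2t}$. The only deviation is your handling of the cross term $-\sum_i\rho_i\langle u_i^{t+1}-u_i^t,\,w^{t+1}-\tilde w^*\rangle$ via strong monotonicity of $\nabla P_i$, which introduces extra $\|e_i^{t+1}-e_i^t\|^2$ error terms and would slightly inflate the constant in \cref{def:Q_Fh}; the paper instead uses the polarization identity \cref{para-id} so that the $\tfrac{\rho_i}{2}\|w^{t+1}-u_i^{t+1}\|^2=\tfrac{1}{2\rho_i}\|\lambda_i^{t+1}-\lambda_i^t\|^2$ by-product cancels exactly against the $\lambda$-telescoping, recovering the stated constant.
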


The iteration complexity of \cref{alg:admm-cvx-1} is established in the following theorem, whose proof is relegated to \cref{apx:cplx-admm}. 
\begin{theorem}[{{\bf iteration complexity of \cref{alg:admm-cvx-1}}}]\label{thm:complexity-alg1}
Suppose that Assumptions~\ref{asp:cvx-lclc} to \ref{asp:out-analysis} hold. Then \cref{alg:admm-cvx-1} terminates in at most $\cO(|\log\tau|)$ iterations. 
\end{theorem}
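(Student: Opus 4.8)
The plan is to establish linear convergence of the inexact ADMM in Algorithm~\ref{alg:admm-cvx-1} and then show that the verifiable stopping quantity in \cref{sbpb:stop} is controlled by this linearly decaying error, so that the termination condition $\varepsilon_{t+1}+\sum_{i=1}^n\tilde\varepsilon_{i,t+1}\le\tau$ is met after $\cO(|\log\tau|)$ iterations. First I would introduce a suitable Lyapunov/merit function measuring progress toward the unique optimum $\tilde w^*$ of \cref{pb:fs-gl}. A natural candidate combines the primal consensus gap and the dual distance, e.g.\ a weighted sum $\sum_{i=1}^n\bigl(\rho_i\|u_i^t-\tilde w^*\|^2+\tfrac{1}{\rho_i}\|\lambda_i^t-\lambda_i^*\|^2\bigr)$ using the saddle-point multipliers $\lambda_i^*=-\nabla P_i(\tilde w^*)$, together with a term tracking $\|w^t-\tilde w^*\|^2$. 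The strong convexity of each $P_i$ with modulus $\sigma$ (from \cref{Pi-s-convex}) is the key quantitative ingredient that will force a contraction.

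The key steps, in order, are as follows. \emph{(1)} Use the boundedness from \cref{lem:uit-wt-bd}: since all iterates lie in the compact set $\cQ$ of \cref{def:Q_Fh}, the locally Lipschitz gradients $\nabla P_i$ are in fact uniformly Lipschitz on $\cQ$ with some constant $L$, converting the local hypotheses into global ones on the relevant region. \emph{(2)} Derive the per-iteration inexactness bounds: the server and client subproblems \cref{sbpb:phi-gt,sbpb:phi-it} are solved only up to tolerance $\varepsilon_{t+1}=q^t$, so I would carry these $q^t$-sized residual terms through the analysis rather than assuming exact minimization. \emph{(3)} Establish a one-step inequality showing the Lyapunov function contracts by a fixed factor $\gamma\in(0,1)$ up to an additive perturbation of order $q^t$; summing the geometric series (using that both the contraction and the inexactness decay geometrically) yields $\|w^{t+1}-\tilde w^*\|=\cO(\bar q^{\,t})$ and likewise for $u_i^{t+1}$ and $\lambda_i^{t+1}$, for some rate $\bar q\in(0,1)$. \emph{(4)} Bound the stopping quantity: express $\tilde\varepsilon_{i,t+1}$ from \cref{vareps-gt} in terms of the differences $w^{t+1}-u_i^t$ and the gradient residuals, and use the Lipschitz bound from step~(1) together with the linear decay from step~(3) to show $\varepsilon_{t+1}+\sum_{i=1}^n\tilde\varepsilon_{i,t+1}\le C\bar q^{\,t}$ for an explicit constant $C$ depending on $n$, $L$, $\{\rho_i\}$, and the diameter of $\cQ$. \emph{(5)} Solve $C\bar q^{\,t}\le\tau$ for $t$, giving termination after at most $\cO(\log(C/\tau)/\log(1/\bar q))=\cO(|\log\tau|)$ iterations.

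The main obstacle I anticipate is step~(3), namely proving the contraction of the Lyapunov function in the \emph{inexact} regime. With exact ADMM subproblem solves, standard strongly-convex ADMM analysis gives linear convergence; the difficulty here is that both subproblems are solved only to accuracy $q^t$, so the usual optimality conditions are replaced by approximate ones carrying error terms. I would handle this by writing the approximate first-order conditions as exact conditions plus residual vectors of norm $\le\varepsilon_{t+1}$, absorbing the cross terms via Young's inequality (choosing the split so the contraction factor stays strictly below one), and exploiting that the perturbation $q^t$ decays at least as fast as the contraction, so that the accumulated error does not overwhelm the geometric decay. A secondary technical point is ensuring the chosen weights in the Lyapunov function make the dual and primal updates jointly contractive; this typically requires a careful pairing of the $\rho_i$-weighting with $\sigma$ and $L$, but it is routine once the right merit function is fixed.
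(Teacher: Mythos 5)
Your proposal matches the paper's proof in all essential respects: boundedness of iterates (\cref{lem:uit-wt-bd}) upgrading local to uniform Lipschitz continuity on the compact set, a weighted primal--dual potential $\sum_i\bigl(\tfrac{\rho_i}{2}\|u_i^t-\tilde w^*\|^2+\tfrac{1}{2\rho_i}\|\lambda_i^t-\lambda_i^*\|^2\bigr)$ that contracts geometrically up to $\cO(q^{2t})$ perturbations absorbed via Young's inequality, and a final bound on the stopping quantity $\varepsilon_{t+1}+\sum_i\tilde\varepsilon_{i,t+1}$ by a geometrically decaying term. The obstacle you flag in step (3) is exactly where the paper works (its \cref{lem:St-rec,lem:induct}), and your proposed resolution is the one the paper uses, so the plan is sound as stated.
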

%In other words, \cref{alg:admm-cvx-1} enjoys a global linear convergence rate when solving the problem in \cref{pb:fs-gl}. The result generalizes classical convergence results for ADMM in the literature, which typically require a strongly convex objective with {\it globally} Lipschitz continuous gradient (e.g., see \cite{lin2015global}). In contrast, our result is the first to establish a global linear convergence of an inexact ADMM assuming a strongly convex objective with only a {\it locally} Lipschitz continuous gradient.  

\begin{remark} We now make the following remarks on the complexity results in \cref{thm:complexity-alg1}.
\begin{enumerate}[label=(\alph*),leftmargin=*]
\vspace{-0.5em}
    \item \cref{alg:admm-cvx-1} enjoys a global linear convergence rate when solving the problem in \cref{pb:fs-gl}. The result generalizes classical convergence results for ADMM in the literature, which typically require a strongly convex objective with {\it globally} Lipschitz continuous gradient (e.g., see \cite{lin2015global}). In contrast, our result is the first to establish a global linear convergence of an inexact ADMM assuming a strongly convex objective with only a {\it locally} Lipschitz continuous gradient.  
    \item The number of iterations of \cref{alg:admm-cvx-1} with dependencies on all the algorithm hyperparameters can be found in \cref{iter-alg1} in the proofs.

    \item The general research on complexity analysis for optimization algorithms under local Lipschitz assumptions is relatively new. For example, \cite{LM22} proposes accelerated gradient methods for convex optimization problems with locally Lipschitz continuous gradients, and \cite{zhang2024first} proposes accelerated gradient methods for nonconvex optimization problems with locally Lipschitz continuous gradients.
\end{enumerate}
\end{remark}

\subsection{Communication overheads}\label{sec:comm-inner}
In each iteration of \cref{alg:admm-cvx-1}, a single communication round happens between the clients and the central server. During this round, the central server transmits the global weight $w^{t+1}$ to all clients, and subsequently each local client performs multiple local updates to solve a local subproblem and then sends the updated local weights $\tilde{u}_i^{t+1}$ and a local stationarity measure $\tilde{\varepsilon}_{i,t+1}$ back to the central server. The communication complexity of each call of \cref{alg:admm-cvx-1} is $\cO(|\log\tau|)$.

\section{Numerical experiments}
\label{sec:exp}
Here, we conduct numerical experiments to evaluate the performance of our proposed FL algorithm (\cref{alg:g-admm-cvx}). Specifically, we benchmark our algorithm against a centralized proximal AL method (cProx-AL, described in \cref{alg:c-prox-AL}) on a convex Neyman-Pearson classification problem (\cref{sec:npc}) and a fair-aware learning problem (\cref{sec:fair-class}) with real-world datasets, and further on linear-equality-constrained quadratic programming problems with simulated data (\cref{sec:lcqp}).
%Additional numerical results for comparing unconstrained and classification models are presented in \cref{sec:comp-cuc}. 
All experiments are carried out on a Windows system with an AMD EPYC 7763 64-core processor, and all algorithms are implemented in Python. The code to implement the proposed algorithm on these numerical examples is available at \url{https://github.com/PL97/Constr_FL}.

\begin{table}[t]
\centering
\caption{Numerical results for solving \cref{NP-class} using our algorithm vs. using cProx-AL. Inside the parentheses are the respective standard deviations over 10 random trials. For feasibility, we include the mean and maximum losses for class 1 among all local clients.}
\smallskip
\resizebox{\textwidth}{!}{
\begin{tabular}{c|c||ccc||cccc}
\hline
\multirow{3}{*}{dataset} & \multirow{3}{*}{$n$} & \multicolumn{3}{c||}{objective value (loss for class 0)} &  \multicolumn{4}{c}{feasibility (loss for class 1 ($\le0.2$))} \\
& & \cref{alg:g-admm-cvx} & cProx-AL &  relative difference  & \multicolumn{2}{c}{\cref{alg:g-admm-cvx}} & \multicolumn{2}{c}{cProx-AL} \\
&& & & & mean & max & mean & max \\ 

\hline
 \multirow{4}{*}{breast-cancer-wisc}& 1  & 0.27 (1.52e-04)  & 0.27 (3.02e-05)  & 7.09e-04 (2.02e-04)  & 0.20 (1.80e-07)  & 0.20 (1.80e-07)  & 0.20 (1.84e-08)  & 0.20 (1.84e-08)  \\
 & 5  & 0.34 (4.50e-02)  & 0.33 (4.55e-02)  & 1.15e-02 (5.17e-03)  & 0.19 (7.33e-06)  & 0.20 (1.08e-06)  & 0.19 (1.13e-04)  & 0.20 (1.72e-05)  \\
 & 10  & 0.37 (1.08e-01)  & 0.37 (1.08e-01)  & 3.92e-04 (2.76e-04)  & 0.17 (1.15e-05)  & 0.20 (6.05e-09)  & 0.17 (1.14e-05)  & 0.20 (2.95e-08)  \\
 & 20  & 0.46 (2.12e-01)  & 0.45 (2.12e-01)  & 3.43e-02 (2.91e-02)  & 0.16 (3.52e-05)  & 0.20 (3.76e-06)  & 0.16 (7.03e-06)  & 0.20 (7.70e-08)  \\
\hline
 \multirow{4}{*}{adult-a}& 1  & 0.73 (2.19e-04)  & 0.73 (1.25e-04)  & 2.24e-04 (3.46e-04)  & 0.20 (6.30e-07)  & 0.20 (6.30e-07)  & 0.20 (1.73e-06)  & 0.20 (1.73e-06)  \\
 & 5  & 0.74 (1.03e-02)  & 0.74 (1.03e-02)  & 4.25e-03 (7.44e-04)  & 0.20 (2.14e-04)  & 0.20 (2.80e-04)  & 0.20 (1.21e-05)  & 0.20 (2.28e-06)  \\
 & 10  & 0.77 (1.98e-02)  & 0.77 (1.98e-02)  & 2.69e-03 (3.24e-03)  & 0.19 (6.41e-05)  & 0.20 (9.76e-05)  & 0.19 (2.00e-05)  & 0.20 (1.23e-05)  \\
 & 20  & 0.78 (2.86e-02)  & 0.79 (2.81e-02)  & 1.13e-02 (4.11e-03)  & 0.18 (6.40e-04)  & 0.20 (6.59e-05)  & 0.18 (1.96e-05)  & 0.20 (3.19e-06)  \\
\hline
 \multirow{4}{*}{monks-1}& 1  & 1.58 (7.61e-05)  & 1.58 (7.50e-05)  & 1.39e-05 (1.09e-05)  & 0.20 (1.09e-07)  & 0.20 (1.09e-07)  & 0.20 (3.01e-07)  & 0.20 (3.01e-07)  \\
 & 5  & 1.65 (8.39e-02)  & 1.65 (8.41e-02)  & 2.08e-04 (1.84e-04)  & 0.19 (6.39e-05)  & 0.20 (5.39e-05)  & 0.19 (5.04e-06)  & 0.20 (5.60e-07)  \\
 & 10  & 1.71 (1.18e-01)  & 1.71 (1.18e-01)  & 4.59e-04 (3.32e-04)  & 0.18 (3.98e-05)  & 0.20 (4.46e-05)  & 0.18 (6.44e-06)  & 0.20 (1.60e-06)  \\
 & 20  & 1.81 (1.49e-01)  & 1.79 (1.60e-01)  & 1.78e-02 (1.38e-02)  & 0.17 (1.68e-04)  & 0.20 (2.24e-04)  & 0.17 (4.60e-06)  & 0.20 (1.62e-06)  \\
\hline 
\end{tabular}
}
\label{table:FL-np}
\end{table}
\begin{figure}[!htbp]
\centering
\includegraphics[width=0.95\linewidth]{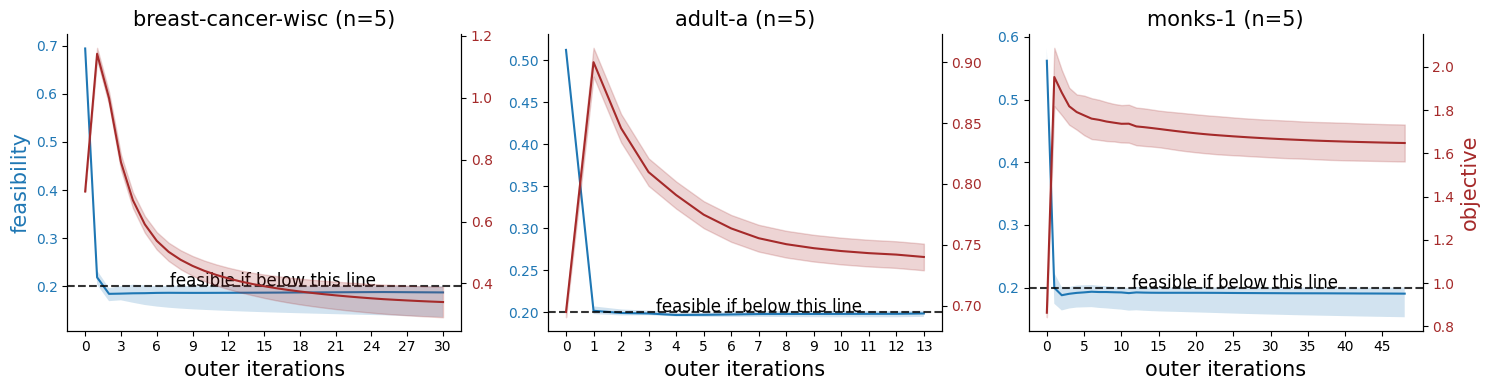}
\caption{Convergence behavior of local objective and local feasibility in one random trial over the outer iterations of \cref{alg:g-admm-cvx} on three real-world datasets. The solid blue and brown lines indicate the mean local objective and the mean local feasibility over all clients, respectively. The blue and the brown areas indicate the cross-client variations of local objectives and local feasibility, respectively. The dashed black line indicates the feasibility threshold.%, that is, the solution is feasible if the associated local feasibility measure is below this line.
}
\label{fig:feas_progression}
\end{figure}

\subsection{Neyman-Pearson classification}\label{sec:npc}
In this subsection, we consider the Neyman-Pearson classification problem:
\begin{equation}\label{NP-class}
\min_{w}\frac{1}{n}\sum_{i=1}^{n}\frac{1}{m_{i0}}\sum_{j=1}^{m_{i0}}\phi(w;(x_{j}^{(i0)},0))\quad \st\quad \frac{1}{m_{i1}}\sum_{j=1}^{m_{i1}}\phi(w;(x_{j}^{(i1)},1))\le r_i,\quad 1\le i\le n,
\end{equation}
where $\{x_{j}^{(i0)}\}_{1\le j\le m_{i0}}$ and $\{x_{j}^{(i1)}\}_{1\le j\le m_{i1}}$ are the sets of samples at client $i$ associated with labels $0$ and $1$, respectively, and $\phi$ is the binary logistic loss \cite{hastie2009elements} 
%(see Section~4.4.1 in \cite{hastie2009elements})
\begin{equation}\label{def:logistic-loss}
\phi(w;(x,y))=-y w^Tx +\log (1+e^{w^Tx}),\quad y\in\{0,1\}.
\end{equation}
Then, both the objective and the constraints in \cref{NP-class} are convex. We consider three real-world datasets, namely `breast-cancer-wisc', `adult-a', and `monks-1', from the UCI repository\footnote{see \url{https://archive.ics.uci.edu/datasets}} and described in \cref{apx:dataset-desc}. For each dataset, we perform the Neyman-Pearson classification that minimizes the loss of classification for class 0 (majority) while ensuring that the loss for class 1 (minority) is less than a threshold $r_i=0.2$. To simulate the FL setting, we divide each dataset into $n$ folds, mimicking local clients, each holding the same amount of data with equal ratios of the two classes.

We apply \cref{alg:g-admm-cvx} and cProx-AL (\cref{alg:c-prox-AL}) to find a $(10^{-3},10^{-3})$-optimal solution of \cref{NP-class}. We run 10 trials of \cref{alg:g-admm-cvx} and cProx-AL. For each run, both algorithms have the same initial point $w^0$, randomly chosen from the unit Euclidean sphere. We set the other parameters for \cref{alg:g-admm-cvx} and cProx-AL as $\mu^0_i=(0,\ldots,0)^T\ \forall 0\le i\le n$, $\bar{s}=0.001$ and $\beta=300$. We also set $\rho_i=0.01\ \forall 1\le i\le n$ for \cref{alg:admm-cvx-1}.

%The computational results are presented in \cref{table:FL-np}. In detail, the first two columns of \cref{table:FL-np} present the names of the datasets and the number of clients. In the third column, we present the average objective values (loss for class 0) achieved by both algorithms, and the relative difference between the objective values achieved by these two algorithms among 10 random trials. In the last column, we present the feasibility (loss for class 1) achieved by both algorithms, which include the mean and maximum loss for class 1 among all local clients. The respective standard deviations among 10 random trials are listed in parentheses. 

Comparing the objective value and feasibility of solutions achieved by \cref{alg:g-admm-cvx} and cProx-AL in \cref{table:FL-np}, we see that both algorithms can yield solutions of similar quality. Given the small standard deviations, we observe that the convergence behavior of \cref{alg:g-admm-cvx} remains stable across 10 trial runs. These observations demonstrate the ability of \cref{alg:g-admm-cvx} to reliably solve \cref{NP-class} in the FL setting without compromising solution quality. %\cref{fig:feas_progression} shows the convergence behavior of local objective and feasibility for local constraints over the outer iterations of \cref{alg:g-admm-cvx}. 
From \cref{fig:feas_progression}, we observe that \cref{alg:g-admm-cvx} consistently achieves feasibility for all local constraints while also minimizing all the local objectives.

\subsection{Classification with fairness constraints}\label{sec:fair-class}

In this subsection, we consider fairness-aware learning with global and local fairness constraints:
\begin{equation}\label{fair-class}
    \min_{w} \frac{1}{n}\sum_{i=1}^{n}\frac{1}{m_i}\sum_{j=1}^{m_i}\phi(w;z_j^{(i)})\quad \st \quad -r_i\le\frac{1}{\tilde{m}_i}\sum_{j=1}^{\tilde{m}_i}\phi(w;\tilde{z}_j^{(i)}) - \frac{1}{\hat{m}_i}\sum_{j=1}^{\hat{m}_i}\phi(w;\hat{z}_j^{(i)})\le r_i,\quad 0\le i\le n.    
\end{equation}
%\begin{subequations}\label{fair-class}
%\begin{align}
%\min_{w}&\ \frac{1}{n}\sum_{i=1}^{n}\frac{1}{m_i}\sum_{j=1}^{m_i}\phi(w;(x_{j}^{(i)},y_j^{(i)}))\\
%\st&\ -r_i\le \frac{1}{\tilde{m}_i}\sum_{j=1}^{\tilde{m}_i}\phi(w;(\tilde{x}_{j}^{(i)},\tilde{y}_{j}^{(i)})) - \frac{1}{\hat{m}_i}\sum_{j=1}^{\hat{m}_i}\phi(w;(\hat{x}_{j}^{(i)},\hat{y}_{j}^{(i)}))\le r_i,\quad 0\le i\le n.
%\end{align}
%\end{subequations}
Here, $\{z_j^{(i)}=(x_{j}^{(i)},y_j^{(i)})\in\bR^d\times\{0,1\}: i=0,\ldots,n, j=1,\ldots,m_i\}$ is the training set, where $i$ indexes the central server/local clients. For each $i=0,\ldots,n$, the dataset $\{z_j^{(i)}\}_{1\le j\le m_i}$ is further divided into two subgroups $\{\tilde{z}_{j}^{(i)}\}_{1\le j\le \tilde{m}_{i}}$ and $\{\hat{z}_{j}^{(i)}\}_{1\le j\le \hat{m}_{i}}$ based on certain subgroup attributes. The constraints with $i=1,\ldots,n$ refer to local constraints at client $i$, while the constraints with $i=0$ refer to global constraints at the central server. 

We choose $\phi$ as the binary logistic loss defined in \cref{def:logistic-loss}, leading to nonconvex constraints in \cref{fair-class}. For the real-world dataset, we consider `adult-b'\footnote{This dataset can be found in \url{https://github.com/heyaudace/ml-bias-fairness/tree/master/data/adult}.}: each sample in this dataset has $39$ features and one binary label. To simulate the FL setting, we divide the $22,654$ training samples from the `adult-b' dataset into $n$ folds and distribute them to $n$ local clients. The central server holds the $5,659$ test samples from the `adult-b' dataset. Note that although we have taken both the ``training'' and ``test'' samples from the `adult-b' dataset here, these samples are used to simulate our local samples and central samples, respectively. The focus here is to test optimization performance, not generalization---we do not have a test step, unlike in typical supervised learning. 

%\begin{table}
%\centering
%\caption{Numerical results for problem~(\ref{fair-class}) with only local constraints.}
%\resizebox{\textwidth}{!}{
%\begin{tabular}{c||lll||llllll}
%\hline
%\multirow{3}{*}{$n$} & \multicolumn{3}{c||}{classification loss} &  \multicolumn{6}{c}{loss disparity ($\le0.1$)} \\
%& Algorithm~\ref{alg:g-admm-cvx} & cProx-AL & Algorithm~\ref{alg:admm-cvx-1} & %\multicolumn{2}{c}{Algorithm~\ref{alg:g-admm-cvx}} & \multicolumn{2}{c}{cProx-AL} & %\multicolumn{2}{c}{Algorithm~\ref{alg:admm-cvx-1}} \\
%&  &  &  & mean & max & mean & max & mean & max \\
%\hline
%1 & 0.6052 & 0.6059 & 0.5937 & 0.1095 & 0.1095 & 0.1095 & 0.1095 & 0.1737 & 0.1737 \\
%5 & 0.6089 & 0.6089 & 0.5937 & 0.1022 & 0.1090 & 0.1021 & 0.1089 & 0.1738 & 0.1864 \\
%10 & 0.6104 & 0.6105 & 0.5936 & 0.0987 & 0.1088 & 0.0986 & 0.1086 & 0.1737 & 0.1937 \\
%20 & 0.6112 & 0.6116 & 0.5936 & 0.0970 & 0.1092 & 0.0962 & 0.1082 & 0.1736 & 0.2002 \\
%\hline				
%\hline 
%\end{tabular}
%}
%\label{table:FL-fair}
%\end{table}

We apply \cref{alg:g-admm-cvx} and cProx-AL (\cref{alg:c-prox-AL}) to find a $(10^{-3},10^{-3})$-optimal solution of \cref{fair-class}. %cProx-AL is presented in \cref{alg:c-prox-AL}, where $w^{k+1}$ is obtained by applying L-BFGS method built in {\textsf{scipy.optimize.minimize}} to solve the subproblem. 
We run 10 trials of \cref{alg:g-admm-cvx} and cProx-AL. For each run, both algorithms have the same initial point $w^0$, randomly chosen from the unit Euclidean sphere. We set the other parameters for \cref{alg:g-admm-cvx} and cProx-AL as $\mu^0_i=(0,\ldots,0)^T\ \forall 0\le i\le n$, $\bar{s}=0.001$ and $\beta=10$. We also set $\rho_i=10^8\ \forall 1\le i\le n$ for \cref{alg:admm-cvx-1}.

%The computational results for solving \cref{fair-class} using `adult-b' dataset are presented in \cref{table:FL-fair-g}. In detail, the first column of \cref{table:FL-fair-g} represents the number of clients. In the second column, we present the average objective values achieved by both algorithms, and the relative difference between the objective values achieved by these two algorithms. In the last column, we present the feasibility (loss disparity between two subgroups) achieved by both algorithms, which include the mean and maximum loss disparities among all clients and the central server. The respective standard deviations among 10 random trials are listed in parentheses. 

Comparing the objective value and feasibility of solutions achieved by \cref{alg:g-admm-cvx} and cProx-AL in \cref{table:FL-fair-g} reveals that \cref{alg:g-admm-cvx} and cProx-AL can produce solutions of similar quality. Given the small standard deviations, we observe that the convergence behavior of \cref{alg:g-admm-cvx} remains stable across 10 trial runs. These observations demonstrate the ability of \cref{alg:g-admm-cvx} to reliably solve \cref{fair-class} in the FL setting without compromising solution quality. It also suggests the potential of our algorithm in solving FL problems with nonconvex constraints. %\cref{fig:fair_progression} shows the convergence behavior of local objective and feasibility for local and global constraints in one random trial, over the outer iterations of \cref{alg:g-admm-cvx}.
From \cref{fig:fair_progression}, we see that our proposed method consistently achieves feasibility for all local and global constraints while also minimizing all the local objectives.

\begin{table}
\centering
\caption{Numerical results for \cref{fair-class} using our algorithm vs. using cProx-AL. Inside the parentheses are the respective standard deviations over 10 random trials. For feasibility, we include the mean and maximum loss disparities (absolute difference between losses for two subgroups) among all clients and the central server.}
\smallskip
\resizebox{\textwidth}{!}{
\begin{tabular}{c||ccc||cccc}

\hline
\multirow{3}{*}{$n$} & \multicolumn{3}{c||}{objective value} &  \multicolumn{4}{c}{feasibility (loss disparity ($\le0.1$))} \\
& \cref{alg:g-admm-cvx} & cProx-AL & relative difference & \multicolumn{2}{c}{ \cref{alg:g-admm-cvx}} & \multicolumn{2}{c}{cProx-AL} \\
&  &  & & mean & max & mean & max \\\hline
 1  & 0.37 (9.83e-05) & 0.37 (4.14e-05) & 1.97e-03 (2.53e-04) &  0.10 (1.14e-04) & 0.10 (1.36e-04) & 0.10 (3.69e-06) & 0.10 (5.38e-06) \\
 5  & 0.37 (3.99e-03) & 0.37 (4.05e-03) & 1.86e-03 (4.69e-04) &  0.09 (5.34e-05) & 0.10 (7.51e-05) & 0.09 (3.68e-05) & 0.10 (4.36e-06) \\
 10 & 0.37 (6.39e-03) & 0.37 (6.52e-03) & 2.39e-03 (8.40e-04) &  0.08 (1.68e-04) & 0.10 (2.15e-05) & 0.08 (1.52e-04) & 0.10 (6.56e-06) \\
 20 & 0.38 (9.46e-03) & 0.37 (9.86e-03) & 4.61e-03 (2.43e-03) &  0.08 (9.75e-05) & 0.10 (1.01e-04) & 0.08 (4.90e-05) & 0.10 (6.06e-06) \\
\hline
\end{tabular}
}
\label{table:FL-fair-g}
\end{table}

\begin{figure}[!htbp]
\centering
\includegraphics[width=0.95\linewidth]{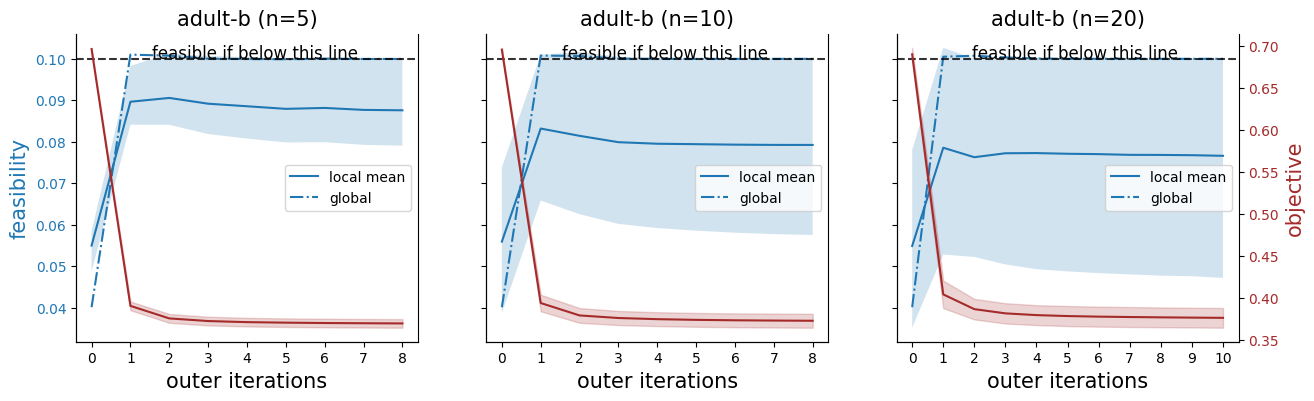}
\caption{Convergence of local objective, local feasibility, and the feasibility for global constraints in one random trial over the outer iterations of \cref{alg:g-admm-cvx}. The solid blue and brown lines indicate the mean local objective and the mean local feasibility over all clients, respectively. The blue and brown areas indicate the cross-client variations of local objectives and local feasibility, respectively. The dashdot blue line indicates the feasibility for global constraints. The dashed black line indicates the feasibility threshold.}
\label{fig:fair_progression}
\end{figure}

\section{Concluding remarks}
In this paper, we propose an FL algorithm for solving general constrained ML problems based on the proximal AL method. We analyze the worst-case iteration complexity of the proposed algorithm, assuming convex objective and convex constraints with locally Lipschitz continuous gradients. Finally, we perform numerical experiments to assess the performance of the proposed algorithm for constrained classification problems, using real-world datasets. The numerical results clearly demonstrate the practical efficacy of our proposed algorithm. Since our work is the first of its kind, there are numerous possible future directions. For example, one could try to extend our FL algorithms to allow partial client participation and stochastic solvers at local clients. In addition, developing FL algorithms for general constrained ML with convergence guarantees in nonconvex settings remains largely open. Lastly, constrained FL with a fixed iteration and communication budget, especially stringent ones, is a very useful but challenging future research topic.

\subsubsection*{Acknowledgments}
C. He is partially supported by the NIH fund R01CA287413 and the UMN Research Computing Seed Grant. L. Peng is partially supported by the CISCO Research fund 1085646 PO USA000EP390223. J. Sun is partially supported by the NIH fund R01CA287413 and the CISCO Research fund 1085646 PO USA000EP390223. The authors acknowledge the Minnesota Supercomputing Institute (MSI) at the University of Minnesota for providing resources that contributed to the research results reported in this article. The content is solely the responsibility of the authors and does not necessarily represent the official views of the National Institutes of Health. 

\bibliographystyle{plain}
\bibliography{main}

\appendix
\section*{Appendix}

In \cref{apx:mi-clean,apx:pf-admm,apx:pf-pal}, we provide proofs of the main results in \cref{sec:admm,sec:d-pal}. \cref{apx:cpal} presents a proximal AL method for centralized constrained optimization. In \cref{apx:ad-nr}, we include some extra numerical results.

%Some extra numerical results are presented in \cref{apx:ad-nr}.

\section{Proofs of \cref{thm:output-pal}, \cref{lem:outloop-bd}, and \cref{thm:outer-complex}(a)}\label{apx:mi-clean}

First, we set up the technical tools necessary for the proof, following \cite{LZ18AL}. With the abbreviations in \cref{abbre}, we define the Lagrangian function associated with \cref{FL-gl-cone,FL-cons-dual} as
\[
l(w,\mu) = \left\{\begin{array}{ll}
f(w) + h(w) + \langle\mu, c(w)\rangle& \text{if } w\in\dom(h)\text{ and }\mu\ge0, \\
-\infty&\text{if } w\in\dom(h)\text{ and }\mu\not\ge0,\\
\infty&\text{if } w\not\in\dom(h),
\end{array}\right.
\]
Then, one can verify that
\begin{equation}\label{def:partial-l}
\partial l(w,\mu)=\left\{\begin{array}{ll}
\begin{pmatrix}
\nabla f(w) +\partial h(w) + \nabla c(w)\mu\\
c(w) - \cN_{\bR^m_+}(\mu)
\end{pmatrix}&\text{if } w\in\dom(h) \text{ and } \mu\ge0,\\
\emptyset&\text{otherwise.}
\end{array}\right.    
\end{equation}
We also define a set-valued operator $\cT$ associated with \cref{FL-gl-cone,FL-cons-dual}:
\begin{equation}
\cT:(w,\mu)\to \{(u,\nu)\in\bR^d\times\bR^m:(u,-\nu)\in\partial l(w,\mu)\},\quad\forall (w,\mu)\in\bR^d\times\bR^m,\label{Tlwmu}
\end{equation}
which is maximally monotone (see, e.g., Section 2 of \cite{rockafellar1976augmented}). Finding a KKT solution of \cref{FL-gl-cone} can be viewed as solving the monotone inclusion problem~\cite{rockafellar1976augmented}:
\begin{equation}\label{MI}
\text{Find}\quad (w,\mu)\in\bR^d\times\bR^m\quad {\text{such that}} \quad (0,0)\in\cT(w,\mu).
\end{equation}
Furthermore, applying the proximal AL method to solve \cref{FL-gl-cone} is equivalent to applying the proximal point algorithm (PPA) to solve this monotone inclusion problem \cite{rockafellar1976augmented,mi2017}, that is, 
\begin{align} \label{eq:proxal_ppa_equiv}
&w^{k+1}=\argmin_{w}\ell_k(w),\quad \mu^{k+1} = [\mu^k + \beta c(w^{k+1})]_+,\quad \Longleftrightarrow \quad (w^{k+1},\mu^{k+1})= \cJ(w^k,\mu^k),\quad \forall k\ge0,
\end{align}
where $(w^0,\mu^0)\in\dom(h)\times \bR^m_+$ and $\cJ$ is the resolvent of $\cT$ defined as
\begin{equation}\label{reso}
\cJ := (\cI+\beta \cT)^{-1}    
\end{equation} 
with $\cI$ being the identity operator. When the $\argmin_{w}\ell_k(w)$ subproblem is only solved up to approximate stationarity, that is, $\mathrm{dist}_{\infty}(0, \partial \ell_k(w^{k+1})) \le \tau_k$ as in our \cref{alg:g-admm-cvx}, the error $\tau_k$ will propagate to the next iterate that we obtain. This is quantitatively captured by the following result. 

% Recently, this result has been revisited and extended by \cite{LZ18AL}. We present the following lemma adapted from \cite{LZ18AL} showing the equivalence between the inexact proximal AL method, \cref{alg:g-admm-cvx}, and the inexact PPA to solve \cref{MI}, where the inexactness is characterized by the tolerance $\tau_k$.

%Its proof can be found in Lemma~5 in \cite{LZ18AL}.

\begin{lemma}[{{\bf adaptation of Lemma~5 of \cite{LZ18AL}}}]\label{lem:tech1}
Suppose that Assumptions~\ref{asp:cvx-lclc} to \ref{asp:out-analysis} hold. Let $\{(w^k,\mu^k)\}_{k\ge0}$ be generated by \cref{alg:g-admm-cvx}. Then for any $k\ge0$, we have
\begin{equation*}
\|(w^{k+1},\mu^{k+1})-\cJ(w^k,\mu^k)\|\le\beta \sqrt{n}\tau_k,
\end{equation*}
where $\cJ$ is the resolvent of $\cT$ defined in \cref{reso}.
\end{lemma}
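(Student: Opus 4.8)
The plan is to avoid any appeal to Lipschitz constants of $c$ and instead to show that the inexact iterate $(w^{k+1},\mu^{k+1})$ is the \emph{exact} resolvent of a suitably perturbed point, after which the nonexpansiveness of $\cJ$ does all the work. First I would extract an explicit residual from the termination condition \cref{cond:apx-stat}: since $\partial\ell_k(w^{k+1})$ is nonempty, closed and convex, the Chebyshev distance is attained, so there is some $d^{k+1}\in\partial\ell_k(w^{k+1})$ with $\|d^{k+1}\|_\infty\le\tau_k$. Using the convex subdifferential calculus (valid under \cref{asp:cvx-lclc}) together with the multiplier update $\mu^{k+1}=[\mu^k+\beta c(w^{k+1})]_+$, I would rewrite this membership as
\begin{equation*}
d^{k+1}\in\nabla f(w^{k+1})+\partial h(w^{k+1})+\nabla c(w^{k+1})\mu^{k+1}+\tfrac1\beta(w^{k+1}-w^k).
\end{equation*}

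The crux is the identity $(w^{k+1},\mu^{k+1})=\cJ(w^k+\beta d^{k+1},\mu^k)$. By the definitions of $\cJ=(\cI+\beta\cT)^{-1}$ and of $\cT$ through $\partial l$ in \cref{def:partial-l}, this is equivalent to checking that $\tfrac1\beta\big((w^k+\beta d^{k+1})-w^{k+1},\,\mu^k-\mu^{k+1}\big)\in\cT(w^{k+1},\mu^{k+1})$, i.e.\ that both components of $\partial l(w^{k+1},\mu^{k+1})$ are matched. The primal component is precisely the rewritten stationarity inclusion above, so it holds by construction. The dual component reduces to verifying $c(w^{k+1})-\tfrac1\beta(\mu^{k+1}-\mu^k)\in\cN_{\bR^m_+}(\mu^{k+1})$, which I would establish componentwise by splitting on the sign of each entry of $\mu^k+\beta c(w^{k+1})$; this is exactly where the projection $[\,\cdot\,]_+$ in the multiplier update encodes the complementarity built into the normal cone of $\bR^m_+$. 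I expect this verification to be the step requiring the most care, as it is the only place where the precise form of the dual update interacts with the set-valued operator $\cT$.

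Finally, because \cref{asp:cvx-lclc} renders $\cT$ maximally monotone, its resolvent $\cJ$ is (firmly) nonexpansive, whence
\begin{equation*}
\|(w^{k+1},\mu^{k+1})-\cJ(w^k,\mu^k)\|=\|\cJ(w^k+\beta d^{k+1},\mu^k)-\cJ(w^k,\mu^k)\|\le\beta\|d^{k+1}\|.
\end{equation*}
Converting the Chebyshev bound $\|d^{k+1}\|_\infty\le\tau_k$ into a Euclidean one and combining the two displays then gives the claimed estimate $\beta\sqrt n\,\tau_k$. It is worth noting that neither \cref{lem:outloop-bd} nor the strong convexity of $\ell_k$ enters this argument: the whole bound rests on the perturbed-resolvent identity and nonexpansiveness, with the standing assumptions used only to guarantee that the subdifferential calculus and the maximal monotonicity of $\cT$ are in force.
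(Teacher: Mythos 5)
Your proof is correct, but it takes a more self-contained route than the paper's. The paper's entire argument is two lines: it converts the Chebyshev residual in \cref{cond:apx-stat} into a Euclidean one and then invokes Lemma~5 of \cite{LZ18AL} as a black box, which already asserts that an iterate with $\mathrm{dist}(0,\partial\ell_k(w^{k+1}))\le\delta$, combined with the update $\mu^{k+1}=[\mu^k+\beta c(w^{k+1})]_+$, lies within $\beta\delta$ of $\cJ(w^k,\mu^k)$. You instead re-derive that cited result from first principles via the classical Rockafellar perturbed-resolvent mechanism: extract $d^{k+1}\in\partial\ell_k(w^{k+1})$ with $\|d^{k+1}\|_\infty\le\tau_k$, verify the exact identity $(w^{k+1},\mu^{k+1})=\cJ(w^k+\beta d^{k+1},\mu^k)$ by matching both components of $\partial l$ in \cref{def:partial-l} (your primal inclusion is precisely the computation in \cref{diff-w}, and your componentwise normal-cone check is the projection-optimality argument behind \cref{diff-mu}), and finish with nonexpansiveness of $\cJ$; both verifications go through. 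What your route buys is transparency: the reader sees exactly why the dual update turns the inexact iterate into an exact resolvent of a perturbed input, and the argument visibly uses nothing beyond maximal monotonicity of $\cT$ --- no Lipschitz constants, no boundedness, and no circular appeal to \cref{lem:outloop-bd}. What the paper's route buys is brevity, at the cost of asking the reader to confirm that its setup matches the hypotheses of the external lemma. One shared blemish: the conversion $\|d^{k+1}\|\le\sqrt{\,\cdot\,}\|d^{k+1}\|_\infty$ naturally yields the factor $\sqrt{d}$ (the ambient dimension of $w$), not $\sqrt{n}$; the paper's own proof writes $\sqrt{n}$ in the same step, so you inherit rather than introduce this imprecision, but it is worth flagging.
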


\begin{proof}
Notice from \cref{cond:apx-stat} that $\mathrm{dist}(0,\partial \ell_k(w^{k+1})) \le \sqrt{n}\mathrm{dist}_\infty(0,\partial \ell_k(w^{k+1})) \le \sqrt{n}\tau_k$. By this and Lemma~5 of \cite{LZ18AL}, the conclusion of this lemma holds. 
\end{proof}

\subsection{Proof of \cref{thm:output-pal}}\label{apx:out-pal}

\begin{proof}[Proof of \cref{thm:output-pal}]
Notice from \cref{abbre,prox-AL-pb} that
\[
\ell_k(w) = f(w) + h(w) + \frac{1}{2\beta}\left(\|[\mu^k+\beta c(w)]_+\|^2-\|\mu^k\|^2\right) + \frac{1}{2\beta}\|w-w^k\|^2.
\]
By this, \cref{def:partial-l}, and the fact that $\mu^{k+1}=[\mu^k+\beta c(w^{k+1})]_+$, one has
\begin{align}
\partial\ell_k(w^{k+1}) - \frac{1}{\beta} (w^{k+1}-w^k) =&\ \nabla f(w^{k+1}) + \partial h(w^{k+1}) + \nabla c(w^{k+1})[\mu^k+\beta c(w^{k+1})]_+\nonumber\\
=&\ \nabla f(w^{k+1}) + \partial h(w^{k+1}) + \nabla c(w^{k+1})\mu^{k+1} = \partial_w l(w^{k+1},\mu^{k+1}).\label{diff-w}
\end{align}
%Using similar arguments as for the second relation of equation~(52) in \cite{LZ18AL}, we obtain that
Notice that 
\[
\mu^{k+1}=[\mu^k+\beta c(w^{k+1})]_+=\argmin_{\mu\in\bR^m_+} \frac{1}{2}\|\mu - (\mu^k+\beta c(w^{k+1}))\|^2.
\]
By the optimality condition of this projection, we have 
\[
0 \in \mu^{k+1}-(\mu^k+\beta c(w^{k+1})) + \cN_{\bR^m_+}(\mu^{k+1}),
\]
which together with \cref{def:partial-l} implies that
\begin{equation}\label{diff-mu}
\frac{1}{\beta}(\mu^{k+1}-\mu^k)\in\partial_\mu l(w^{k+1},\mu^{k+1}).
\end{equation}
In view of this, \cref{cond:apx-stat,stop-alg,diff-w}, we can see that
\begin{align*}
\rmdist_{\infty}(0,\partial_w l(w^{k+1},\mu^{k+1}))&\overset{\cref{diff-w}}{\le} \rmdist_{\infty}(0,\partial\ell_k(w^{k+1})) +  \frac{1}{\beta}\|w^{k+1}-w^k\|_{\infty}\\
&\overset{\cref{cond:apx-stat}}{\le}\tau_k +  \frac{1}{\beta}\|w^{k+1}-w^k\|_{\infty}\overset{\cref{stop-alg}}{\le}\epsilon_1,\\
\rmdist_{\infty}(0,\partial_{\mu} l(w^{k+1},\mu^{k+1}))&\overset{\cref{diff-mu}}{\le} \frac{1}{\beta}\|\mu^{k+1}-\mu^k\|_{\infty}\overset{\cref{stop-alg}}{\le}\epsilon_2.
\end{align*}    
These along with \cref{def:partial-l} and \cref{def:eps-KKT} imply that $(w^{k+1},\mu^{k+1})$ is an $(\epsilon_1,\epsilon_2)$-KKT solution of \cref{FL-gl-cone}, which proves this theorem as desired.
\end{proof}

\subsection{Proof of \cref{lem:outloop-bd}}\label{apx:bd-pal}
Define
\begin{equation}\label{w-star-k}
w_*^k := \argmin_{w} \ell_k(w),\qquad \mu_*^k := [\mu^k+\beta c(w^k_*)]_+,\quad \forall k\ge0,  
\end{equation}
which, by \cref{eq:proxal_ppa_equiv}, is equivalent to 
\begin{equation}\label{Jkstar}
(w^k_*,\mu^k_*) = \cJ(w^k,\mu^k).    
\end{equation}
Recall that $(w^*, \mu^*)$ is assumed to be any pair of optimal solutions to \cref{FL-gl-cone} and \cref{FL-cons-dual}. Toward the proof, we first present an intermediate result, which mostly follows the fact that $\cJ$ is firmly nonexpansive. 
\begin{lemma}\label{lem:tech2}
Suppose that Assumptions \ref{asp:cvx-lclc} to \ref{asp:out-analysis} hold. Let $\{(w^k,\mu^k)\}_{k\ge0}$ be generated by \cref{alg:g-admm-cvx}. Let $(w^k_*,\mu^k_*)$ be defined in \cref{w-star-k} for all $k\ge0$. Then the following relations hold.
\begin{align}
&\|(w^k,\mu^k)-(w^k_*,\mu^k_*)\|^2+\|(w^k_*,\mu^k_*)-(w^*,\mu^*)\|^2\le\|(w^k,\mu^k)-(w^*,\mu^*)\|^2,\quad \forall k\ge0,\label{firm-nonexp}\\
&\|(w^k,\mu^k)-(w^*,\mu^*)\|\le \|(w^0,\mu^0)-(w^*,\mu^*)\|+ \beta\sqrt{n}\sum_{j=0}^{k-1}\tau_j,\quad\forall k\ge0.\label{seq-bd}
\end{align}
\end{lemma}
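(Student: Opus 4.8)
The plan is to derive both inequalities from one structural fact: the resolvent $\cJ$ defined in \cref{reso} is firmly nonexpansive, which is the standard consequence of the maximal monotonicity of $\cT$. The enabling preliminary observation is that $(w^*,\mu^*)$ is a fixed point of $\cJ$. Indeed, as an optimal primal-dual pair, $(w^*,\mu^*)$ solves the monotone inclusion \cref{MI}, i.e.\ $(0,0)\in\cT(w^*,\mu^*)$; this is equivalent to $(w^*,\mu^*)\in(\cI+\beta\cT)(w^*,\mu^*)$, and hence $\cJ(w^*,\mu^*)=(w^*,\mu^*)$.

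To establish \cref{firm-nonexp}, I would apply firm nonexpansiveness of $\cJ$ to the pair of inputs $(w^k,\mu^k)$ and $(w^*,\mu^*)$. Written in the Pythagorean form, firm nonexpansiveness asserts
\[
\|\cJ x-\cJ y\|^2+\|(x-\cJ x)-(y-\cJ y)\|^2\le\|x-y\|^2 .
\]
Substituting $x=(w^k,\mu^k)$ and $y=(w^*,\mu^*)$, and using $\cJ x=(w^k_*,\mu^k_*)$ from \cref{Jkstar} together with $\cJ y=y$ (so that the second argument contributes $y-\cJ y=0$), the inequality reduces exactly to \cref{firm-nonexp}.

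For \cref{seq-bd}, I would set $d_k:=\|(w^k,\mu^k)-(w^*,\mu^*)\|$ and proceed by induction on $k$. Discarding the nonnegative term $\|(w^k,\mu^k)-(w^k_*,\mu^k_*)\|^2$ in \cref{firm-nonexp} gives the contraction-type bound $\|(w^k_*,\mu^k_*)-(w^*,\mu^*)\|\le d_k$ on the exact proximal step. Coupling this with the inexactness estimate from \cref{lem:tech1}, namely $\|(w^{k+1},\mu^{k+1})-(w^k_*,\mu^k_*)\|\le\beta\sqrt{n}\,\tau_k$ (recalling $(w^k_*,\mu^k_*)=\cJ(w^k,\mu^k)$), the triangle inequality yields the one-step recursion
\[
d_{k+1}\le\|(w^{k+1},\mu^{k+1})-(w^k_*,\mu^k_*)\|+\|(w^k_*,\mu^k_*)-(w^*,\mu^*)\|\le\beta\sqrt{n}\,\tau_k+d_k .
\]
Telescoping from index $0$ to $k-1$ then produces $d_k\le d_0+\beta\sqrt{n}\sum_{j=0}^{k-1}\tau_j$, which is precisely \cref{seq-bd}.

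I do not expect a genuine obstacle here; the argument is routine once firm nonexpansiveness is available. The only points deserving care are bookkeeping ones: justifying the fixed-point identity $\cJ(w^*,\mu^*)=(w^*,\mu^*)$ from the inclusion $(0,0)\in\cT(w^*,\mu^*)$, and correctly threading the \emph{inexact} iterate $(w^{k+1},\mu^{k+1})$---rather than the exact step $(w^k_*,\mu^k_*)$---through the recursion, which is exactly what \cref{lem:tech1} is designed to control.
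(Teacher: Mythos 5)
Your proposal is correct and follows essentially the same route as the paper: both establish \cref{firm-nonexp} by applying the firm nonexpansiveness of $\cJ$ to the pair $(w^k,\mu^k)$, $(w^*,\mu^*)$ together with the fixed-point identity $\cJ(w^*,\mu^*)=(w^*,\mu^*)$, and both obtain \cref{seq-bd} via the triangle inequality combining \cref{lem:tech1} with the (nonexpansive) contraction of the exact resolvent step, followed by telescoping. The only cosmetic difference is that you extract the nonexpansiveness bound by dropping the nonnegative term in \cref{firm-nonexp}, whereas the paper invokes nonexpansiveness of $\cJ$ directly; these are the same estimate.
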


\begin{proof}
Since $(w^*,\mu^*)$ is a solution to the monotone inclusion problem~\cref{MI}, we have  
\begin{equation}\label{Jstar}
(0,0)\in\cT(w^*,\mu^*),\quad \text{and} \quad (w^*,\mu^*) = \cJ(w^*,\mu^*).
\end{equation}
Moreover, since $\cT$ is maximally monotone, its resolvent $\cJ$ is firmly nonexpansive (see, e.g., Corollary 23.9 of \cite{mi2017}), that is, $\|\cJ(w, \mu) - \cJ(w', \mu')\|^2 + \|(\cI-\cJ)(w, \mu) - (\cI-\cJ)(w', \mu')\|^2 \le \| (w, \mu) - (w', \mu') \|^2$ for any feasible pairs $(w, \mu)$ and $(w', \mu')$.  Using \cref{Jkstar,Jstar}, we obtain that 
\begin{align*}
&\|(w^k,\mu^k)-(w^k_*,\mu^k_*)\|^2+\|(w^k_*,\mu^k_*)-(w^*,\mu^*)\|^2\\      
&\overset{\cref{Jkstar,Jstar}}{=}\|(\cI-\cJ)(w^k,\mu^k)-(\cI-\cJ)(w^*,\mu^*)\|^2 + \|\cJ(w^k,\mu^k) - \cJ(w^*,\mu^*)\|^2\\ &\le \|(w^k,\mu^k)-(w^*,\mu^*)\|^2.     \quad (\text{firm nonexpansiveness of $\cJ$})
\end{align*} 
Hence, \cref{firm-nonexp} holds as desired. 

Now we prove \cref{seq-bd}. It suffices to consider the case where $k\ge1$. We have 
\begin{align}
\|(w^{k},\mu^{k}) - (w^*,\mu^*)\| \le &\ \|(w^{k},\mu^{k}) - \cJ(w^{k-1},\mu^{k-1})\| + \|\cJ(w^{k-1},\mu^{k-1}) - \cJ(w^*,\mu^*)\|\nonumber\\
\le&\ \beta\sqrt{n} \tau_{k-1} + \|(w^{k-1},\mu^{k-1})-(w^*,\mu^*)\|, \label{inter-upbd-dist}
\end{align}
where we have invoked \cref{lem:tech1} and the nonexpansiveness of $\cJ$ to obtain the final upper bound. Repeatedly applying \cref{inter-upbd-dist} for iterates $(w^{1},\mu^{1})$ through $(w^{k-1},\mu^{k-1})$, we have 
\begin{align}
    \beta\sqrt{n} \tau_{k-1} + \|(w^{k-1},\mu^{k-1})-(w^*,\mu^*)\| \le \beta\sqrt{n}\sum_{j=0}^{k-1}\tau_j + \|(w^0,\mu^0)-(w^*,\mu^*)\|, 
\end{align}
completing the proof. 
\end{proof}

\begin{proof}[Proof of \cref{lem:outloop-bd}]
Notice from \cref{alg:g-admm-cvx} that $\tau_k=\bar{s}/(k+1)^2$ for all $k\ge 0$. Therefore, one has $\sum_{j=0}^\infty \tau_j\le 2\bar{s}$. In view of this, \cref{r0-theta}, and \cref{lem:tech2}, we observe that
\begin{equation}\label{upbd:wk-wstar}
\max\{\|w^k-w^*\|,\|\mu^k-\mu^*\|,\|w^k-w^k_*\|,\|w^k_*-w^*\|\}\le r_0+2\sqrt{n}\bar{s}\beta,\quad \forall k\ge0.   
\end{equation}
where $r_0$ is defined in \cref{r0-theta}, and $\beta$ and $\bar{s}$ are inputs of \cref{alg:g-admm-cvx}. \cref{upbd:wk-wstar} implies that $w^k\in\cQ_1$ for all $k\ge0$, completing the proof.   
\end{proof}

\subsection{Proof of \cref{thm:outer-complex}(a)}\label{apx:cplx-pal}

To prove \cref{thm:outer-complex}(a), we first present a general technical lemma on the convergence rate of an inexact PPA applied to monotone inclusion problems. %Its proof can be found in Lemma~3 in \cite{LZ18AL}.

\begin{lemma}[{{\bf restatement of Lemma~3 of \cite{LZ18AL}}}]\label{lem:inclusion}
Let $\widetilde{\cT}:\bR^{p}\rightrightarrows\bR^{q}$ be a maximally monotone operator and $z^*\in\bR^p$ such that $0\in\widetilde{\cT}(z^*)$. Let $\{z^k\}$ be a sequence generated by an inexact PPA, starting with $z^0$ and obtaining $z^{k+1}$ by approximately evaluating $\widetilde{\cJ}(z^k)$ such that 
\begin{equation*}
\|z^{k+1}-\widetilde{\cJ}(z^k)\|\le e_k    
\end{equation*}
for some $\beta>0$ and $e_k\ge 0$, where $\widetilde{\cJ}:=(\cI+\beta \widetilde{\cT})^{-1}$ and $\cI$ is the identity operator. Then, for any $K\ge 1$, we have
\begin{equation*}
\min_{K\le k\le 2K}\|z^{k+1}-z^k\| \le \frac{\sqrt{2}\left(\|z^0-z^*\| + 2\sum_{k=0}^{2K}e_k\right)}{\sqrt{K+1}}.
\end{equation*}
\end{lemma}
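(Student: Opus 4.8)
The plan is to exploit the firm nonexpansiveness of the resolvent $\widetilde{\cJ}=(\cI+\beta\widetilde{\cT})^{-1}$, which holds because $\widetilde{\cT}$ is maximally monotone (the same property already invoked via Corollary~23.9 of \cite{mi2017}). Since $0\in\widetilde{\cT}(z^*)$, the point $z^*$ is a fixed point of the resolvent, i.e. $z^*=\widetilde{\cJ}(z^*)$. I would introduce the exact proximal iterate $\widehat{z}^{k+1}:=\widetilde{\cJ}(z^k)$ together with the shorthands $d_k:=\|z^k-z^*\|$, $\widehat{s}_k:=\|z^k-\widehat{z}^{k+1}\|$, and $s_k:=\|z^{k+1}-z^k\|$; the inexactness hypothesis then reads $\|z^{k+1}-\widehat{z}^{k+1}\|\le e_k$. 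Because the target quantity is $\min_{K\le k\le 2K}s_k$, the whole argument reduces to bounding $\sum_{k=0}^{2K}s_k^2$ and then applying a pigeonhole step over the window $\{K,\dots,2K\}$.

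First I would control the distances to $z^*$. Nonexpansiveness of $\widetilde{\cJ}$ and the triangle inequality give $d_{k+1}\le\|z^{k+1}-\widehat{z}^{k+1}\|+\|\widehat{z}^{k+1}-z^*\|\le e_k+d_k$, so by telescoping $d_k\le d_0+\sum_{j=0}^{k-1}e_j$, and hence $d_k\le d_0+\sum_{j=0}^{2K}e_j$ uniformly for $k\le 2K+1$. Next, firm nonexpansiveness applied to $z^k$ and $z^*$ yields $\widehat{s}_k^2+\|\widehat{z}^{k+1}-z^*\|^2\le d_k^2$. To turn the middle term into $d_{k+1}$, I would expand $d_{k+1}^2=\|(\widehat{z}^{k+1}-z^*)+(z^{k+1}-\widehat{z}^{k+1})\|^2$ and use Cauchy--Schwarz together with $\|\widehat{z}^{k+1}-z^*\|\le d_k$ to get $\|\widehat{z}^{k+1}-z^*\|^2\ge d_{k+1}^2-2d_ke_k-e_k^2$. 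Substituting this back produces the telescoping step-size inequality
\begin{equation*}
\widehat{s}_k^2\le d_k^2-d_{k+1}^2+2d_ke_k+e_k^2.
\end{equation*}

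Finally I would sum this inequality for $k=0,\dots,2K$: the $d_k^2-d_{k+1}^2$ terms telescope to at most $d_0^2$, and the remaining error contributions are bounded using the uniform estimate for $d_k$ and the fact that $\sum_{k=0}^{2K}e_k^2\le(\sum_{k=0}^{2K}e_k)^2$ (valid since $e_k\ge0$). Converting exact to inexact steps via $s_k^2\le(\widehat{s}_k+e_k)^2\le 2\widehat{s}_k^2+2e_k^2$ and collecting constants gives $\sum_{k=0}^{2K}s_k^2\le 2\big(\|z^0-z^*\|+2\sum_{k=0}^{2K}e_k\big)^2$. Since there are $K+1$ indices in $\{K,\dots,2K\}$, the minimum over that window satisfies $\min_{K\le k\le 2K}s_k^2\le\frac{1}{K+1}\sum_{k=0}^{2K}s_k^2$, and taking square roots yields exactly the claimed bound. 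The main obstacle is the bookkeeping of error terms: the telescoping inequality entangles $\widehat{s}_k$, $d_k$, and $e_k$, so the delicate part is choosing the cross-term estimates (the factor-of-two splitting and $\sum e_k^2\le(\sum e_k)^2$) so that every error term collapses into the single clean factor $\big(\|z^0-z^*\|+2\sum e_k\big)^2$ with the sharp constant $\sqrt{2}$ out front. As this is a restatement of Lemma~3 of \cite{LZ18AL}, one may alternatively cite that result directly.
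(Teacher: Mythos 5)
Your proposal is correct. Note first that the paper itself gives no proof of \cref{lem:inclusion}: it imports the statement verbatim from Lemma~3 of \cite{LZ18AL}, so there is no in-paper argument to compare against; your write-up is a self-contained derivation along the lines of the cited source. The chain of estimates checks out: firm nonexpansiveness with the fixed point $z^*=\widetilde{\cJ}(z^*)$ gives $\widehat{s}_k^2+\|\widehat{z}^{k+1}-z^*\|^2\le d_k^2$; your lower bound $\|\widehat{z}^{k+1}-z^*\|^2\ge d_{k+1}^2-2d_ke_k-e_k^2$ is valid; and writing $E:=\sum_{k=0}^{2K}e_k$, the telescoped sum together with $d_k\le d_0+E$ and $\sum_k e_k^2\le E^2$ yields $\sum_{k=0}^{2K}\widehat{s}_k^2\le d_0^2+2d_0E+3E^2$, hence $\sum_{k=0}^{2K}s_k^2\le 2\sum_k\widehat{s}_k^2+2\sum_k e_k^2\le 2d_0^2+4d_0E+8E^2\le 2(d_0+2E)^2$. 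The pigeonhole step over the $K+1$ indices in $\{K,\dots,2K\}$ then delivers exactly the stated bound with the constant $\sqrt{2}$. The only cosmetic remark is that the restriction of the minimum to the window $\{K,\dots,2K\}$ is not needed for your argument (you bound $\sum_{k=K}^{2K}s_k^2$ by the full sum anyway); it is inherited from the statement being matched.
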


\begin{proof}[Proof of \cref{thm:outer-complex}(a)]
Observe that \cref{alg:g-admm-cvx} terminates when two consecutive iterates $(w^{k+1},\mu^{k+1})$ and $(w^k,\mu^k)$ are close. We use this observation and \cref{lem:tech1,lem:inclusion} to derive the maximum number of outer iterations of \cref{alg:g-admm-cvx}.

Recall that $\sum_{j=0}^\infty\tau_j \le 2\bar{s}$. It follows from \cref{lem:tech1,lem:inclusion} that 
\begin{align*}
\min_{K\le k\le 2K}\frac{1}{\beta}\|(w^{k+1},\mu^{k+1})-(w^k,\mu^k)\|&\le \frac{\sqrt{2}\left(\|(w^0,\mu^0)-(w^*,\mu^*)\|+2\sqrt{n}\beta\sum_{j=0}^\infty\tau_j\right)}{\beta\sqrt{K+1}}\\
&\le \frac{\sqrt{2}\left(\|(w^0,\mu^0)-(w^*,\mu^*)\|+4\sqrt{n}\bar{s}\beta \right)}{\beta\sqrt{K+1}}=\frac{\sqrt{2}\left(r_0+4\sqrt{n}\bar{s}\beta \right)}{\beta\sqrt{K+1}},
\end{align*}
which then implies that
\begin{align*}
&\min_{K\le k\le 2K}\left\{\tau_k+\frac{1}{\beta}\|w^{k+1}-w^k\|_{\infty}\right\}\le \frac{\bar{s}}{(K+1)^2} +  \frac{\sqrt{2}\left(r_0+4\sqrt{n}\bar{s}\beta \right)}{\beta\sqrt{K+1}}\le \left[\bar{s}+\frac{\sqrt{2}\left(r_0+4\sqrt{n}\bar{s}\beta \right)}{\beta}\right]\frac{1}{\sqrt{K+1}},\\ 
&\min_{K\le k\le 2K}\frac{1}{\beta}\|\mu^{k+1}-\mu^k\|_{\infty}\le \frac{\sqrt{2}\left(r_0+4\sqrt{n}\bar{s}\beta \right)}{\beta\sqrt{K+1}}.
\end{align*}
We see from these and the termination criterion in \cref{stop-alg} that the number of outer iterations of \cref{alg:g-admm-cvx} is at most
\begin{equation}\label{def:specific-total-outer}
K_{\eps_1,\eps_2}:=\left[\bar{s}+\frac{\sqrt{2}(r_0+4\sqrt{n}\bar{s}\beta)}{\beta}\right]^2\max\{\epsilon_1^{-2},\epsilon_2^{-2}\}=\cO(\max\{\epsilon_1^{-2},\epsilon_2^{-2}\}).
\end{equation}
Hence, \cref{thm:outer-complex}(a) holds as desired.
\end{proof}

\section{Proofs of the main results in \cref{sec:admm}}\label{apx:pf-admm}

Throughout this section, we let $(\tilde{w}^*,u^*)$ be the optimal solution of \cref{pb:consens}, and $\lambda^*$ be the associated Lagrangian multiplier. Recall from the definition of $\tilde{u}_i^0$ in \cref{alg:admm-cvx-1} and \cref{def:tuit} that
\begin{equation}\label{def:tuitall}
\tilde{u}^t_i=u_i^t + \lambda^t_i/\rho_i,\quad \forall 1\le i\le n, t\ge 0.
\end{equation}

\subsection{Proof of \cref{lem:local-Lip}}\label{apx:lcl-lip}
For notational convenience, write $f_0(w)\equiv0$. Then, by \cref{def:P0k,def:Pik}, one can verify that
\begin{equation}\label{grad-Pik}
\nabla P_{i,k}(w) = \nabla f_i(w) + \nabla c_i(w) [\mu_i^k + \beta c_i(w)]_+ + \frac{1}{(n+1)\beta}(w-w^k),\quad \forall 0\le i\le n.
\end{equation}
\begin{proof}[Proof of \cref{lem:local-Lip}]
Fix an arbitrary $w\in\bR^d$ and a bounded open set $\cU_w$ containing $w$. We suppose that $\nabla f_i$ is $L_{w,1}$-Lipschitz continuous on $\cU_w$, and $\nabla c_i$ is $L_{w,2}$-Lipschitz continuous on $\cU_w$. Also, let $U_{w,1}=\sup_{w\in\cU_w}\|c_i(w)\|$ and $U_{w,2}=\sup_{w\in\cU_w}\|\nabla c_i(w)\|$. By \cref{def:P0k,def:Pik,grad-Pik}) one has for each $0\le i\le n$ and $u,v\in\cU_w$ that
\begin{align}
\|\nabla P_{i,k}(u)-\nabla P_{i,k}(v)\| \overset{\cref{grad-Pik}}{\le}&\ \|\nabla f_i(u)-\nabla f_i(v)\| + \|\nabla c_i(u)-\nabla c_i(v)\|\|[\mu_i^k + \beta c_i(u)]_+\|\nonumber\\
&\ +\|[\mu_i^k + \beta c_i(u)]_+-[\mu_i^k + \beta c_i(v)]_+\|\|\nabla c_i(v)\| + \frac{1}{(n+1)\beta}\|u-v\|\nonumber\\
\le&\ L_{w,1}\|u-v\|+(\|\mu_i^k\|+\beta U_{w,1})L_{w,2}\|u-v\|\nonumber \\
&\ + \beta\|c_i(u)-c_i(v)\|\|\nabla c_i(v)\| + \frac{1}{(n+1)\beta}\|u-v\|\nonumber\\
\le&\ \left[L_{w,1}+(\|\mu_i^k\|+\beta U_{w,1})L_{w,2}+\beta U_{w,2}^2+\frac{1}{(n+1)\beta}\right]\|u-v\|.\nonumber
\end{align}
Therefore, $\nabla P_{i,k}(u)$ is locally Lipschitz continuous on $\bR^d$, and the conclusion holds as desired.
\end{proof}

\subsection{Proof of \cref{thm:output-alg1}}\label{apx:out-admm}
\begin{proof}[Proof of \cref{thm:output-alg1}]
In view of the termination criterion~\cref{sbpb:stop}, it suffices to show that \[
\rmdist_\infty(0,\partial \ell(w^{T+1}))\le \varepsilon_{T+1} + \sum_{i=1}^n \tilde{\varepsilon}_{i,T+1}.
\]
By the definition of $\ell$ in \cref{pb:fs-gl}, one has that	
\begin{equation}\label{partial-Fh}
\partial \ell(w^{T+1}) = \sum_{i=0}^n\nabla P_i(w^{T+1}) + \partial h(w^{T+1}).
\end{equation}
In addition, notice from \cref{sbpb:phi-gt,sbpb:phi-it,def:tuitall} that
\begin{align*}
\partial\varphi_{0,T}(w^{T+1}) = &\ \nabla P_0(w^{T+1}) + \sum_{i=1}^n\rho_i(w^{T+1}-\tilde{u}_i^T) + \partial h(w^{t+1})\\
=&\ \nabla P_0(w^{T+1}) + \sum_{i=1}^n[\rho_i(w^{T+1}-u_i^{T})-\lambda_i^T] + \partial h(w^{T+1}),\\
\nabla\varphi_{i,T}(w^{T+1}) = &\ \nabla P_i(w^{T+1}) + \lambda_i^T,\quad \forall 1\le i\le n.
\end{align*}
Combining these with \cref{partial-Fh}, we obtain that
\[
\partial \ell(w^{T+1}) = \partial\varphi_{0,T}(w^{T+1}) + \sum_{i=1}^n [\nabla\varphi_{i,T}(w^{T+1}) - \rho_i(w^{T+1}-u_i^T)],
\]
which together with $\rmdist_\infty(0,\partial\varphi_{0,T}(w^{T+1}))\le\varepsilon_{T+1}$ (see \cref{alg:admm-cvx-1,vareps-gt}) implies that
\begin{align*}
\rmdist_\infty(0,\partial \ell(w^{T+1}))\le&\ \rmdist_\infty(0,\partial\varphi_{0,T}(w^{T+1})) + \sum_{i=1}^n \|\nabla\varphi_{i,T}(w^{T+1}) - \rho_i(w^{T+1}-u_i^T)\|_\infty\\
\le&\ \varepsilon_{T+1} + \sum_{i=1}^n \tilde{\varepsilon}_{i,T+1}, 
\end{align*}
as desired.
\end{proof}

\subsection{Proof of \cref{lem:uit-wt-bd}}\label{apx:bd-admm}

To prove \cref{lem:uit-wt-bd}, we use convergence analysis techniques for ADMM to show that the distances between iterates $\{u_i^k\}_{1\le i\le n}$ and $w^k$ and the optimal solution $\tilde{w}^*$ are controlled by the distance between the initial iterate $\tilde{w}^0$ and $\tilde{w}^*$. To the best of our knowledge, such boundedness results without assuming global Lipschitz continuity are entirely new in the literature on ADMM.

\begin{proof}[Proof of \cref{lem:uit-wt-bd}]
%Notice from Algorithm~\ref{alg:admm-cvx-1} that $w^0=\tilde{w}^0$ and $u_i^0=\tilde{w}^0$ for all $1\le i\le n$. It is then clear that $w^0\in\cQ$ and $u_i^0\in\cQ$ for all $1\le i\le n$. We next prove that $w^{t+1}\in\cQ$ and $u_i^{t+1}\in\cQ$ for all $1\le i\le n$ when $w^{t+1}$ and $u_i^{t+1}$, $1\le i\le n$, are generated by Algorithm~\ref{alg:admm-cvx-1} for some $t\ge 0$. 

From the optimality conditions and stopping criteria for \cref{sbpb:phi-gt} and \cref{sbpb:phi-it}, there exist $e_i^{t+1}$'s for $0 \le i \le n$ with  $\|e_i^{t+1}\|_{\infty}\le\varepsilon_{t+1}$ and $h^{t+1}\in\partial h(w^{t+1})$ so that: 
\begin{align}
e^{t+1}_0 =&\ \nabla P_0(w^{t+1}) + h^{t+1} + \sum_{i=1}^n \rho_i(w^{t+1}-\tilde{u}_i^{t}) \overset{\cref{def:tuitall}}{=}  \nabla P_0(w^{t+1}) + h^{t+1} + \sum_{i=1}^n[\rho_i(w^{t+1}-u_i^{t})-\lambda_i^{t}]\nonumber\\
& \quad \overset{\cref{sbpb:lam-gt}}{=}\ \nabla P_0(w^{t+1}) + h^{t+1} + \sum_{i=1}^n[\rho_i(u_i^{t+1}-u_i^{t})-\lambda_i^{t+1}] \label{opt-Fg}
\end{align}
and 
\begin{equation}
e^{t+1}_i = \nabla\varphi_{i,t}(u_i^{t+1}) \overset{\cref{sbpb:phi-it}}{=} \nabla P_{i}(u_i^{t+1}) + \lambda_i^{t} + \rho_i(u_i^{t+1} - w^{t+1}) \overset{\cref{sbpb:lam-gt}}{=} \nabla P_i(u_i^{t+1}) + \lambda_i^{t+1} ,\quad \forall 1\le i\le n\label{opt-Fi}. 
\end{equation}
Moreover, since $\tilde{w}^*$ and $u^*$ are the optimal solution of \cref{pb:consens} with the associated Lagrangian multiplier $\lambda^*\in\bR^m$, we have by the optimality condition that there exists $h^*\in\partial h(\tilde{w}^*)$ such that
\begin{equation}\label{opt-starF}
\nabla P_i(u^*_i) + \lambda_i^*=0,\quad \nabla P_0(\tilde{w}^*) + h^* - \sum_{i=1}^n\lambda_{i}^*=0,\quad  u_i^*=\tilde{w}^*,\quad \forall 1\le i\le n.
\end{equation}
Recall that $P_i$, $0\le i\le n$, are strongly convex with the modulus $\sigma>0$, we have 
\begin{align*}
\sigma\|u_i^{t+1}- \tilde{w}^*\|^2
&\le  \langle u_i^{t+1} - \tilde{w}^*, \nabla P_i(u_i^{t+1})-\nabla P_i(\tilde{w}^*)\rangle \quad \quad (\text{strong convexity of $P_i$})\nonumber \\
& = \langle u_i^{t+1} - \tilde{w}^{*}, -\lambda_i^{t+1} + \lambda_i^* + e_i^{t+1}\rangle  \quad \quad  (\text{$\tilde{w}^*=u^*_i$, $\nabla P_i(u^*_i)=\lambda^*_i$, and \cref{opt-Fi}}) \nonumber\\
&\le \langle u_i^{t+1} - \tilde{w}^*, -\lambda_i^{t+1} + \lambda_{i}^*\rangle + \frac{\sigma}{2}\|u_i^{t+1} - \tilde{w}^*\|^2 + \frac{1}{2\sigma}\|e_i^{t+1}\|^2,  \\ 
& \quad \quad \quad \quad  (\text{$\langle a,b\rangle\le t/2\|a\|^2 + 1/(2t)\|b\|^2$ for all $a,b\in\bR^d$ and $t>0$}),     
\end{align*}
and 
\begin{align*}
\sigma\|w^{t+1} - \tilde{w}^*\|^2\le &\  \langle w^{t+1} - \tilde{w}^*, \nabla P_0(w^{t+1}) + h^{t+1}-\nabla P_0(\tilde{w}^*) - h^*\rangle  \quad (\text{strong convexity of $P_0+ h$})\\
=&\ \langle w^{t+1} - \tilde{w}^*, \sum_{i=1}^n[\lambda_i^{t+1}-\lambda_i^* -\rho_i(u_i^{t+1} - u_i^{t})] + e_0^{t+1}\rangle, \quad (\text{\cref{opt-Fg} and \cref{opt-starF}})\\
\le&\ \langle w^{t+1} - \tilde{w}^*, \sum_{i=1}^n[\lambda_i^{t+1}-\lambda^*_i -\rho_i(u_i^{t+1} - u_i^{t})]\rangle + \frac{\sigma}{2}\|w^{t+1} - \tilde{w}^*\|^2 + \frac{1}{2\sigma}\|e_0^{t+1}\|^2, \\
& \quad \quad \quad \quad  (\text{$\langle a,b\rangle\le t/2\|a\|^2 + 1/(2t)\|b\|^2$ for all $a,b\in\bR^d$ and $t>0$}).      
\end{align*}
Summing up these inequalities and rearranging the terms, we obtain that 
\begin{align}
&\frac{\sigma}{2}(\|w^{t+1} - \tilde{w}^*\|^2+\sum_{i=1}^n \|u_i^{t+1} - \tilde{w}^*\|^2)\nonumber\\
&\le \sum_{i=1}^n \langle w^{t+1} - \tilde{w}^*, \lambda_i^{t+1}-\lambda_{i}^*-\rho_i(u_i^{t+1} - u_i^{t})\rangle + \frac{1}{2\sigma}\|e_0^{t+1}\|^2 + \sum_{i=1}^n (\langle u_i^{t+1} - \tilde{w}^*, -\lambda_i^{t+1} + \lambda_{i}^*\rangle + \frac{1}{2\sigma}\|e_i^{t+1}\|^2 ) \nonumber\\
& \le \sum_{i=1}^n \langle w^{t+1} - u_i^{t+1}, \lambda_i^{t+1} - \lambda_{i}^* \rangle + \sum_{i=1}^n\rho_i\langle w^{t+1} - \tilde{w}^*, u_i^{t} - u_i^{t+1} \rangle + \frac{n+1}{2\sigma}\varepsilon_{t+1}^2 \nonumber\\
& \quad \quad \quad \quad  (\text{$\|e_i^{t+1}\|\le\varepsilon_{t+1}$ for all $0\le i\le n$ and $t\ge0$}) \nonumber \\
& \overset{\cref{sbpb:lam-gt}}{=} \sum_{i=1}^n \frac{1}{\rho_i} \langle \lambda_i^{t} - \lambda_i^{t+1}, \lambda_i^{t+1} - \lambda_i^* \rangle + \sum_{i=1}^n\rho_i \langle w^{t+1} - \tilde{w}^*, u_i^{t} - u_i^{t+1} \rangle+\frac{n+1}{2\sigma}\varepsilon_{t+1}^2,\label{upbd-uit}
\end{align}
Notice that the following well-known identities hold:
\begin{align}
&\langle w^{t+1} - \tilde{w}^*, u_i^t-u_i^{t+1}\rangle =  \frac{1}{2}(\|w^{t+1} - u_i^{t+1}\|^2-\|w^{t+1} - u_i^{t}\|^2 + \|\tilde{w}^* - u_i^{t}\|^2-\|\tilde{w}^* - u_i^{t+1}\|^2),\label{para-id}\\
&\langle\lambda_i^t-\lambda_i^{t+1}, \lambda_i^{t+1}-\lambda_i^*\rangle=\frac{1}{2}(\|\lambda_i^* - \lambda_i^{t}\|^2-\|\lambda_i^* - \lambda_i^{t+1}\|^2-\|\lambda_i^t-\lambda_i^{t+1}\|^2).\label{tri-id}
\end{align}
These along with \cref{sbpb:lam-gt,upbd-uit} imply that
\begin{align}
&\frac{\sigma}{2}(\|w^{t+1} - \tilde{w}^*\|^2+\sum_{i=1}^n \|u_i^{t+1} - \tilde{w}^*\|^2)+ \sum_{i=1}^n\frac{\rho_i}{2}\|w^{t+1}-u_i^t\|^2-\frac{n+1}{2\sigma}\varepsilon_{t+1}^2\nonumber\\
&\overset{\cref{upbd-uit})}{\le}\sum_{i=1}^n \frac{1}{\rho_i} \langle \lambda_i^{t} - \lambda_i^{t+1}, \lambda_i^{t+1} - \lambda_i^* \rangle + \sum_{i=1}^n\rho_i \langle w^{t+1} - \tilde{w}^*, u_i^{t} - u_i^{t+1} \rangle + \sum_{i=1}^n\frac{\rho_i}{2}\|w^{t+1}-u_i^t\|^2\nonumber\\
&\overset{\cref{para-id}}{\le}\sum_{i=1}^n\frac{1}{\rho_i}\langle\lambda_i^t-\lambda_i^{t+1}, \lambda_i^{t+1}-\lambda_i^*\rangle + \sum_{i=1}^n\frac{\rho_i}{2} (\|\tilde{w}^* - u_i^{t}\|^2-\|\tilde{w}^* - u_i^{t+1}\|^2+\|w^{t+1} - u_i^{t+1}\|^2)  \nonumber\\
&\overset{\cref{sbpb:lam-gt}}{=}\sum_{i=1}^n\frac{1}{\rho_i}\langle\lambda_i^t-\lambda_i^{t+1}, \lambda_i^{t+1}-\lambda_i^*\rangle + \sum_{i=1}^n\frac{1}{2\rho_i}\|\lambda_i^{t+1}-\lambda_i^t\|^2 +\sum_{i=1}^n\frac{\rho_i}{2} (\|\tilde{w}^* - u_i^{t}\|^2-\|\tilde{w}^* - u_i^{t+1}\|^2) \nonumber\\
&\overset{\cref{tri-id}}{=} \sum_{i=1}^n\frac{1}{2\rho_i}(\|\lambda_i^* - \lambda_i^{t}\|^2-\|\lambda_i^* - \lambda_i^{t+1}\|^2) +  \sum_{i=1}^n\frac{\rho_i}{2}(\|\tilde{w}^* - u_i^{t}\|^2-\|\tilde{w}^* - u_i^{t+1}\|^2) \nonumber\\
&= \sum_{i=1}^n[(\frac{\rho_i}{2}\|\tilde{w}^* - u_i^{t}\|^2 + \frac{1}{2\rho_i} \|\lambda_{i}^* - \lambda_i^{t}\|^2) - (\frac{\rho_i}{2}\|\tilde{w}^* - u_i^{t+1}\|^2+\frac{1}{2\rho_i}\|\lambda_i^* - \lambda_i^{t+1}\|^2)].\label{uit-wt-upbd}
\end{align}
Summing up this inequality over $t=0,\ldots,T$ for any $T \ge 0$, we obtain that
\begin{align}
&\sum_{t=0}^{T}\left[\frac{\sigma}{2}\left(\|w^{t+1}-\tilde{w}^*\|^2+\sum_{i=1}^n\|u_i^{t+1}-\tilde{w}^*\|^2\right) + \sum_{i=1}^n\frac{\rho_i}{2}\|w^{t+1}-u_i^t\|^2-\frac{n+1}{2\sigma}\varepsilon_{t+1}^2\right]\nonumber\\
&\le\sum_{i=1}^n\left[\left(\frac{\rho_i}{2}\|\tilde{w}^* - u_i^{0}\|^2 + \frac{1}{2\rho_i} \|\lambda^*_i - \lambda_i^0\|^2\right) - \left(\frac{\rho_i}{2}\|\tilde{w}^* - u_i^{T+1}\|^2+\frac{1}{2\rho_i}\|\lambda_i^* - \lambda_i^{T+1}\|^2\right)\right].\label{sum-desc}
\end{align}
Recall from \cref{alg:admm-cvx-1} that $\varepsilon_{t+1}=q^t$, $u_i^0=\tilde{w}^0$, and $\lambda_i^0=-\nabla P_i(\tilde{w}^0)$. Notice from \cref{opt-starF} that $\tilde{w}^*=u_i^*$ and $\lambda_i^*=-\nabla P_i(u_i^*)$. By these and \cref{sum-desc}, one can deduce that
\begin{align}
&\frac{\sigma}{2}(\|w^{t+1}-\tilde{w}^*\|^2+\sum_{i=1}^n\|u_i^{t+1} - \tilde{w}^*\|^2)\le \frac{n+1}{2\sigma}\sum_{t=0}^\infty q^{2t} + \sum_{i=1}^n\left(\frac{\rho_i}{2}\|\tilde{w}^* - u_i^0\|^2 + \frac{1}{2\rho_i} \|\lambda_i^* - \lambda_i^0\|^2\right)\nonumber\\
&\le \frac{n+1}{2\sigma(1-q^2)} + \sum_{i=1}^n \left(\frac{\rho_i}{2}\|\tilde{w}^* - u_i^0\|^2 + \frac{1}{2\rho_i} \|\lambda_i^* - \lambda_i^{0}\|^2\right)\nonumber\\
&= \frac{n+1}{2\sigma(1-q^2)} + \sum_{i=1}^n\left(\frac{\rho_i}{2}\|\tilde{w}^* - \tilde{w}^0\|^2 + \frac{1}{2\rho_i} \|\nabla P_i(\tilde{w}^*) - \nabla P_i(\tilde{w}^0)\|^2\right).\nonumber
\end{align}
In view of this and the definition of $\cQ$ in \cref{def:Q_Fh}, we can observe that $w^{t+1}\in\cQ$ and $u_i^{t+1}\in\cQ$ for all $t\ge0$ and $1\le i\le n$. Hence, the conclusion of this lemma holds as desired.
\end{proof}

\subsection{Proof of \cref{thm:complexity-alg1}}\label{apx:cplx-admm}
We first prove an auxiliary recurrence result that will be used later.
\begin{lemma}\label{lem:induct}
Assume that $r,c>0$ and $q\in(0,1)$. Let $\{a_t\}_{t\ge 0}$ be a sequence satisfying
\begin{equation}\label{seq:at}
(1+r) a_{t+1}\le a_t + c q^{2t},\quad \forall t\ge 0.
\end{equation}
Then we have
\begin{equation}\label{linear-at}
a_{t+1}\le \max\left\{q,\frac{1}{1+r}\right\}^{t+1} \left(a_0 + \frac{c}{1-q}\right),\quad \forall t\ge 0.
\end{equation}
\end{lemma}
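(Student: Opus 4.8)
The plan is to convert the one-step contraction into a clean geometric bound by \emph{unrolling} the recurrence, rather than inducting directly on the target inequality \cref{linear-at}. A direct term-by-term induction is tempting but fails: if one assumes $a_t \le \theta^{t}(a_0 + c/(1-q))$ and plugs into the recurrence, an unabsorbed residual $\gamma c q^{2t}$ is left over at each step, so the induction cannot close. The fix is to sum all the additive perturbations at once and recognize that they form a summable geometric tail.

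Write $\gamma := 1/(1+r) \in (0,1)$, so that \cref{seq:at} reads $a_{t+1} \le \gamma a_t + \gamma c q^{2t}$, and set $\theta := \max\{\gamma, q\} \in (0,1)$. First I would establish, by a routine induction on $t$ (using only that $\gamma > 0$, so the inequalities compose correctly), the unrolled estimate
\begin{equation*}
a_{t+1} \le \gamma^{t+1} a_0 + c \sum_{j=0}^{t} \gamma^{t-j+1} q^{2j}, \qquad \forall t \ge 0.
\end{equation*}

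The heart of the argument is bounding the convolution sum. Since $\gamma \le \theta$ and the exponent $t-j+1$ is positive, each summand obeys $\gamma^{t-j+1} q^{2j} \le \theta^{t-j+1} q^{2j} = \theta^{t+1}(q^2/\theta)^j$. The key observation is that $\theta \ge q$ forces $q^2/\theta \le q$, whence $(q^2/\theta)^j \le q^j$ and
\begin{equation*}
\sum_{j=0}^{t} \gamma^{t-j+1} q^{2j} \le \theta^{t+1} \sum_{j=0}^{t} q^{j} \le \frac{\theta^{t+1}}{1-q}.
\end{equation*}
It is precisely this step that produces the factor $1/(1-q)$ rather than a weaker $1/(1-\theta)$: because the perturbations decay at the fast rate $q^{2t}$, pulling out one factor $\theta^t$ leaves a series whose ratio is $q$, not $\theta$.

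Finally, using $a_0 \ge 0$ together with $\gamma \le \theta$ gives $\gamma^{t+1} a_0 \le \theta^{t+1} a_0$, and combining with the sum bound yields $a_{t+1} \le \theta^{t+1}\bigl(a_0 + c/(1-q)\bigr)$, which is \cref{linear-at}. The main obstacle is conceptual rather than computational: one must avoid the losing strategy of a direct induction and instead aggregate the geometric tail of error terms; the only subtlety to flag is that replacing $\gamma$ by $\theta$ in the leading term requires $a_0 \ge 0$, which indeed holds here since in the application $a_t$ is a nonnegative Lyapunov-type quantity.
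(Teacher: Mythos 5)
Your proof is correct and follows essentially the same route as the paper's: unroll the recurrence to $a_{t+1}\le \gamma^{t+1}a_0 + c\sum_{j=0}^{t}\gamma^{t+1-j}q^{2j}$, pull a factor $\theta^{t+1}$ out of the convolution sum so that the remaining series has ratio $q$ (your $(q^2/\theta)^j\le q^j$ is algebraically the same as the paper's split $q^{2j}=q^j\cdot q^j$), and sum to get the $1/(1-q)$ factor. Your explicit flag that the final step $\gamma^{t+1}a_0\le\theta^{t+1}a_0$ needs $a_0\ge 0$ is a fair point the paper leaves implicit (it holds in the application since $a_t=S_t\ge 0$).
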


\begin{proof}
It follows \cref{seq:at} that
\begin{align*}
a_{t+1}&\le \frac{1}{1+r}a_t + \frac{1}{1+r} cq^{2t}\le \frac{1}{(1+r)^2}a_{t-1} + \frac{cq^{2(t-1)}}{(1+r)^2} + \frac{cq^{2t}}{1+r} \\
&\le\cdots\le \frac{1}{(1+r)^{t+1}}a_0 + \sum_{i=0}^t \frac{c q^{2i}}{(1+r)^{t+1-i}} = \frac{1}{(1+r)^{t+1}}a_0 + c\sum_{i=0}^t \frac{q^{i}}{(1+r)^{t+1-i}} q^{i}\\
&\le \frac{1}{(1+r)^{t+1}}a_0 + c\max\left\{q,\frac{1}{1+r}\right\}^{t+1}\sum_{i=0}^tq^{i}\\
&\quad\quad\quad\quad\quad (q^i\le \max\{q,1/(1+r)\}^i\text{ and } 1/(1+r)^{t+1-i}\le \max\{q,1/(1+r)\}^{t+1-i})\\
&\le \frac{1}{(1+r)^{t+1}}a_0 + \frac{c}{1-q}\max\left\{q,\frac{1}{1+r}\right\}^{t+1}\\
&\le \max\left\{q,\frac{1}{1+r}\right\}^{t+1} \left(a_0 + \frac{c}{1-q}\right).
\end{align*}
Hence, \cref{linear-at} holds as desired.
\end{proof}

The following lemma proves the Lipschitz continuity of $\nabla P_i$ on $\cQ$.

\begin{lemma}\label{lem:Lip-gradF}
Let $\cQ$ be defined in \cref{def:Q_Fh}. Then there exists some $L_{\nabla P}>0$ such that
\begin{equation}\label{Lip-gradF}
\|\nabla P_i(u)-\nabla P_i(v)\| \le L_{\nabla P}\|u-v\|,\quad \forall u,v\in\cQ, 0\le i\le n.
\end{equation}
\end{lemma}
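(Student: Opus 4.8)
The plan is to exploit compactness of $\cQ$ to upgrade the \emph{local} Lipschitz continuity of each $\nabla P_i$ (posited as property~1 of the model problem \cref{pb:fs-gl}, the same property established for the original merit functions in \cref{lem:local-Lip}) into \emph{uniform} Lipschitz continuity on $\cQ$. First I would note that $\cQ$, as defined in \cref{def:Q_Fh}, is a closed Euclidean ball, hence a compact (and convex) subset of $\bR^d$; this is also why bounding $\nabla P_i$ on $\cQ$ suffices downstream, since \cref{lem:uit-wt-bd} places all iterates of \cref{alg:admm-cvx-1} inside $\cQ$. Because there are only finitely many indices $0\le i\le n$, it is enough to produce, for each fixed $i$, a constant $L_i$ with $\nabla P_i$ being $L_i$-Lipschitz on $\cQ$, and then set $L_{\nabla P}:=\max_{0\le i\le n}L_i$.

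Fix $i$. Local Lipschitz continuity gives, for each $w\in\cQ$, an open neighborhood $\cU_w$ and a constant $L_w>0$ on which $\nabla P_i$ is $L_w$-Lipschitz. The family $\{\cU_w\}_{w\in\cQ}$ is an open cover of the compact set $\cQ$, so it admits a finite subcover $\cU_{w_1},\ldots,\cU_{w_p}$; put $\bar L_i:=\max_{1\le j\le p}L_{w_j}$. Two complementary ingredients then merge the local constants into a global one. The first is the Lebesgue number lemma applied to this finite subcover of the compact metric space $\cQ$: there is $\delta_i>0$ such that any pair $u,v\in\cQ$ with $\|u-v\|<\delta_i$ lies in a common $\cU_{w_j}$, whence $\|\nabla P_i(u)-\nabla P_i(v)\|\le\bar L_i\|u-v\|$. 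The second is that $\nabla P_i$, being continuous on the compact set $\cQ$, is bounded there, say $\|\nabla P_i(w)\|\le M_i$ for all $w\in\cQ$; hence for $u,v\in\cQ$ with $\|u-v\|\ge\delta_i$ the crude estimate $\|\nabla P_i(u)-\nabla P_i(v)\|\le 2M_i\le(2M_i/\delta_i)\|u-v\|$ holds. Taking $L_i:=\max\{\bar L_i,\,2M_i/\delta_i\}$ covers both regimes, giving \cref{Lip-gradF} for index $i$.

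The only delicate point, and what I would flag as the main obstacle, is exactly this passage from local to uniform control: local Lipschitz continuity by itself bounds only points sharing a common neighborhood, and the overlapping neighborhoods of the subcover do \emph{not} automatically yield one constant valid for arbitrarily separated points of $\cQ$. Compactness resolves this via the split above (Lebesgue number for nearby points, boundedness of the continuous gradient for distant points). An equivalent but slicker route, which I would mention as an alternative, is a compactness-by-contradiction argument: were no uniform constant to exist, one could extract sequences $u_t,v_t\in\cQ$ with $u_t\ne v_t$ and $\|\nabla P_i(u_t)-\nabla P_i(v_t)\|>t\,\|u_t-v_t\|$, then pass to convergent subsequences $u_t\to\bar u$, $v_t\to\bar v$; the case $\bar u\ne\bar v$ contradicts boundedness of $\nabla P_i$ on $\cQ$, while the case $\bar u=\bar v$ eventually traps $u_t,v_t$ in the single neighborhood $\cU_{\bar u}$ and contradicts its local Lipschitz constant. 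Either route delivers the desired $L_{\nabla P}$.
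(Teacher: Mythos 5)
Your proposal is correct. The paper itself disposes of this lemma in one line, observing that $\cQ$ is convex and compact and citing Lemma~1 of \cite{LM22} for the standard fact that a locally Lipschitz gradient is uniformly Lipschitz on such a set; you have essentially reconstructed (a proof of) that cited fact. The one genuine difference worth noting is that your argument never uses convexity of $\cQ$: you handle nearby pairs via the Lebesgue number of a finite subcover and distant pairs via boundedness of the continuous gradient on the compact set, which works on an arbitrary compact subset of $\bR^d$ at the cost of a possibly larger constant $\max\{\bar L_i,\,2M_i/\delta_i\}$. The convexity-based route underlying the paper's citation instead subdivides the segment joining $u$ and $v$ into pieces shorter than the Lebesgue number (each piece staying in $\cQ$ by convexity and hence in a single neighborhood of the subcover) and telescopes, yielding the cleaner constant $\max_j L_{w_j}$. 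Both arguments are valid here since $\cQ$ is a closed ball; your version is marginally more general, the paper's marginally sharper. Your contradiction-based alternative at the end is also sound.
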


\begin{proof}
Notice from \cref{def:Q_Fh} that the set $\cQ$ is convex and compact. By this and the local Lipschitz continuity of $\nabla P_i$ on $\bR^d$, one can verify that there exists some constant $L_{\nabla P}>0$ such that \cref{Lip-gradF} holds (see also Lemma~1 in \cite{LM22}).
\end{proof}

We introduce a potential function $S_t$ to measure the convergence of \cref{alg:admm-cvx-1}:
\begin{equation}
S_t := \sum_{i=1}^n\left(\frac{\rho_i}{2}\|\tilde{w}^*-u_i^t\|^2+\frac{1}{2\rho_i}\|\lambda_i^*-\lambda_i^t\|^2\right),\quad \forall t\ge 0.\label{def:St}    
\end{equation}
The following lemma gives a recursive result of $S_t$, which will play a key role on establishing the global convergence rate for \cref{alg:admm-cvx-1} in \cref{thm:complexity-alg1}.

\begin{lemma}\label{lem:St-rec}
Suppose that Assumptions~\ref{asp:cvx-lclc} to \ref{asp:out-analysis} hold. Let $\{w^{t+1}\}_{t\ge0}$ and $\{u_i^{t+1}\}_{1\le i\le n,t\ge0}$ be all the iterates generated by \cref{alg:admm-cvx-1}. Then we have 
\begin{equation}\label{F-geo-converge}
S_t \le q_r^t\left[S_0+\frac{1}{1-q}\left(\frac{n+1}{2\sigma}+\sum_{i=1}^n\frac{\sigma}{\rho_i^2+2L_{\nabla P}^2}\right)\right],\quad \forall t\ge 0,
\end{equation}
where $\sigma$ and $L_{\nabla P}$ are given in \cref{Pi-s-convex} and \cref{lem:Lip-gradF}, respectively, $q$ and $\rho_i$, $1\le i\le n$, are inputs of \cref{alg:admm-cvx-1}, and 
\begin{equation}
q_r:=\max\left\{q,\frac{1}{1+r}\right\},\quad r := \min_{1\le i\le n}\left\{\frac{\sigma\rho_i}{\rho_i^2+2L_{\nabla P}^2}\right\}\label{def:rrq}.
\end{equation}
\end{lemma}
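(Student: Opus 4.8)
The plan is to turn the one-step descent inequality \cref{uit-wt-upbd}, already established during the proof of \cref{lem:uit-wt-bd}, into a geometric contraction of the form $(1+r)S_{t+1}\le S_t+cq^{2t}$ and then invoke \cref{lem:induct}. The first observation is that, by the definition \cref{def:St} of $S_t$, the right-hand side of \cref{uit-wt-upbd} is exactly $S_t-S_{t+1}$, and since $\varepsilon_{t+1}=q^t$ the inequality \cref{uit-wt-upbd} can be rewritten as
\[
S_{t+1}+D_{t+1}\le S_t+\frac{n+1}{2\sigma}q^{2t},
\]
where $D_{t+1}:=\frac{\sigma}{2}\bigl(\|w^{t+1}-\tilde{w}^*\|^2+\sum_{i=1}^n\|u_i^{t+1}-\tilde{w}^*\|^2\bigr)+\sum_{i=1}^n\frac{\rho_i}{2}\|w^{t+1}-u_i^t\|^2\ge0$. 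Hence it suffices to prove the key estimate $rS_{t+1}\le D_{t+1}+(\text{error})\,q^{2t}$: adding it to the displayed inequality immediately yields $(1+r)S_{t+1}\le S_t+cq^{2t}$.

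The crux is to bound the \emph{dual} part $\frac{1}{2\rho_i}\|\lambda_i^*-\lambda_i^{t+1}\|^2$ appearing in $S_{t+1}$ by the \emph{primal} distance $\|u_i^{t+1}-\tilde{w}^*\|^2$ up to an $O(q^{2t})$ remainder. Using the optimality relation \cref{opt-Fi} in the form $\lambda_i^{t+1}=e_i^{t+1}-\nabla P_i(u_i^{t+1})$ with $\|e_i^{t+1}\|\le\varepsilon_{t+1}=q^t$, together with $\lambda_i^*=-\nabla P_i(\tilde{w}^*)$ from \cref{opt-starF}, I would write $\lambda_i^{t+1}-\lambda_i^*=e_i^{t+1}-\bigl(\nabla P_i(u_i^{t+1})-\nabla P_i(\tilde{w}^*)\bigr)$. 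Since $u_i^{t+1},\tilde{w}^*\in\cQ$ by \cref{lem:uit-wt-bd}, \cref{lem:Lip-gradF} gives $\|\nabla P_i(u_i^{t+1})-\nabla P_i(\tilde{w}^*)\|\le L_{\nabla P}\|u_i^{t+1}-\tilde{w}^*\|$, so that $\|\lambda_i^{t+1}-\lambda_i^*\|^2\le 2q^{2t}+2L_{\nabla P}^2\|u_i^{t+1}-\tilde{w}^*\|^2$. Substituting into \cref{def:St} yields
\[
S_{t+1}\le\sum_{i=1}^n\frac{\rho_i^2+2L_{\nabla P}^2}{2\rho_i}\|u_i^{t+1}-\tilde{w}^*\|^2+\sum_{i=1}^n\frac{1}{\rho_i}q^{2t}.
\]

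Multiplying by $r$ and invoking the definition $r=\min_{1\le i\le n}\sigma\rho_i/(\rho_i^2+2L_{\nabla P}^2)$ from \cref{def:rrq}, every coefficient satisfies $r\cdot\frac{\rho_i^2+2L_{\nabla P}^2}{2\rho_i}\le\frac{\sigma}{2}$ and $r/\rho_i\le\sigma/(\rho_i^2+2L_{\nabla P}^2)$, whence $rS_{t+1}\le\frac{\sigma}{2}\sum_{i=1}^n\|u_i^{t+1}-\tilde{w}^*\|^2+\sum_{i=1}^n\frac{\sigma}{\rho_i^2+2L_{\nabla P}^2}q^{2t}\le D_{t+1}+\sum_{i=1}^n\frac{\sigma}{\rho_i^2+2L_{\nabla P}^2}q^{2t}$, using $D_{t+1}\ge\frac{\sigma}{2}\sum_{i=1}^n\|u_i^{t+1}-\tilde{w}^*\|^2$. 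Adding this to the first displayed inequality produces $(1+r)S_{t+1}\le S_t+cq^{2t}$ with $c=\frac{n+1}{2\sigma}+\sum_{i=1}^n\frac{\sigma}{\rho_i^2+2L_{\nabla P}^2}$ and $r>0$, so applying \cref{lem:induct} with $a_t=S_t$ gives $S_{t+1}\le q_r^{t+1}\bigl(S_0+\frac{c}{1-q}\bigr)$; reindexing (the $t=0$ case being trivial) yields \cref{F-geo-converge} exactly. The main obstacle is precisely this dual estimate: the dual iterates $\lambda_i^{t+1}$ are not directly governed by the ADMM recursion, and the only quantitative handle on them comes from the stationarity residual $e_i^{t+1}$ and $\nabla P_i$. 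It is the promotion of local Lipschitz continuity to a genuine Lipschitz bound on the compact set $\cQ$ via \cref{lem:Lip-gradF}—itself resting on the boundedness in \cref{lem:uit-wt-bd}—that makes the primal–dual coupling quantitative, and the precise value of $r$ is dictated by matching the resulting coefficient against the $\sigma/2$ furnished by strong convexity.
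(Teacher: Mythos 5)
Your proposal is correct and follows essentially the same route as the paper's proof: both start from the one-step inequality \cref{uit-wt-upbd}, convert the dual gap $\|\lambda_i^*-\lambda_i^{t+1}\|^2$ into a primal bound via \cref{opt-Fi}, \cref{opt-starF}, and \cref{Lip-gradF}, match coefficients against the strong-convexity term to obtain $(1+r)S_{t+1}\le S_t+cq^{2t}$ with the same $r$ and $c$, and finish with \cref{lem:induct}. The only difference is presentational---you isolate the surplus $D_{t+1}$ and prove $rS_{t+1}\le D_{t+1}+O(q^{2t})$, whereas the paper substitutes the lower bound on $\|\tilde{w}^*-u_i^{t+1}\|^2$ directly into the chain of inequalities.
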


\begin{proof}
Recall from \cref{uit-wt-upbd} that 
\begin{align}
S_t = &\sum_{i=1}^n\left(\frac{\rho_i}{2}\|\tilde{w}^* - u_i^t\|^2 + \frac{1}{2\rho_i} \|\lambda_i^* - \lambda_i^t\|^2\right)\nonumber\\
\ge & \sum_{i=1}^n\left(\frac{\rho_i+\sigma}{2}\|\tilde{w}^* - u_i^{t+1}\|^2+\frac{1}{2\rho_i}\|\lambda_i^* - \lambda_i^{t+1}\|^2+\frac{\rho_i}{2}\|w^{t+1}-u_i^t\|^2\right)+\frac{\sigma}{2}\|w^{t+1} - \tilde{w}^*\|^2-\frac{n+1}{2\sigma}\varepsilon_{t+1}^2\nonumber\\
\ge & \sum_{i=1}^n\left(\frac{\rho_i+\sigma}{2}\|\tilde{w}^* - u_i^{t+1}\|^2+\frac{1}{2\rho_i}\|\lambda_i^* - \lambda_i^{t+1}\|^2\right)-\frac{n+1}{2\sigma}\varepsilon_{t+1}^2 .\label{rec-wlam-inner}
\end{align}
Also, notice from \cref{opt-Fi,opt-starF,Lip-gradF} that
\[
\|\lambda_i^* - \lambda_i^{t+1}\|^2\overset{\cref{opt-Fi,opt-starF}}{\le}(\|\nabla P_i(\tilde{w}^*)-\nabla P_i(u_i^{t+1})\| + \|e_i^{t+1}\|)^2\overset{\cref{Lip-gradF}}{\le} 2L_{\nabla P}^2\|\tilde{w}^{*}-u_i^{t+1}\|^2 + 2\varepsilon_{t+1}^2,
\]
which implies that
\begin{equation}\label{upbdwu-10086}
\|\tilde{w}^*-u_i^{t+1}\|^2\ge \frac{2\rho_i}{\rho_i^2+2L_{\nabla P}^2}\left(\frac{\rho_i}{2}\|\tilde{w}^*-u_i^{t+1}\|^2+\frac{1}{2\rho_i}\|\lambda_i^*-\lambda_i^{t+1}\|^2\right) - \frac{2\varepsilon^2_{t+1}}{\rho_i^2+2L_{\nabla P}^2}.
\end{equation}
Plugging this into \cref{rec-wlam-inner}, we have 
\begin{align*}
S_t 
&\overset{\cref{upbdwu-10086}}{\ge}\sum_{i=1}^n\left(1+\frac{\sigma\rho_i}{\rho_i^2+2L_{\nabla P}^2}\right)\left(\frac{\rho_i}{2}\|\tilde{w}^*-u_i^{t+1}\|^2+\frac{1}{2\rho_i}\|\lambda_i^*-\lambda_i^{t+1}\|^2\right) - \frac{n+1}{2\sigma}q^{2t} - \sum_{i=1}^n\frac{\sigma}{\rho_i^2+2L_{\nabla P}^2} q^{2t}  \\
& =  (1+r)S_{t+1} - \left(\frac{n+1}{2\sigma}+\sum_{i=1}^n\frac{\sigma}{\rho_i^2+2L_{\nabla P}^2}\right)q^{2t}\quad\quad\quad\quad (r := \min_{1 \le i \le n} \sigma\rho_i/(\rho_i^2+2L_{\nabla P}^2)).
\end{align*} 
When $t=0$, \cref{F-geo-converge} holds clearly. When $t\ge 1$, by the above inequality, \cref{def:rrq}, and \cref{lem:induct} with $(a_t,c)=(S_t,\frac{n+1}{2\sigma}+\sum_{i=1}^n\frac{\sigma}{\rho_i^2+2L_{\nabla P}^2})$, we obtain that
\begin{align*}
S_t\le&\ \max\left\{q,\frac{1}{1+r}\right\}^t\left[S_0+\frac{1}{1-q}\left(\frac{n+1}{2\sigma}+\sum_{i=1}^n\frac{\sigma}{\rho_i^2+2L_{\nabla P}^2}\right)\right]\\
=&\ q_r^t\left[S_0+\frac{1}{1-q}\left(\frac{n+1}{2\sigma}+\sum_{i=1}^n\frac{\sigma}{\rho_i^2+2L_{\nabla P}^2}\right)\right].
\end{align*}
Therefore, the conclusion of this lemma is true as desired.
\end{proof}

\begin{proof}[Proof of \cref{thm:complexity-alg1}]
Notice that \cref{alg:admm-cvx-1} terminates when $\varepsilon_{t+1}+\sum_{i=1}^n\tilde{\varepsilon}_{i,t+1}$ is small. Next, we show that this quantity is bounded by $S_t$ defined in \cref{def:St} plus other small quantities, and then use \cref{lem:St-rec} to bound the maximum number of iterations of \cref{alg:admm-cvx-1}.

By \cref{vareps-gt}, and the fact that $\|\nabla\varphi_{i,t}(u_i^{t+1})\|_{\infty}\le\varepsilon_{t+1}$ (see \cref{alg:admm-cvx-1}), one can obtain that
\begin{align}
\varepsilon_{t+1}+\sum_{i=1}^n\tilde{\varepsilon}_{i,t+1} & \overset{\cref{vareps-gt}}{=} \varepsilon_{t+1}+ \sum_{i=1}^n\|[\nabla\varphi_{i,t}(w^{t+1})-\rho_i(w^{t+1}-u_i^t)]\|_{\infty}\nonumber\\ 
&\le\varepsilon_{t+1}+\sum_{i=1}^n\|\nabla\varphi_{i,t}(u_i^{t+1})\|_{\infty}+\sum_{i=1}^n\|\nabla\varphi_{i,t}(w^{t+1})-\nabla\varphi_{i,t}(u_i^{t+1})\|+ \sum_{i=1}^n\rho_i\|w^{t+1}-u_i^t\|\nonumber\\
&\quad\quad\quad\quad\quad\quad (\text{$\|u\|_\infty\le\|u\|$ for all $u\in\bR^d$ and the triangle inequality})\nonumber\\
&\le (n+1)\varepsilon_{t+1} + \sum_{i=1}^n (L_{\nabla P}+\rho_i)\|w^{t+1}-u_i^{t+1}\| + \sum_{i=1}^n\rho_i\|w^{t+1}-u_i^t\|,\label{upbd-ter-alg1}
\end{align}
where the second inequality follows from
\begin{align*}
\|\nabla\varphi_{i,t}(w^{t+1})-\nabla\varphi_{i,t}(u_i^{t+1})\|\overset{\cref{sbpb:phi-it}}{\le}&\|\nabla P_i(w^{t+1})-\nabla P_i(u_i^{t+1})\| + \rho_i\|w^{t+1}-u_i^{t+1}\|\\ 
\overset{\cref{Lip-gradF}}{\le}&\left(L_{\nabla P}+\rho_i\right)\|w^{t+1}-u_i^{t+1}\|,\qquad \forall 1\le i\le n.
\end{align*}

Next, we derive upper bounds for $\|w^{t+1}-u_i^{t+1}\|$ and $\rho_i\|w^{t+1}-u_i^t\|$, respectively. First, by \cref{uit-wt-upbd,F-geo-converge,def:St}, we have 
\begin{align}
&\frac{\sigma}{4}\|w^{t+1}-u_i^{t+1}\|^2\le \frac{\sigma}{2}\|w^{t+1}-\tilde{w}^*\|^2	+ \frac{\sigma}{2}\|u_i^{t+1}-\tilde{w}^*\|^2\nonumber\\
&\overset{\cref{uit-wt-upbd}}{\le} \sum_{i=1}^n(\frac{\rho_i}{2}\|\tilde{w}^*-u_i^t\|^2+\frac{1}{2\rho_i}\|\lambda_i^*-\lambda_i^t\|^2) + \frac{n+1}{2\sigma}\varepsilon_{t+1}^2\\
&= S_t + \frac{n+1}{2\sigma}q^{2t}\qquad\qquad\qquad\text{(the definition of $S_t$ in \cref{def:St} and $\varepsilon_{t+1}=q^t$ for all $t\ge0$})\nonumber\\
&\overset{\cref{F-geo-converge}}{\le}  q_r^t\left[S_0+\frac{1}{1-q}\left(\frac{n+1}{2\sigma}+\sum_{i=1}^n\frac{\sigma}{\rho_i^2+2L_{\nabla P}^2}\right)\right] + \frac{n+1}{2\sigma}q^{2t}\nonumber\\
&\le \left\{q_r^{t/2}\left[S_0+\frac{1}{1-q}\left(\frac{n+1}{2\sigma}+\sum_{i=1}^n\frac{\sigma}{\rho_i^2+2L_{\nabla P}^2}\right)\right]^{1/2} + \sqrt{\frac{n+1}{2\sigma}}q^{t} \right\}^2\nonumber\\
&\qquad\qquad\qquad\qquad\qquad\qquad\qquad\qquad\qquad\qquad\qquad\qquad  (\text{$a^2+b^2\le (a+b)^2$ for all $a,b\ge0$}).\label{upbd-diff-wu}
\end{align}
Using again \cref{uit-wt-upbd,F-geo-converge,def:St}, we obtain that
\begin{align}
&\frac{1}{2}(\sum_{i=1}^n\rho_i\|w^{t+1}-u^t_i\|)^2\le(\sum_{i=1}^n\rho_i)(\sum_{i=1}^n\frac{\rho_i}{2}\|w^{t+1}-u^t_i\|^2) \quad\quad\quad\quad (\text{Cauchy-Schwarz inequality})\nonumber\\ 
&\overset{\cref{uit-wt-upbd}}{\le}(\sum_{i=1}^n\rho_i) \sum_{i=1}^n(\frac{\rho_i}{2}\|\tilde{w}^*-u_i^t\|^2+\frac{1}{2\rho_i}\|\lambda_i^*-\lambda_i^t\|^2) + (\sum_{i=1}^n\rho_i)\frac{n+1}{2\sigma}\varepsilon_{t+1}^2\nonumber\\
&=(\sum_{i=1}^n\rho_i)\left(S_t+ \frac{n+1}{2\sigma}q^{2t}\right)\qquad\quad\text{(the definition of $S_t$ in \cref{def:St} and $\varepsilon_{t+1}=q^t$ for all $t\ge0$})\nonumber\\
&\overset{\cref{F-geo-converge}}{\le}(\sum_{i=1}^n\rho_i) \left\{\left[S_0+\frac{1}{1-q}\left(\frac{n+1}{2\sigma}+\sum_{i=1}^n\frac{\sigma}{\rho_i^2+2L_{\nabla P}^2}\right)\right]q_r^t+ \frac{n+1}{2\sigma}q^{2t}\right\}\nonumber\\
&\le (\sum_{i=1}^n\rho_i)\left\{q_r^{t/2}\left[S_0+\frac{1}{1-q}\left(\frac{n+1}{2\sigma}+\sum_{i=1}^n\frac{\sigma}{\rho_i^2+2L_{\nabla P}^2}\right)\right]^{1/2} + \sqrt{\frac{n+1}{2\sigma}}q^{t} \right\}^2\nonumber\\
&\qquad\qquad\qquad\qquad\qquad\qquad\qquad\qquad\qquad\qquad\qquad\qquad  (\text{$a^2+b^2\le (a+b)^2$ for all $a,b\ge0$}).\label{upbd-diff-wut-1}
\end{align}
Combining \cref{upbd-ter-alg1,upbd-diff-wu,upbd-diff-wut-1}, we obtain that 
\begin{align}
\varepsilon_{t+1}+ \sum_{i=1}^n\tilde{\varepsilon}_{i,t+1}\le&\ (n+1)q^t + \left(\frac{2}{\sqrt{\sigma}}\sum_{i=1}^n(L_{\nabla P} +\rho_i)+\sqrt{2\sum_{i=1}^n\rho_i}\;\right)\nonumber\\
&\cdot \left\{\left[S_0+\frac{1}{1-q}\left(\frac{n+1}{2\sigma}+\sum_{i=1}^n\frac{\sigma}{\rho_i^2+2L_{\nabla P}^2}\right)\right]^{1/2}q_r^{t/2}+ \sqrt{\frac{n+1}{2\sigma}}q^t\right\}\nonumber\\
\le&\ \ (n+1)q_r^{t/2} + \left(\frac{2}{\sqrt{\sigma}}\sum_{i=1}^n(L_{\nabla P} +\rho_i)+\sqrt{2\sum_{i=1}^n\rho_i}\;\right)\nonumber\\ 
&\cdot \Bigg\{\left[S_0+\frac{1}{1-q}\left(\frac{n+1}{2\sigma}+\sum_{i=1}^n\frac{\sigma}{\rho_i^2+2L_{\nabla P}^2}\right)\right]^{1/2} + \sqrt{\frac{n+1}{2\sigma}}\Bigg\}q_r^{t/2}\nonumber\\
&\qquad\qquad\qquad\qquad\qquad\qquad\qquad\qquad\qquad\qquad\qquad\qquad \text{($q\le q_r\le q_r^{1/2}<1$).}\label{upbd:epst-local}
\end{align}
Recall from \cref{alg:admm-cvx-1,opt-starF} that $(u_i^0,\lambda_i^0)=(\tilde{w}^0,-\nabla P_i(\tilde{w}^0))$ and $\lambda_i^*=-\nabla P_i(\tilde{w}^*)$. By these and \cref{def:St}, one has 
\begin{equation}\label{def:S0}
S_0 = \sum_{i=1}^n\left(\frac{\rho_i}{2}\|\tilde{w}^*- \tilde{w}^0\|^2 + \frac{1}{2\rho_i}\|\nabla P_i(\tilde{w}^*)-\nabla P_i(\tilde{w}^0)\|^2\right).
\end{equation}
For convenience, denote
\begin{align*}
&b := \left(\frac{2}{\sqrt{\sigma}}\sum_{i=1}^n(L_{\nabla P} +\rho_i)+\sqrt{2\sum_{i=1}^n\rho_i}\;\right)&\nonumber\\
&\qquad\qquad\cdot\Bigg\{\Bigg[\sum_{i=1}^n\left(\frac{\rho_i}{2}\|\tilde{w}^*- \tilde{w}^0\|^2 + \frac{1}{2\rho_i}\|\nabla P_i(\tilde{w}^*)-\nabla P_i(\tilde{w}^0)\|^2\right)+\frac{1}{1-q}\left(\frac{n+1}{2\sigma}+\sum_{i=1}^n\frac{\sigma}{\rho_i^2+2L_{\nabla P}^2}\right)\Bigg]^{1/2}\nonumber\\
&\qquad\qquad\qquad\qquad\qquad\qquad\qquad\qquad\qquad\qquad\qquad\qquad\qquad\qquad\qquad\qquad\qquad\qquad\qquad\qquad + \sqrt{\frac{n+1}{2\sigma}}\Bigg\}.
\end{align*}
Using this, \cref{upbd:epst-local,def:S0}, we obtain that
\[
\varepsilon_{t+1}+\sum_{i=1}^n\tilde{\varepsilon}_{i,t+1}\le (n+1+b)q_r^{t/2}.
\]
This along with the termination criterion in \cref{sbpb:stop} implies that the number of iterations of \cref{alg:admm-cvx-1} is bounded above by 
\begin{equation}\label{iter-alg1}
\left\lceil\frac{2\log(\tau/(n+1 + b))}{\log q_r}\right\rceil + 1 = \cO(|\log\tau|).
\end{equation}
Hence, the conclusion of this theorem holds as desired.
\end{proof}

We observe from the proof of \cref{thm:complexity-alg1} that under Assumptions~\ref{asp:cvx-lclc} to \ref{asp:out-analysis}, the number of iterations of \cref{alg:admm-cvx-1} is bounded by the quantity in \cref{iter-alg1}.

\section{Proof of \cref{thm:outer-complex}(b)}\label{apx:pf-pal}
To establish the total inner-iteration complexity, we first show that all the inner iterates produced by \cref{alg:admm-cvx-1} for solving all the subproblems of form \cref{prox-AL-pb-compact} within \cref{alg:g-admm-cvx} are in a compact set (i.e., $\cQ_2$ later), and then estimate the Lipschitz modulus of $\nabla P_{i,k}$ for all $0\le i\le n$ and all $k$ over $\cQ_2$. Then we can bound the inner-iteration complexity based on the size of $\cQ_2$ and the Lipschitz modulus. 

\paragraph{Boundedness of all inner iterates}
 Recall from \cref{r0-theta} that $\cQ_1$ is a compact set. Let 
\begin{equation}\label{upbd:Ugd}
U_{\nabla f}:=\sup_{w\in\cQ_1}\max_{1\le i\le n}\|\nabla f_i(w)\|,\quad U_{\nabla c}:=\sup_{w\in\cQ_1}\max_{0\le i\le n}\|\nabla c_i(w)\|,\quad U_c:=\sup_{w\in\cQ_1}\max_{0\le i\le n}\|c_i(w)\|.
\end{equation}
Since $\nabla f_i$ for all $1\le i\le n$ are locally Lipschitz, they are Lipschitz on $\cQ_1$ with some $L_{\nabla f}>0$. Similarly, $\nabla c_i$ for all $0\le i\le n$ are Lispchitz on $\cQ_1$ with some modulus $L_{\nabla c}>0$. Hence, $\nabla P_{i,k}$ for all $0\le i\le n$ and all $k$ are Lipschitz continuous on $\cQ_1$: 
\begin{align} \label{def:tLip-gradP-Q}
\|\nabla P_{i,k}(u)-\nabla P_{i,k}(v)\| \overset{\cref{grad-Pik}}{\le}&\ \|\nabla f_i(u)-\nabla f_i(v)\| + \|[\mu_i^k+\beta c_i(u)]_+\|\|\nabla c_i(u)-\nabla c_i(v)\|\nonumber\\
&\ +\|[\mu_i^k+\beta c_i(u)]_+-[\mu_i^k+\beta c_i(v)]_+\|\|\nabla c_i(v)\| + \frac{1}{(n+1)\beta}\|u-v\|\nonumber\\
\overset{\cref{def:tupbd-c-gc}}{\le}&\ L_{\nabla f}\|u-v\| + (\|\mu^*_i\| + \|\mu^k_i-\mu^*_i\| + \beta U_c) L_{\nabla c}\|u-v\|\nonumber\\
&\ + \beta U_{\nabla c}^2\|u-v\| + \frac{1}{(n+1)\beta}\|u-v\|\nonumber\\
\overset{\cref{upbd:wk-wstar}}{\le} &\ L_{\nabla P,1} \|u-v\| 
\end{align}
where
\begin{equation}\label{def:tLip-gradP-Q1}
L_{\nabla P,1} := L_{\nabla f}+(\|\mu^*\| + r_0+2\sqrt{n}\bar{s}\beta + \beta U_c) L_{\nabla c}+\beta U_{\nabla c}^2+\frac{1}{(n+1)\beta}.
\end{equation}
The next lemma says that all the inner iterates generated by \cref{alg:admm-cvx-1} stay in a compact set.
\begin{lemma}\label{lem:bd-iters-sbsolver} 
Suppose that Assumptions~\ref{asp:cvx-lclc} to \ref{asp:out-analysis} hold and let $\{w^{k,t+1}\}_{t\ge0}$ and $\{u_i^{k,t+1}\}_{1\le i\le n,t\ge0}$ be all the inner iterates generated by \cref{alg:admm-cvx-1} for solving the subproblems of form \cref{prox-AL-pb-compact} in \cref{alg:g-admm-cvx}. Then it holds that all these iterates stay in a compact set $\cQ_2$, where
\begin{align}
\cQ_2 := \left\{v:\|v-u\|^2\le\frac{(n+1)^3\beta^2}{(1-q^2)}+ (n+1)\beta \sum_{i=1}^n\left[\left(\rho_i+\frac{L_{\nabla P,1}^2}{\rho_i}\right)(r_0 + 2\sqrt{n}\bar{s}\beta)^2\right],u\in\cQ_1\right\},\label{def:tcQ}
\end{align}
and $L_{\nabla P,1}$ and $\cQ_1$ are as defined in \cref{def:tLip-gradP-Q1} and \cref{r0-theta}, respectively.
\end{lemma}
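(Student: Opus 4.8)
The plan is to apply the generic inner-loop bound of \cref{lem:uit-wt-bd} to each outer iteration $k$ in isolation, and then to merge the per-$k$ estimates into the single compact set $\cQ_2$ by feeding in the \emph{uniform} outer-loop bounds already furnished by \cref{lem:outloop-bd}. First I would fix an outer iteration $k$ and observe that the subproblem \cref{prox-AL-pb-compact} solved there is exactly an instance of the model problem \cref{pb:fs-gl} with $P_i=P_{i,k}$, strong-convexity modulus $\sigma=1/[(n+1)\beta]$, and \cref{alg:admm-cvx-1} warm-started at $\tilde{w}^0=w^k$ (since it is invoked with $(\tau,\tilde{w}^0)=(\tau_k,w^k)$), whose unique minimizer is $\tilde{w}^*=w_*^k$ as defined in \cref{w-star-k}. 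Invoking \cref{lem:uit-wt-bd} for this instance then shows that every inner iterate $v\in\{w^{k,t+1}\}_{t\ge0}\cup\{u_i^{k,t+1}\}_{1\le i\le n,\,t\ge0}$ obeys
\[
\|v-w_*^k\|^2\le\frac{n+1}{\sigma^2(1-q^2)}+\frac{1}{\sigma}\sum_{i=1}^n\left(\rho_i\|w_*^k-w^k\|^2+\frac{1}{\rho_i}\|\nabla P_{i,k}(w_*^k)-\nabla P_{i,k}(w^k)\|^2\right).
\]

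Next I would substitute $\sigma=1/[(n+1)\beta]$, which turns the first term into $(n+1)^3\beta^2/(1-q^2)$ and the prefactor $1/\sigma$ into $(n+1)\beta$, leaving only the two quantities $\|w_*^k-w^k\|$ and $\|\nabla P_{i,k}(w_*^k)-\nabla P_{i,k}(w^k)\|$ to be controlled uniformly in $k$. By \cref{lem:outloop-bd}, and more precisely by \cref{upbd:wk-wstar}, both $w^k$ and $w_*^k$ lie in $\cQ_1$ with $\|w_*^k-w^k\|\le r_0+2\sqrt{n}\bar{s}\beta$. Since $\nabla P_{i,k}$ is $L_{\nabla P,1}$-Lipschitz continuous on $\cQ_1$ for all $0\le i\le n$ and all $k$ by \cref{def:tLip-gradP-Q}, I obtain $\|\nabla P_{i,k}(w_*^k)-\nabla P_{i,k}(w^k)\|\le L_{\nabla P,1}\|w_*^k-w^k\|$. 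Plugging these in and factoring out $(r_0+2\sqrt{n}\bar{s}\beta)^2$ yields exactly the radius appearing in \cref{def:tcQ}, namely $\|v-w_*^k\|^2\le (n+1)^3\beta^2/(1-q^2)+(n+1)\beta\sum_{i=1}^n[(\rho_i+L_{\nabla P,1}^2/\rho_i)(r_0+2\sqrt{n}\bar{s}\beta)^2]$. As $w_*^k\in\cQ_1$, this places $v$ in $\cQ_2$; the argument is uniform over $k$ and over all inner iterates, so the claim follows.

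The algebraic substitution of $\sigma$ and the Lipschitz estimate are routine; the one point that genuinely requires care is that the Lipschitz modulus $L_{\nabla P,1}$ may be chosen \emph{independent of $k$}. This is exactly where the uniform boundedness of the outer multipliers matters: \cref{upbd:wk-wstar} bounds $\|\mu^k-\mu^*\|$, hence $\|\mu_i^k\|\le\|\mu^*\|+r_0+2\sqrt{n}\bar{s}\beta$, and without such a bound the Lipschitz constant of $\nabla P_{i,k}$ on $\cQ_1$ would drift with $k$ and no single compact set could contain all inner iterates. Fortunately this uniform constant is precisely what \cref{def:tLip-gradP-Q,def:tLip-gradP-Q1} already establish, so beyond invoking that estimate the proof needs no further work.
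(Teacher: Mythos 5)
Your proposal is correct and follows essentially the same route as the paper's proof: instantiate \cref{lem:uit-wt-bd} with $(P_i,\tilde w^*,\tilde w^0,\sigma)=(P_{i,k},w_*^k,w^k,1/[(n+1)\beta])$, then make the bound uniform in $k$ via $\|w_*^k-w^k\|\le r_0+2\sqrt{n}\bar s\beta$ from \cref{upbd:wk-wstar} and the $k$-independent Lipschitz modulus $L_{\nabla P,1}$ on $\cQ_1$. Your closing remark about why the uniform multiplier bound is what keeps $L_{\nabla P,1}$ from drifting with $k$ is exactly the point the paper's estimate \cref{def:tLip-gradP-Q} is designed to address.
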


\begin{proof}
Recall that for any $k\ge 0$, the subproblem in \cref{prox-AL-pb-compact} has an optimal solution $w^k_*$ (see \cref{w-star-k}), the initial iterate of \cref{alg:admm-cvx-1} for solving \cref{prox-AL-pb-compact} is $w^k$, and $P_{i,k}$, $0\le i\le n$, are strongly convex with modulus $1/[(n+1)\beta]$. By \cref{lem:uit-wt-bd} with $(P_i,\tilde{w}^*,\tilde{w}^0,\sigma)=(P_{i,k},w^k_*,w^k,1/[(n+1)\beta])$, we obtain that $\{w^{k,t+1}\}_{t\ge0}$ and $\{u_i^{k,t+1}\}_{t\ge0,1\le i\le n}$ stay in a set $\widetilde{\cQ}$ defined as
\begin{equation*}
\widetilde{\cQ} := \left\{v:\|v-w^k_*\|^2\le\frac{(n+1)^3\beta^2}{(1-q^2)}+ (n+1)\beta \sum_{i=1}^n\left(\rho_i\|w^k_* - w^k\|^2 + \frac{1}{\rho_i} \|\nabla P_{i,k}(w^k_*) - \nabla P_{i,k}(w^k)\|^2\right)\right\}.
\end{equation*}
Thus, to show the boundedness of all the inner iterates, it suffices to derive upper bounds for $\|w^k_* - w^k\|$ and $\|\nabla P_{i,k}(w^k_*) - \nabla P_{i,k}(w^k)\|$ that are independent of $k$. Since $w^k,w^k_*\in\cQ_1$, we have 
\begin{equation}\label{def:Lip-Pik-cQ1}
\|\nabla P_{i,k}(w^k_*) - \nabla P_{i,k}(w^k)\|\le L_{\nabla P,1}\|w^k_*-w^k\|
\end{equation}
due to \cref{def:tLip-gradP-Q}, where we note that $L_{\nabla P,1}$ is independent of $k$. Moreover, $\|w^k_* - w^k\|\le r_0+ 2\sqrt{n}\bar{s}\beta$ from \cref{upbd:wk-wstar} provides a $k$-independent upper bound for $\|w^k_* - w^k\|$. Thus, we have 
\begin{align}  \label{eq:bounded_Q2_key}
    \sum_{i=1}^n\left(\rho_i\|w^k_* - w^k\|^2 + \frac{1}{\rho_i} \|\nabla P_{i,k}(w^k_*) - \nabla P_{i,k}(w^k)\|^2\right) \le \sum_{i=1}^n\left[\left(\rho_i+\frac{L_{\nabla P,1}^2}{\rho_i}\right)(r_0+2\sqrt{n}\bar{s}\beta)^2\right]. 
\end{align}
Finally, combining the above results and noting that $w^k_*\in\cQ_1$ completes the proof.
\end{proof}

\paragraph{Lipschitz modulus of $\nabla P_{i,k}$ for all $0\le i\le n$ and all $k$ over $\cQ_2$}
Let $L_{\nabla f,2}$ be the Lipschitz constant of $\nabla f_i$, $1\le i\le n$, on $\cQ_2$, and $L_{\nabla c,2}$ be the Lipschitz constant of $\nabla c_i$, $0\le i\le n$, on $\cQ_2$. Also, define 
\begin{equation}
U_{\nabla c,2}:=\sup_{w\in\cQ_2}\max_{0\le i\le n} \|\nabla c_i(w)\|,\qquad U_{c,2}:=\sup_{w\in\cQ_2}\max_{0\le i\le n} \|c_i(w)\|.\label{def:tupbd-c-gc}
\end{equation}
Using similar arguments as for deriving $L_{\nabla P,1}$ in \cref{def:tLip-gradP-Q1}, we can see that $\nabla P_{i,k}$, $0\le i\le n$, are Lipschitz continuous on $\cQ_2$ with modulus $L_{\nabla P,2}$ defined as
\begin{equation}\label{def:tLip-gradP}
L_{\nabla P,2} := L_{\nabla f,2}+(\|\mu^*\| + r_0+2\sqrt{n}\bar{s}\beta + \beta U_{c,2}) L_{\nabla c,2}+\beta U_{\nabla c,2}^2+\frac{1}{(n+1)\beta}.
\end{equation}

\paragraph{Proof of \cref{thm:outer-complex}(b)}
Recall that \cref{thm:complexity-alg1}  has established the number of iterations of \cref{alg:admm-cvx-1} for solving each subproblem of \cref{alg:g-admm-cvx}. In the rest of this proof, we derive an upper bound for the total number of inner iterations for solving all subproblems of \cref{alg:g-admm-cvx}.

We see from \cref{lem:bd-iters-sbsolver} that all iterates generated by \cref{alg:admm-cvx-1} for solving \cref{prox-AL-pb-compact} lie in $\cQ_2$. Also, $\nabla P_{i,k}$, $1\le i\le n$, are $L_{\nabla P,2}$-Lipschitz continuous on $\cQ_2$. Therefore, by \cref{thm:complexity-alg1} with $(\tau,P_i,\sigma,L_{\nabla P},\tilde{w}^*,\tilde{w}^0)=(\tau_k,P_{i,k},1/[(n+1)\beta],L_{\nabla P,2},w^k_*,w^k)$ and the discussion at the end of \cref{apx:cplx-admm}, one has that the number of iterations of \cref{alg:admm-cvx-1} for solving \cref{prox-AL-pb-compact} during the $k$-th outer loop is no more than
\begin{equation}\label{def:Tk}
T_k := \left\lceil\frac{2\log(\tau_k/(n+1+ b_k))}{\log \tilde{q}_r}\right\rceil + 1
\end{equation}
where 
\begin{align*}
&\tilde{q}_r:= \max\left\{q,\frac{1}{1+\tilde{r}}\right\},\quad \tilde{r}:=\min_{1\le i\le n}\left\{\frac{\rho_i}{(n+1)\beta(\rho_i^2+2 L_{\nabla P,2}^2)}\right\},\\
&b_k:= \left(2\sqrt{(n+1)\beta}\sum_{i=1}^n(L_{\nabla P,2} +\rho_i)+\sqrt{2(\sum_{i=1}^n\rho_i)}\right)\\
&\qquad\qquad\cdot\Bigg\{\Bigg[\sum_{i=1}^n\left(\frac{\rho_i}{2}\|w_*^k- w^k\|^2 + \frac{1}{2\rho_i}\|\nabla P_{i,k}(w_*^k)-\nabla P_{i,k}(w^k)\|^2\right) \\
&\qquad\qquad\qquad\qquad+\frac{1}{1-q}\left(\frac{(n+1)^2\beta}{2}+\frac{1}{(n+1)\beta}\sum_{i=1}^n\frac{1}{\rho_i^2+2L_{\nabla P,2}^2}\right)\Bigg]^{1/2}+ (n+1)\sqrt{\frac{\beta}{2}}\Bigg\}.
\end{align*}
Plugging \cref{eq:bounded_Q2_key} into $\bar{b}$, we have that $b_k\le \bar{b}$, where
\begin{align*}
&\bar{b}:= \left(2\sqrt{(n+1)\beta}\sum_{i=1}^n(L_{\nabla P,2} +\rho_i)+\sqrt{2(\sum_{i=1}^n\rho_i)}\right)\\
&\cdot\left\{\left[\sum_{i=1}^n\frac{\rho_i^2+L_{\nabla P,1}^2}{2\rho_i}(r_0+2\sqrt{n}\bar{s}\beta)^2+\frac{1}{1-q}\left(\frac{(n+1)^2\beta}{2}+\frac{1}{(n+1)\beta}\sum_{i=1}^n\frac{1}{\rho_i^2+2L_{\nabla P,2}^2}\right)\right]^{1/2} + (n+1)\sqrt{\frac{\beta}{2}} \right\}, 
\end{align*}
which is independent of $k$. By $b_k\le\Bar{b}$, $\tau_k=\bar{s}/(k+1)^2$, $k\le K_{\eps_1,\eps_2}$ where $K_{\eps_1,\eps_2}$ is the upper bound for the number of outer iterations as defined in \cref{apx:cplx-pal}, and \cref{def:Tk}, one has that
\[
T_k \le \left\lceil\frac{2\log((n+1 + \bar{b})(K_{\eps_1,\eps_2}+1)^2/\bar{s})}{\log (\tilde{q}_r^{-1})}\right\rceil + 1.
\]
Therefore, by $K_{\epsilon_1,\epsilon_2}=\cO(\max\{\epsilon_1^{-2},\epsilon_2^{-2}\})$, one can see that the total number of inner iterations of \cref{alg:g-admm-cvx} is at most 
\begin{align}\label{def:specific-total-inner}
\sum_{k=0}^{K_{\eps_1,\eps_2}} T_k \le (K_{\eps_1,\eps_2}+1)\left(\left\lceil\frac{2\log((n+1 + \bar{b})(K_{\eps_1,\eps_2}+1)^2/\bar{s})}{\log (\tilde{q}_r^{-1})}\right\rceil+1\right)=\widetilde{\cO}(\max\{\epsilon_1^{-2},\epsilon_2^{-2}\}).
\end{align}
Hence, \cref{thm:outer-complex}(b) holds as desired.

\begin{algorithm}[!htbp]
\caption{A centralized proximal AL method for solving \cref{cvx-cnstr-central}}
\label{alg:c-prox-AL}
\noindent\textbf{Input}: tolerances $\epsilon_1,\epsilon_2\in(0,1)$, $w^0\in\dom(h)$, $\mu^0\ge0$, nondecreasing positive $\{\tau_k\}_{k \ge 0}$, and $\beta>0$.
\begin{algorithmic}
\For{$k=0,1,2,\ldots$}
\State Apply a centralized solver to find an approximate solution $w^{k+1}$ to:
\begin{equation*}
    \min_{w} \left\{\ell_k(w)= f(w) + h(w) + \frac{1}{2\beta}\left(\|[\mu^k+\beta c(w)]_+\|^2-\|\mu^k\|^2\right) + \frac{1}{2\beta}\|w-w^k\|^2\right\}    
\end{equation*}
\State such that 
\[
\rmdist_\infty(0,\partial\ell_k(w^{k+1}))\le \tau_k.
\]
\State Update the Lagrangian multiplier:
\[
\mu^{k+1} = [\mu^k + \beta c(w^{k+1})]_+.
\]
\State Output $(w^{k+1},\mu^{k+1})$ and terminate the algorithm if
\[
\|w^{k+1}-w^k\|_\infty+\beta\tau_k\le\beta\epsilon_1,\qquad\|\mu^{k+1}-\mu^k\|_\infty\le\beta\epsilon_2.
\]
\EndFor
\end{algorithmic}
\end{algorithm}

\section{A centralized proximal AL method}\label{apx:cpal}

In this part, we present a centralized proximal AL method (adapted from Algorithm~2 of \cite{LZ18AL}) for solving the convex constrained optimization problem:
\begin{equation}\label{cvx-cnstr-central}
\min_w\ f(w) + h(w)\quad\st\quad c(w)\le0,
\end{equation}
where the function $f:\bR^d\to\bR$ and the mapping $c:\bR^d\to\bR^m$ are continuous differentiable and convex, and $h$ is closed convex.

\section{Extra Numerical Results}\label{apx:ad-nr}
\subsection{Dataset description for Neyman-Pearson classification}\label{apx:dataset-desc}
In this part, we describe the datasets for Neyman-Pearson classification in \cref{sec:npc}. `breast-cancer-wisc', `adult-a', and `monks-1' are three binary classification datasets. We present the total number of samples for class 0 and class 1 and the number of features.
\begin{table}[h]
\caption{Datasets for Neyman-Pearson classification}
\smallskip
\centering
\begin{tabular}{c|c|c}
\hline
dataset& class 0/class 1 & feature dimension \\
\hline
breast-cancer-wisc &  455/240 & 20\\
adult-a & 24715/7840 & 21 \\
monks-1 & 275/275 & 21\\
\hline
\end{tabular}
\label{table:DD-NP}
\end{table}

\begin{table}
\centering
\caption{Numerical results for \cref{pb:lcqp} using our algorithm vs. using cProx-AL. Inside the parentheses are the respective standard deviations over 10 random trials.}
\smallskip
\resizebox{\textwidth}{!}{
\begin{tabular}{ccc||rrr||rr}
\hline
& & &\multicolumn{3}{c||}{objective value} & \multicolumn{2}{c}{feasibility violation} \\
$n$ & $d$ & $\tilde{m}$ & \multicolumn{1}{c}{\cref{alg:g-admm-cvx}}  & \multicolumn{1}{c}{cProx-AL} & \multicolumn{1}{c||}{relative difference} & \multicolumn{1}{c}{\cref{alg:g-admm-cvx}} & \multicolumn{1}{c}{cProx-AL} \\\hline
\multirow{3}{*}{1} & 100 & 1 & -0.23 (4.65e-6) & -0.23 (2.38e-5) & 1.63e-3 (1.01e-4) & 3.33e-4 (1.14e-5) & 5.68e-4 (2.82e-5) \\
& 300 & 3 & -0.37 (2.74e-6) & -0.37 (1.32e-6) & 1.01e-3 (3.51e-5) & 3.52e-4 (1.44e-5) & 4.45e-4 (1.70e-5) \\
& 500 & 5 & -0.30 (1.36e-5) & -0.30 (7.54e-6) & 1.34e-3 (4.62e-5) & 4.38e-4 (5.36e-5) & 3.85e-4 (1.05e-5) \\
\hline
\multirow{3}{*}{5} & 100 & 1 & 9.81 (7.18e-5) & 9.80 (1.46e-5) & 1.09e-3 (7.96e-6) & 1.34e-4 (9.02e-6) & 8.03e-4 (1.57e-6) \\ 
& 300 & 3 &  8.47 (8.12e-5) &  8.45 (1.30e-5) & 1.36e-3 (9.62e-6) & 1.09e-4 (1.31e-5) & 8.28e-4 (1.98e-6) \\
& 500 & 5 & 9.92 (4.43e-5) & 9.91 (4.87e-6) & 8.26e-4 (4.27e-6) & 1.33e-4 (9.68e-6) & 3.73e-4 (2.43e-7) \\
\hline
\multirow{3}{*}{10} & 100 & 1 & 49.40 (9.02e-5) & 49.37 (5.82e-6) & 5.59e-4 (1.67e-5) & 7.31e-5 (7.54e-6) & 5.88e-4 (1.34e-7) \\ 
& 300 & 3 & 41.49 (7.04e-5) & 41.44 (5.48e-6) & 1.14e-3 (1.77e-6) & 8.56e-5 (2.27e-7) & 8.73e-4 (7.26e-6) \\
& 500 & 5 & 41.45 (2.25e-5) & 41.41 (5.30e-6) & 9.39e-4 (4.94e-7) & 9.29e-4 (2.55e-6) & 7.66e-4 (1.37e-7) \\
\hline 
\end{tabular}
}
\label{table:FL-qp}

\end{table}
\subsection{Linear equality constrained quadratic programming}\label{sec:lcqp}
In this subsection, we consider the linear equality constrained quadratic programming:
\begin{equation}\label{pb:lcqp}
\min_{w}\ \sum_{i=1}^n\left(\frac{1}{2}w^TA_i w + b_i^Tw\right)\quad\st\quad C_iw+d_i=0,\quad 0\le i\le n,
\end{equation}
where $A_i\in\bR^{d\times d}$, $1\le i\le n$, are positive semidefinite, $b_i\in\bR^d$, $1\le i\le n$, $C_i\in\bR^{\tilde{m}\times d}$, $0\le i\le n$, and $d_i\in\bR^{\tilde{m}}$, $0\le i\le n$.

For each $(d, n, \tilde{m})$, we randomly generate an instance of \cref{pb:lcqp}. In particular, for each $1\le i\le n$, we first generate a random matrix $A_i$ by letting $A_i=U_iD_iU_i^T$, where $D_i\in\bR^{d\times d}$ is a diagonal matrix. The diagonal entries of $D_i$ are generated randomly from a uniform distribution over $[0.5, 1]$, and $U_i\in\bR^{d\times d}$ is a randomly generated orthogonal matrix. We then randomly generate $C_i$, $0\le i\le n$, with all entries drawn from a normal distribution with mean zero and a standard deviation of $1/\sqrt{d}$. Finally, we generate $b_i$, $1\le i\le n$ and $d_i$, $0\le i\le n$ as random vectors uniformly selected from the unit Euclidean sphere.

We apply \cref{alg:g-admm-cvx} and cProx-AL (\cref{alg:c-prox-AL}) to find a $(10^{-3},10^{-3})$-optimal solution of \cref{pb:lcqp}, and compare their solution quality. In particular, when implementing \cref{alg:g-admm-cvx}, we exactly solve the quadratic subproblems in \cref{sbpb:phi-gt,sbpb:phi-it}. We run 10 trials of \cref{alg:g-admm-cvx} and cProx-AL, where for each run, both algorithms share the same initial point $w^0$, randomly chosen from the unit Euclidean sphere. We set the other parameters for \cref{alg:g-admm-cvx} and cProx-AL as $\mu^0_i=(0,\ldots,0)^T\ \forall 0\le i\le n$, $\bar{s}=0.1$ and $\beta=10$. We also set $\rho_i=1$ for each $1\le i\le n$ for \cref{alg:admm-cvx-1}.

We observe that \cref{alg:g-admm-cvx} and cProx-AL are capable of finding nearly feasible solutions, and achieve similar objective value. In view of the small standard deviations, we observe that the convergence behavior of \cref{alg:g-admm-cvx} remains stable across 10 trial runs.
	
\end{document}